\documentclass[12pt]{article}
\usepackage[utf8]{inputenc}
\usepackage{cite}
\usepackage{amsfonts}
\usepackage{amsmath}
\usepackage{amsthm}
\usepackage{enumitem}
\usepackage{amssymb}
\usepackage{caption}
\usepackage{graphicx}
\usepackage{mathrsfs}
\usepackage{multirow}
\usepackage{xcolor}
\usepackage{setspace}
\usepackage{booktabs}
\usepackage{latexsym, mathdots, array, extarrows}
\usepackage{algpseudocode,float}
\usepackage{lipsum}
\usepackage{pifont}
\usepackage{diagbox}

\usepackage{amsmath,amsfonts,amsthm,amssymb}
\usepackage{bm,dsfont}
\usepackage{booktabs}
\usepackage{graphicx}
\usepackage{array}
\usepackage{xcolor}
\usepackage{sidecap}
\usepackage{listings}
\usepackage{times}
\usepackage{url}
\usepackage{hyperref}
\usepackage{subfigure}
\usepackage{makecell} 

\usepackage{algorithm}      
\usepackage{algpseudocode}  

\usepackage{cleveref}

\usepackage{geometry}
\newtheorem{definition}{Definition}
\newtheorem{theorem}{Theorem}

\newtheorem{lemma}{Lemma}
\newtheorem{assumption}{Assumption}

\newtheorem*{remark}{Remark}

\newcommand{\bS}{\bm{S}}
\newcommand{\bQ}{\bm{Q}}
\newcommand{\bH}{\bm{H}}
\newcommand{\bs}{\bm{s}}
\newcommand{\bx}{\bm{x}}
\newcommand{\bU}{\bm{U}}

\newcommand{\bA}{\bm{A}}
\newcommand{\bB}{\bm{B}}
\newcommand{\bu}{\bm{u}}
\newcommand{\by}{\bm{y}}
\newcommand{\bY}{\bm{Y}}

\newcommand{\bX}{\bm{X}}
\newcommand{\bW}{\bm{W}}
\newcommand{\bV}{\bm{V}}

\newcommand{\blam}{\bm{\lambda}}
\newcommand{\bSig}{\bm{\Sigma}}

\newcommand{\mL}{\mathcal{L}}
\newcommand{\bI}{\bm{I}}
\newcommand{\bP}{\bm{P}}
\newcommand{\bZ}{\bm{Z}}

\newcommand{\bC}{\bm{C}}
\newcommand{\bE}{\bm{E}}
\newcommand{\mW}{\mathcal{W}}
\newcommand{\bJ}{\bm{J}}
\newcommand{\bK}{\bm{K}}

\allowdisplaybreaks[4]

\title{ODELoRA: Training Low-Rank Adaptation by Solving Ordinary Differential Equations
}

\author{Yihang Gao\thanks{Department of Mathematics, National University of Singapore. Email: gaoyh@nus.edu.sg}\and Vincent Y. F.  Tan\thanks{Department of Mathematics and Department of Electrical and Computer Engineering, National University of Singapore. Email: vtan@nus.edu.sg}}

\date{}

\begin{document}

\maketitle

\begin{abstract}

Low-rank adaptation (LoRA) has emerged as a widely adopted parameter-efficient fine-tuning method in deep transfer learning, due to its reduced number of trainable parameters and lower memory requirements enabled by Burer–Monteiro factorization on adaptation matrices.
However, classical LoRA training methods treat the low-rank factor matrices individually and optimize them using standard gradient-based algorithms. Such decoupled optimization schemes are theoretically and empirically suboptimal, as they fail to fully exploit the intrinsic structure of the LoRA parameterization.
In this work, we propose a novel continuous-time optimization dynamic for LoRA factor matrices in the form of an ordinary differential equation (ODE) that emulates the gradient flow of full fine-tuning on the balanced manifold. 
We term this approach ODELoRA.
To faithfully track the trajectories of ODELoRA, we adopt well-established and theoretically grounded time-discretization schemes, including Euler and Runge--Kutta methods.
Our framework provides a unified ODE-based perspective for understanding and designing LoRA training algorithms. We establish linear convergence of the proposed method under strongly convex objectives for certain discretization schemes under mild conditions, and further extend our analysis to the matrix sensing setting. Moreover, we show that ODELoRA achieves stable feature learning, a property that is crucial for training deep neural networks at different scales of problem dimensionality.
Empirical results on matrix sensing tasks confirm the derived linear convergence behavior, and experiments on training physics-informed neural networks further demonstrate the superiority of ODELoRA over existing baselines, especially in the training stability.

\end{abstract}

\section{Introduction}
Among various parameter-efficient fine-tuning methods, low-rank adaptation (LoRA) has gained particular attention due to its simplicity, effectiveness, and broad applicability across modern neural architectures, including large language models~\cite{hu2022lora,dettmers2023qlora,zhang2023adaptive,hayou2024lora+}, multimodal vision-language models~\cite{hanparameter,zanella2024low}, graph neural networks~\cite{li2024adaptergnn,yang2025graphlora,wang2025elora}, and physics-informed neural networks~\cite{wang2025transfer,berman2024colora,cho2023hypernetwork}.
Although full fine-tuning remains a powerful approach, its substantial computational and memory overheads have motivated extensive research into parameter-efficient fine-tuning techniques.
By restricting task-specific updates to low-rank subspaces of weight matrices, LoRA dramatically reduces the number of trainable parameters while maintaining performance competitive with full fine-tuning. This low-rank parameterization significantly reduces memory consumption and computational cost, making LoRA particularly attractive for fine-tuning deep neural networks.

The effectiveness of LoRA is commonly attributed to the observation that task adaptation often resides in a low intrinsic-dimensional subspace of the full parameter space. By parameterizing weight updates as low-rank matrix products via the Burer--Monteiro factorization, LoRA significantly reduces the number of trainable parameters while retaining sufficient expressive capacity for adaptation.
From a theoretical perspective, Jang et al.~\cite{jang2024lora} show that under an overparameterization regime, in which  neural tangent kernel theory implies a linearized model for an extremely wide network, LoRA optimization exhibits no spurious local minima. Thus, standard gradient-based methods can result in convergence to global optima. 
Kim et al.~\cite{kim2025lora} further extend this analysis to more general loss functions satisfying a restricted strong convexity condition, which is considerably milder than the overparameterization assumption.
These works mainly characterize the landscape of LoRA objectives, focusing on the existence and properties of local and global minima, rather than explicitly analyzing the optimization trajectories or convergence behavior. In contrast, Soltanolkotabi et al.~\cite{soltanolkotabi2025implicit} study LoRA fine-tuning in the matrix sensing setting and identify a three-phase training dynamics: an initial subspace alignment phase, during which the column spaces of the learned factors align with the ground truth; a subsequent phase that escapes saddle points; and a final local refinement stage exhibiting fast convergence once the iterates are sufficiently close to the global solution.
Beyond these results, a growing body of work has investigated LoRA from various complementary perspectives, including optimization dynamics~\cite{xu2025understanding,stoger2021small,xiong2024how,liu2025on}, expressiveness and representational capacity~\cite{zeng2024the}, and generalization~\cite{zhu2024asymmetry,kratsios2025sharp}.

However, despite its widespread adoption in practice and growing theoretical understanding, LoRA fine-tuning has almost been exclusively performed using standard optimizers, such as (stochastic) gradient descent or Adam~\cite{kingma2015adam}; these optimizers are applied directly to the factor matrices.
This practice treats the LoRA factors as {\em individual} parameter matrices and ignores the intrinsic multiplicative structure introduced by the low-rank parameterization.
As a consequence, the resulting training dynamics can deviate substantially from those of full fine-tuning, which is widely regarded as the optimal fine-tuning procedure when computational resources are available. This training mismatch often leads to suboptimal empirical performance as noted in~\cite{shuttleworth2025lora,li2025flatlora}.
Moreover, Hayou et al.~\cite{hayou2024lora+} showed that stable feature learning in LoRA requires imbalanced step sizes between the two factor matrices. While effective in theory, this requirement makes LoRA fine-tuning less convenient in practice and highly sensitive to step-size tuning, increasing the risk of instability.

These observations have motivated recent efforts to design optimization methods that explicitly account for the structure of LoRA parameterization and address both stability and mismatch with full fine-tuning. Zhang and Pilanci~\cite{zhang2024riemannian} proposed a Riemannian optimization method that introduces geometry-aware preconditioning on the LoRA factors, leading to improved theoretical stability and empirical performance. 
Wang et al.~\cite{wang2025lorapro} further developed LoRA-Pro, which determines update directions by explicitly mimicking the gradient descent dynamics of full fine-tuning. 
Building on this line of work, Yu et al.~\cite{yu2025altlora} proposed an alternating optimization scheme that avoids solving the Sylvester equation and further enhances stability and computational efficiency.
More LoRA-specific algorithms can be found in~\cite{almansoori2025faster,wang2024lora}.

In this paper, we develop a new class of optimization algorithms for LoRA fine-tuning that closely tracks full fine-tuning dynamics and achieve enhanced stability using balanced manifold constraints.
We first introduce a {\em continuous-time optimization flow} for LoRA factors, termed {\em   ODELoRA}, which is a solution of equality constrained quadratic optimization problem, where LoRA training explicitly mimics the gradient flow of full fine-tuning on a balanced manifold.
To obtain practical algorithms, we discretize ODELoRA using well-established numerical solvers for ordinary differential equations, including Euler and Runge--Kutta methods, with explicit theoretical guarantees on discretization error. In particular, higher-order Runge--Kutta schemes allow the discrete iterates to more accurately track the continuous ODELoRA trajectory, making them especially suitable for careful and stable fine-tuning.
Our main contributions are summarized as follows:
\begin{enumerate}[label=(\arabic*)]
    \item We propose a continuous-time optimization flow for LoRA fine-tuning, termed \emph{ODELoRA}, which faithfully mimics the gradient flow of full fine-tuning and satisfies balanced manifold constraints to mitigate gauge effects.

    \item We design practical LoRA optimization algorithms by discretizing ODELoRA using classical numerical solvers, including Euler and Runge--Kutta methods.

    \item We establish linear convergence guarantees for discretized ODELoRA under strongly convex objectives, and further extend the analysis to the matrix sensing setting. Moreover, we show that ODELoRA  achieves \emph{stable feature learning}, a critical property for scalable and reliable LoRA optimization.

    \item We empirically validate the theoretical results on matrix sensing problems, demonstrating that ODELoRA with Runge--Kutta 4 discretization exhibits superior stability and most closely tracks full fine-tuning.
    We further apply the proposed method to fine-tuning physics-informed neural networks, where ODELoRA consistently outperforms existing baseline optimization methods.
\end{enumerate}

The remainder of the paper is organized as follows. In \Cref{section2}, we review preliminaries on LoRA parameterization and numerical solvers for ordinary differential equations. The proposed ODELoRA algorithms are introduced in \Cref{section3}. Theoretical analyses are presented in \Cref{section4}. Experimental results are reported in \Cref{section5}. Finally, conclusions and directions for future work are discussed in \Cref{section6}.

\section{Preliminaries}
\label{section2}
In this section, we first introduce the notation used throughout the paper. We then present the mathematical formulation of LoRA and review the associated optimization algorithms. Finally, we briefly discuss well-established numerical solvers for ordinary differential equations (ODEs), which motivate the design of the proposed method.

\subsection{Notation}
Throughout the paper, bold lowercase letters (e.g., $\bx$) denote vectors and bold uppercase letters (e.g., $\bA$) represent matrices.
Scalars are represented by regular (non-bold) lowercase letters (e.g., $a$). For continuous-time dynamics, we use $\bW(t)$ (and $\bA(t)$, $\bB(t)$) to denote time-dependent variables, whereas their discretized counterparts are denoted by $\bW_t$ (and $\bA_t$, $\bB_t$). 
For notational simplicity, we abbreviate $\mL(\bW_t)$ as $\mL_t$ and the optimal objective value $\mL(\bW^{\star})$ at the optimal point $\bW^{\star}$ as $\mL^{\star}$.
We use $\sigma_{\max}(\bA)$ and $\sigma_{\min}(\bA)$ to denote the largest and smallest singular value of the given matrix $\bA$, respectively.
We use $\left\|\cdot \right\|$ to denote the Euclidean norm of vectors and matrix spectral norm.

\subsection{Low-Rank Adaptation}
We consider the following optimization problem for model training:
\begin{equation}
\label{eq_strongly_convex}
    \min_{\bW} \mL(\bW),
\end{equation}
where $\mL$ denotes the objective function. 
In low-rank adaptation (LoRA)~\cite{hu2022lora}, the model parameters $\bW_{\text{pt}}$ are fine-tuned using the following Burer--Monteiro factorization:
\begin{equation*}
    \bW_{\text{ft}} = \bW_{\text{pt}} + \bB \bA,
\end{equation*}
where $\bW_{\text{ft}}, \bW_{\text{pt}} \in \mathbb{R}^{m \times n}$ denote the fine-tuned and pre-trained model parameters, respectively, $\bA \in \mathbb{R}^{r \times n}$ and $\bB \in \mathbb{R}^{m \times r}$ are factor matrices of LoRA, and $r \ll \min\{m, n\}$. Here, the pre-trained parameters $\bW_{\text{pt}}$ are kept fixed, and only the low-rank factors $\bA$ and $\bB$ are trainable. 
Accordingly, under the LoRA parameterization, we instead solve
\begin{equation}
\label{eq_strongly_convex_lora}
    \min_{\bA, \bB} \mL(\bW_{\text{pt}} + \bB \bA).
\end{equation}
This formulation significantly reduces the number of trainable parameters and, consequently, the computational and memory costs of fine-tuning. As a result, LoRA has emerged as one of the most effective and widely adopted parameter-efficient fine-tuning methods.
For notational simplicity, we represent the LoRA parameterization in terms of a single weight matrix $\bW$, but the same formulation and derivations extend naturally and directly to all parameter matrices of the given model.

Classical optimization methods for training LoRA treat the two low-rank factors $\bA$ and $\bB$ as {\em separate} parameter matrices and apply gradient-based updates to each factor {\em individually}.
Specifically, let $(\bA_t, \bB_t)$ denote the parameter factors at the $t$-th iteration. The standard update scheme is given by
\begin{equation}
\label{eq1}
    \begin{split}
        \bA_{t+1} = \bA_t - \eta  \nabla_{\bA}\mL(\bW_{\text{pt}} + \bB_t \bA_t),\\
        \bB_{t+1} = \bB_t - \eta  \nabla_{\bB}\mL(\bW_{\text{pt}} + \bB_t \bA_t),\\
    \end{split}
\end{equation}
where $\eta>0$ denotes the step size. 
However, this update scheme is generally suboptimal, as it ignores the intrinsic multiplicative structure of the low-rank parameterization $\bB \bA$. 
As pointed out in \cite{shuttleworth2025lora,li2025flatlora}, LoRA trained with \eqref{eq1} can deviate substantially from full fine-tuning, leading to unsatisfactory performance due to the mismatch between the factorized optimization dynamics and those of the full-weight space. 
Moreover, prior work~\cite{xiong2024how} has shown that vanilla gradient descent on the factorized formulation can induce imbalanced magnitudes between $\bA$ and $\bB$, which in turn leads to instability in training.
Similarly, Hayou et al.~\cite{hayou2024lora+} characterize efficient and stable fine-tuning for LoRA and demonstrate that the classical update rule in \eqref{eq1} is inherently unstable, requiring step sizes that scale polynomially with the matrix dimensions, an undesirable condition for practical applications.

To address the step-size imbalance and achieve efficient and stable fine-tuning using a unified step sizes for updating $\bA$ and $\bB$, Riemannian LoRA~\cite{zhang2024riemannian} introduces a preconditioning strategy for the gradients. Specifically, the update directions are defined as
\begin{equation*}
    \Delta\bA_{t} = -\left(\bB_t^{\top} \bB_t\right)^{-1} \nabla_{\bA}\mL(\bW_{\text{pt}} + \bB_t \bA_t), \quad \Delta\bB_{t} = - \nabla_{\bB}\mL(\bW_{\text{pt}} + \bB_t \bA_t)\left(\bA_t \bA_t^{\top}\right)^{-1}.
\end{equation*}
The parameters are then updated according to
\begin{equation}
\label{eq3}
        \bA_{t+1} = \bA_t - \eta \Delta\bA_t, \quad \bB_{t+1} = \bB_t - \eta \Delta\bB_t.
\end{equation}
Compared to the classical LoRA updates, Riemannian LoRA effectively preconditions the gradients to account for the geometry induced by the low-rank factorization. This preconditioning balances the gradient magnitudes of $\bA$ and $\bB$, leading to improved numerical convergence speed. Nevertheless, despite these advantages, Riemannian LoRA still deviates from the full fine-tuning dynamics in the original parameter space, which may lead to performance degradation in practice.

Recently, Wang et al.~\cite{wang2025lorapro} proposed LoRA-Pro, an optimization algorithm for LoRA that explicitly aims to mimic the behavior of full fine-tuning with gradient descent. Instead of directly using negative gradients as update directions, LoRA-Pro determines the update directions by solving the following optimization problem:
\begin{equation}
\label{eq2}
    \min_{\Delta\bA_t, \Delta\bB_t} \left\| \bB_{t} \Delta\bA_t + \Delta\bB_t \bA_t + \nabla_{\bW}\mL\left(\bW_{\text{pt}} + \bB_t \bA_t\right) \right\|_{\mathrm{F}}^2.
\end{equation}
The parameters $(\bA,\bB)$ are then updated according to \eqref{eq3}, where $(\Delta\bA_t,\Delta\bB_t)$ are obtaind by solving~\eqref{eq2}.
The motivation behind LoRA-Pro arises from examining how the effective weight matrix $\bW_{\text{ft}}$ under LoRA updates matches full fine-tuning. 
Specifically, after one update step, the change in the full matrix is given by
\begin{equation}
\label{eq4}
    \begin{split}
        \Delta\bW & = \left(\bB_t + \eta\Delta\bB_t\right) \left(\bA_t + \eta\Delta\bA_t\right) - \bB_t \bA_t \\
        & = \eta\left(\bB_{t} \Delta\bA_t +  \Delta\bB_t \bA_t\right) + \eta^2 \Delta\bB_{t} \Delta\bA_t,
    \end{split}
\end{equation}
which generally differs from the update induced by full fine-tuning with
    $\Delta\bW^{\text{full}} = -\eta \nabla_{\bW}\mL\left(\bW_{\text{ft}}\right)$.
LoRA-Pro seeks to bridge this gap by matching the first-order term in $\eta$ from \eqref{eq4} with the full fine-tuning update, while neglecting the second-order term. In this way, LoRA-Pro approximates full fine-tuning with gradient descent under the low-rank parameterization.

\subsection{Numerical Solvers for ODEs}
We consider the ODE
\begin{equation}
\label{eq6}
    \frac{d \bu}{dt} = F(\bu), \quad \bu(0) = \bu_0,
\end{equation}
where $F$ is a vector-valued functional and $\bu_0$ denotes the initial condition. Numerical methods for solving \eqref{eq6} are well studied and rely on discretizing the continuous-time dynamics using various time-discretization schemes~\cite{leveque2007finite,rosser1967runge,Hildebrand1987}.

The simplest approach is Euler's method, which yields the first-order discretization
\begin{equation}
    \label{eq7}
    \bu_{t+1} = \bu_{t} + h F(\bu_t),
\end{equation}
where $h>0$ is the step size, and $\bu_t$ approximates the solution $\bu(th)$. 
Under standard smoothness assumptions on $F$, Euler's method achieves first-order accuracy, meaning that the global discretization error satisfies
\begin{equation*}
    \left\|\bu_{t} - \bu(th) \right\| = \mathcal{O}(h),
\end{equation*}
where constants depending on the regularity and smoothness of $F$ and $\bu$ are omitted.

Higher-order accuracy can be obtained using Runge–Kutta (RK) methods, which are among the most widely used numerical solvers for ODEs~\cite{leveque2007finite,rosser1967runge,Hildebrand1987}. The second-order Runge–Kutta method (RK2), also known as Heun's method, is given by
\begin{equation}
    \label{eq8}
    \begin{split}
        & \bu_{t}^{(1)} = \bu_{t} + h F(\bu_t),\\ 
        & \bu_{t+1} = \bu_{t} + \frac{h}{2} \left(F(\bu_t) + F(\bu_{t}^{(1)})\right).
    \end{split}
\end{equation}
The classical fourth-order Runge–Kutta method (RK4) takes the form
\begin{equation}
    \label{eq30}
    \begin{split}
        & \bu_{t}^{(1)} = \bu_{t} + \frac{h}{2} F(\bu_t),\\ 
        & \bu_{t}^{(2)} = \bu_{t} + \frac{h}{2} F(\bu_{t}^{(1)}),\\
        & \bu_{t}^{(3)} = \bu_{t} + h F(\bu_{t}^{(2)}),\\
        & \bu_{t+1} = \bu_{t} + \frac{h}{6} \left(F(\bu_t) + 2F(\bu_{t}^{(1)}) + 2 F(\bu_{t}^{(2)}) + F(\bu_{t}^{(3)})\right).
    \end{split}
\end{equation}
Compared to Euler's method, RK2 and RK4 achieve higher-order accuracy, with global discretization errors bounded by $\left\|\bu_{t} - \bu(th) \right\| = \mathcal{O}(h^2)$ and $\left\|\bu_{t} - \bu(th) \right\| = \mathcal{O}(h^4)$, respectively. 
Although these methods introduce additional computational costs per iteration, their improved accuracy often allows for larger step sizes while maintaining stability. Consequently, RK2 and RK4 provide a favorable trade-off between computational efficiency and numerical accuracy, and are widely adopted in practical applications. Motivated by these considerations, we focus on RK2 and RK4 as representative higher-order discretization schemes in this work.

\section{The Proposed Method}
\label{section3}
In this section, we first present the motivation and introduce the proposed training dynamics for LoRA, termed ODELoRA, which are formulated as an ordinary differential equation. Based on this continuous-time formulation, we then develop optimization algorithms for LoRA by discretizing the resulting ODE using several numerical schemes, including Euler's method, RK2, and RK4. Finally, we discuss the relationships between the proposed method and existing optimization algorithms.

\subsection{Motivation}
It is well known that full fine-tuning generally achieves state-of-the-art performance, which cannot, in general, be fully matched by LoRA-based fine-tuning~\cite{shuttleworth2025lora}. This observation motivates the principle that effective training dynamics for LoRA should closely approximate those of full fine-tuning. Accordingly, inspired by LoRA-Pro and its gradient-matching perspective, we propose to design LoRA training dynamics that directly emulate the gradient flow of the full parameter matrix.
Specifically, instead of optimizing the low-rank factors separately, we consider the following continuous-time optimization problem:
\begin{equation}
\label{eq9}
    \min_{\bJ(t), \bK(t)}\left\| \bK(t) \bA(t) + \bB(t)\bJ(t) - \left(- \nabla_{\bW} \mL(t)\right) \right\|_{\mathrm{F}}^{2},
\end{equation}
where $(\bA(t),\bB(t))$ denote the continuous-time counterparts of the low-rank factors $(\bA,\bB)$ in \eqref{eq_strongly_convex_lora}, and we define $\bW(t) := \bW_{\text{pt}}+ \bB(t)\bA(t)$ and $\mL(t):= \mL(\bW(t))$.
Then, the continuous-time dynamics is determined by
\begin{equation}
\label{eq35}
    \frac{d \bA(t)}{dt} = \bJ^{\star}(t),\quad \frac{d \bB(t)}{dt} = \bK^{\star}(t),
\end{equation}
where $\left( \bJ^{\star}(t),  \bK^{\star}(t)\right)$ is a minimizer of \eqref{eq9}.
Note that $\frac{d \bW(t)}{d t} = \frac{d \bB(t)}{dt} \bA(t) + \bB(t)\frac{d \bA(t)}{dt}$, which implies that \eqref{eq9} enforces the LoRA-induced dynamics of the full matrix $\bW(t)$ to best match the negative gradient flow $-\nabla_{\bW}\mL(t)$ in terms of the Frobenius norm.  
In this way, the proposed formulation shifts the focus from the steepest descent directions of $\bA$ and $\bB$ individually to that of the full parameter matrix $\bW$, which explicitly respect the low-rank multiplicative structure.
The objective in \eqref{eq9} is convex (since the expression inside the Frobenius norm is linear in the variables $\left(\bJ(t),\bK(t)\right)$) but not strongly convex with respect to $\left(\bJ(t), \bK(t)\right)$, and thus admits infinitely many solutions. By applying first-order optimality conditions, the solution can be expressed in closed form as 
\begin{equation*}
    \begin{split}
        & \bJ^{\star}(t)  =  -\left(\bB(t)^{\top}\bB(t)\right)^{-1} \bB(t)^{\top} \nabla_{\bW} \mL(t) + \bX(t) \bA(t),\\
        & \bK^{\star}(t) = - \left[\bI - \bB(t) \left(\bB(t)^{\top}\bB(t)\right)^{-1}\bB(t)^{\top}\right] \nabla_{\bW} \mL(t) \bA(t)^{\top} \left(\bA(t) \bA(t)^{\top}\right)^{-1} - \bB(t) \bX(t),
    \end{split}
\end{equation*}
where $\bX(t)$ is an arbitrary matrix characterizing the non-uniqueness of the factorization in \eqref{eq9}.

Although different choices of $\bX(t)$ produce the same dynamics of the full parameter matrix $\frac{d \bW(t)}{d t} $, the underlying factor dynamics for $(\bA,\bB)$ can differ substantially due to the multiplicative structure of LoRA. 
In particular, an undesired $\bX(t)$ corresponds to cancelling updates between $\bA(t)$ and $\bB(t)$, resulting in redundant motion in factor space that does not contribute to progress in the full parameter matrix. Such pathological trajectories not only waste optimization effort but may also introduce numerical instability. 
This phenomenon is commonly referred to as gauge freedom in Burer–Monteiro factorizations, as the LoRA parameterization admits the invariance
\begin{equation*}
    \bB\bA = \left(\bB\bQ\right)\left(\bQ^{-1}\bA\right),
\end{equation*}
for any invertible matrix $\bQ \in \text{GL}(r)$. When $\bQ$ is nearly singular, this reparameterization can cause severe imbalance between the two factor matrices, amplifying numerical instability despite leaving their product unchanged.
Consequently, specifying the optimization flow of  $(\bA(t),\bB(t))$ solely through \eqref{eq9}, even when it matches the full fine-tuning dynamics, is insufficient. A well-behaved factorization must additionally account for the gauge freedom and explicitly avoid trajectories approaching singular reparameterizations. Although such transformations are equivalent at the level of the product $\bB\bA$, they can yield highly unstable and inefficient dynamics in the factor space.

Therefore, to address the issue of nearly singular gauges, we impose additional manifold constraints that enforce balanced factorizations. Specifically, we consider the manifold
\begin{equation}
\label{eq34}
    \mathcal{M}= \left\{(\bA,\bB) \in \mathbb{R}^{r \times n} \times \mathbb{R}^{m \times r}: \bA\bA^{\top} = \bB^{\top}\bB\right\},
\end{equation}
and assume that the trajectory $(\bA(t),\bB(t))$ remains on $\mathcal{M}$ for all $t\geq0$.
This constraint selects a balanced representative within the gauge-equivalence class and is directly motivated by balancing regularizers commonly used in classical low-rank matrix factorization; see, e.g., \cite{ge2017no,park2017non}.

\begin{lemma}
    The balanced manifold $\mathcal{M}$ preserves the expressiveness of the LoRA parameterization, i.e., for any matrix $\bW \in \mathbb{R}^{m \times n}$ with rank $r$, there exists $(\bA,\bB)\in\mathcal{M}$ such that $\bW = \bB \bA$.
\end{lemma}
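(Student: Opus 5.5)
We construct the balanced factorization explicitly from a compact singular value decomposition of $\bW$. Since $\operatorname{rank}(\bW) = r$, we can write
\begin{equation*}
    \bW = \bU \bSig \bV^{\top},
\end{equation*}
where $\bU \in \mathbb{R}^{m \times r}$ and $\bV \in \mathbb{R}^{n \times r}$ have orthonormal columns, so that $\bU^{\top}\bU = \bV^{\top}\bV = \bI_r$. Here $\bSig = \mathrm{diag}(\sigma_1,\dots,\sigma_r)$ with all $\sigma_i > 0$. Because $\bSig$ is diagonal with positive entries, its square root $\bSig^{1/2} = \mathrm{diag}(\sigma_1^{1/2},\dots,\sigma_r^{1/2})$ is well defined, real, and symmetric.

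Next we set
\begin{equation*}
    \bB := \bU \bSig^{1/2} \in \mathbb{R}^{m \times r}, \qquad \bA := \bSig^{1/2} \bV^{\top} \in \mathbb{R}^{r \times n}.
\end{equation*}
These have the dimensions required by the LoRA parameterization. Their product is
\begin{equation*}
    \bB\bA = \bU \bSig^{1/2}\bSig^{1/2}\bV^{\top} = \bU\bSig\bV^{\top} = \bW,
\end{equation*}
so the factorization reproduces $\bW$.

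It remains to check the balancing constraint in \eqref{eq34}. Using the orthonormality of the columns of $\bV$ and of $\bU$, we get
\begin{equation*}
    \bA\bA^{\top} = \bSig^{1/2}\bV^{\top}\bV\bSig^{1/2} = \bSig
\end{equation*}
and
\begin{equation*}
    \bB^{\top}\bB = \bSig^{1/2}\bU^{\top}\bU\bSig^{1/2} = \bSig.
\end{equation*}
Hence $\bA\bA^{\top} = \bB^{\top}\bB$, that is, $(\bA,\bB) \in \mathcal{M}$. This completes the argument.

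No step is a genuine obstacle; the only point needing care is that the compact SVD has exactly $r$ positive singular values, which is where the rank-$r$ hypothesis is used. The same construction also covers matrices of rank $k \le r$: we pad $\bSig$ with $r-k$ zero diagonal entries and complete $\bU$ and $\bV$ to $r$ orthonormal columns. The cancellations $\bU^{\top}\bU = \bV^{\top}\bV = \bI_r$ still hold, so the constraint $\bA\bA^{\top} = \bB^{\top}\bB = \bSig$ remains satisfied. This shows that the full LoRA parameter class $\{\bB\bA\}$ is preserved by restricting to $\mathcal{M}$.
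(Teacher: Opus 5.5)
Your proof is correct and matches the paper's argument exactly: take the compact SVD $\bW = \bU\bSig\bV^{\top}$, set $\bA = \bSig^{1/2}\bV^{\top}$ and $\bB = \bU\bSig^{1/2}$, and use the orthonormality of the columns of $\bU$ and $\bV$ to get $\bA\bA^{\top} = \bB^{\top}\bB = \bSig$. Your closing remark extending the construction to rank $k \le r$, by zero-padding $\bSig$ and completing $\bU,\bV$ to $r$ orthonormal columns (possible since $r \le \min\{m,n\}$), is also valid and goes slightly beyond the paper's statement.
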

\begin{proof}
    Let $\bW \in \mathbb{R}^{m \times n}$ be any matrix of rank $r$. By the singular value decomposition, $\bW$ admits the representation $\bW = \bU_{r}\bSig_{r}\bV_r^{\top}$, where $\bSig_{r} = \mathrm{diag}(\sigma_1, \cdots,\sigma_r) \in \mathbb{R}^{r \times r}$ contains the top-$r$ singular values, and the columns of $\bU_{r} \in \mathbb{R}^{m \times r}$ and $\bV_r \in \mathbb{R}^{n \times r}$ are the corresponding singular vectors. Define $\bSig_{r}^{1/2}:= \mathrm{diag}(\sqrt{\sigma_1}, \cdots,\sqrt{\sigma_r})$, and set $\bA = \bSig_{r}^{1/2}\bV_r^{\top}$ and $\bB=\bU_r \bSig_{r}^{1/2}$. Then, $\bW = \bB\bA$, and $\bA\bA^{\top} = \bB^{\top}\bB = \bSig_{r}$, which implies $(\bA,\bB) \in \mathcal{M}$. Hence, any rank-$r$ matrix admits a balanced LoRA factorization, and the expressiveness is preserved.
\end{proof}
Differentiating the manifold constraint \eqref{eq34} with respect to $t$, we have that
\begin{equation*}
    \frac{d \bA(t)}{dt} \bA(t)^{\top} + \bA(t) \frac{d \bA(t)^{\top} }{dt} - \frac{d \bB(t)^{\top}}{dt} \bB(t)^{\top} + \bB(t)^{\top} \frac{d \bB(t) }{dt} = \bm{0}.
\end{equation*}
Rather than determining the optimization flow solely through \eqref{eq9}, we therefore impose this additional balanced manifold constraint and define the following constrained optimization problem:
\begin{equation}
\label{eq31}
\begin{split}
    & \min_{\bJ(t), \bK(t)}\left\| \bK(t) \bA(t) + \bB(t)\bJ(t) - \left(- \nabla_{\bW} \mL(t)\right) \right\|_{\mathrm{F}}^{2},\\
    & \text{s.t.}\quad \bJ(t) \bA(t)^{\top} + \bA(t) \bJ(t)^{\top} - \bK(t) \bB(t)^{\top} + \bB(t)^{\top} \bK(t)^{\top} = \bm{0}.
\end{split}
\end{equation}
Then, the continuous-time dynamics \eqref{eq35} for LoRA factors is obtained by $\left( \bJ^{\star}(t),  \bK^{\star}(t)\right)$, which is one of the solutions of \eqref{eq31}.

Compared with the continuous-time dynamics of LoRA-Pro in \eqref{eq9}, the proposed dynamics in \eqref{eq31} enforce stringent constraints on the balanced manifold $\mathcal{M}$, therefore, eliminating the undesirable gauge effect. By prioritizing factor balance, the resulting flow then mimics full fine-tuning dynamics within the class of balanced factorizations.
The following theorem characterizes the explicit solution of the constrained optimization problem \eqref{eq31}. 
\begin{theorem}
\label{theorem5}
Assume that $\bA(t)\bA(t)^{\top}$ and $\bB(t)^{\top}\bB(t)$ are positive definite.
Then the constrained optimization problem \eqref{eq31}, which is a convex problem with linear equality constraints, admits a solution of the form
    \begin{equation}
    \label{eq32}
    \begin{split}
        & \bJ^{\star}(t)  =  -\left(\bB(t)^{\top}\bB(t)\right)^{-1} \bB(t)^{\top} \nabla_{\bW} \mL(t) + \bX(t) \bA(t),\\
        & \bK^{\star}(t)  = - \left[\bI - \bB(t) \left(\bB(t)^{\top}\bB(t)\right)^{-1}\bB(t)^{\top}\right] \nabla_{\bW} \mL(t) \bA(t)^{\top} \left(\bA(t) \bA(t)^{\top}\right)^{-1} - \bB(t) \bX(t),
    \end{split}
\end{equation}
where $\bX(t)$ is the unique solution to the Sylvester equation
\begin{equation}
\label{eq33}
    \bH(t)\bX(t) + \bX(t)\bH(t) = \left(\bB(t)^{\top}\bB(t)\right)^{-1}\bB(t)^{\top} \nabla_{\bW} \mL(t)\bA(t)^{\top} + \bA(t)\nabla_{\bW} \mL(t)^{\top} \bB(t)\left(\bB(t)^{\top}\bB(t)\right)^{-1},
\end{equation}
with $\bH(t):=\bA(t)\bA(t)^{\top} + \bB(t)^{\top}\bB(t)$. 
\end{theorem}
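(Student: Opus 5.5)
The plan is to avoid solving the constrained problem \eqref{eq31} directly. Instead I would start from the family of unconstrained minimizers of \eqref{eq9} stated earlier, parametrized by an arbitrary $\bX(t)$, and look for a member of that family that already satisfies the linear balance constraint. The objective of \eqref{eq31} is the same as that of \eqref{eq9}, and the feasible set of \eqref{eq31} is a subset of the domain of \eqref{eq9}. So any feasible point attaining the unconstrained minimum value is automatically a minimizer of \eqref{eq31}. Convexity is therefore used only to justify the closed form of the unconstrained minimizers, which follows from the first-order conditions. Throughout, write $\bm{G}=\nabla_{\bW}\mL(t)$, $\bm{S}_B=\bB^{\top}\bB$ and $\bm{S}_A=\bA\bA^{\top}$, and drop $t$.

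First I would check that \eqref{eq32} lies in the unconstrained minimizer family for every $\bX$. The $\bX$-terms cancel in the product, since $-\bB\bX\bA+\bB\bX\bA=\bm{0}$. Hence $\bK^{\star}\bA+\bB\bJ^{\star}$ does not depend on $\bX$ and equals $-\bigl[\bm{P}\bm{G}+(\bI-\bm{P})\bm{G}\bA^{\top}\bm{S}_A^{-1}\bA\bigr]$, where $\bm{P}=\bB\bm{S}_B^{-1}\bB^{\top}$. This is the orthogonal projection of $-\bm{G}$ onto the range of the linear map $(\bJ,\bK)\mapsto\bK\bA+\bB\bJ$, so it attains the minimum.

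Second, I would substitute \eqref{eq32} into the constraint. Differentiating $\bA\bA^{\top}=\bB^{\top}\bB$ gives $\bJ\bA^{\top}+\bA\bJ^{\top}-\bK^{\top}\bB-\bB^{\top}\bK=\bm{0}$; I would use this form and note that the sign in \eqref{eq31} should be read this way. The key simplifications come from $(\bI-\bm{P})\bB=\bm{0}$, which gives $\bK^{\star\top}\bB=-\bX^{\top}\bm{S}_B$ and $\bB^{\top}\bK^{\star}=-\bm{S}_B\bX$. Directly, $\bJ^{\star}\bA^{\top}=-\bm{S}_B^{-1}\bB^{\top}\bm{G}\bA^{\top}+\bX\bm{S}_A$. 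The constraint then becomes
\begin{equation*}
\bX\bm{S}_A+\bm{S}_A\bX^{\top}+\bX^{\top}\bm{S}_B+\bm{S}_B\bX=\bm{S}_B^{-1}\bB^{\top}\bm{G}\bA^{\top}+\bA\bm{G}^{\top}\bB\bm{S}_B^{-1}=:\bC .
\end{equation*}
If $\bX$ is symmetric, the left-hand side is exactly $\bH\bX+\bX\bH$ with $\bH=\bm{S}_A+\bm{S}_B$. So it suffices to show that \eqref{eq33} has a unique solution and that this solution is symmetric.

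Third, I would establish that uniqueness and symmetry. $\bH$ is symmetric positive definite by the hypothesis, so the spectra of $\bH$ and $-\bH$ are disjoint, and the standard Sylvester theorem gives a unique solution $\bX$ of $\bH\bX+\bX\bH=\bC$. Alternatively, diagonalize $\bH=\bU\bm{\Lambda}\bU^{\top}$; then $\widetilde{\bX}_{ij}=\widetilde{\bC}_{ij}/(\lambda_i+\lambda_j)$ with $\lambda_i+\lambda_j>0$. The right-hand side $\bC$ is symmetric, since its second term is the transpose of the first. Transposing the equation shows that $\bX^{\top}$ is also a solution, so uniqueness forces $\bX=\bX^{\top}$. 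Hence \eqref{eq32} with this $\bX$ is feasible and optimal, which completes the argument.

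The main obstacle I expect is the bookkeeping in the constraint substitution, in particular the sign convention of the $\bK$-terms and the need to recognize that symmetry of $\bX$ is what collapses the four-term expression into the Sylvester form. Everything else is routine.
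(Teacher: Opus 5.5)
Your proposal is correct and follows the same route as the paper's proof. Both take the $\bX$-parametrized family of unconstrained minimizers of \eqref{eq9}, pick $\bX$ as the Sylvester solution so the balance constraint holds, and conclude that this feasible point is optimal. Your explicit substitution (with the corrected constraint $\bJ\bA^{\top}+\bA\bJ^{\top}-\bK^{\top}\bB-\bB^{\top}\bK=\bm{0}$) and your symmetry argument for $\bX$ supply a step the paper only asserts, namely how the four-term constraint collapses to $\bH\bX+\bX\bH$. You also rightly claim only existence, not the uniqueness of the constrained minimizer asserted in the paper's appendix, which fails for $r\ge 2$: on the balanced manifold, adding any skew-symmetric matrix to $\bX$ preserves both feasibility and optimality.
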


The proof is deferred to \Cref{appendix1}.
The objective in \eqref{eq31} suppresses these undesirable gauge-induced behaviors and leads to smooth and well-conditioned dynamics.
The auxiliary variable $\bX(t)$ captures motion along these invariant (or null) directions that leave $\bW$ unchanged.
By minimizing the update norm under manifold constraints, problem \eqref{eq31} effectively fixes a gauge, selecting a canonical flow among all equivalent factor trajectories on the balanced manifold.
As a consequence, the resulting dynamics avoid unnecessary drift along invariant directions and produce coupling-aware updates for $\bA$ and $\bB$. This leads to balanced factor dynamics with improved numerical stability, while faithfully tracking the desired full fine-tuning gradient flow.

\subsection{ODELoRA and its Numerical Solvers}

Rather than treating the LoRA training dynamics in a decoupled structure, we jointly consider the dynamics characterized by \eqref{eq31} with manifold constraints. 
The right-hand side of \eqref{eq32} depends on $\bA(t)$, $\bB(t)$, and the auxiliary variable $\bX(t)$.
Meanwhile, under standard regularity conditions, \eqref{eq33} together with the implicit function theorem implies that $\bX(t)$ can be expressed as a function of $\bA(t)$ and $\bB(t)$, provided that $\bB(t)^{\top}\bB(t)$ is positive definite~\cite{higham2008functions}. As a result, the right-hand side of \eqref{eq32} becomes a functional of $\bA(t)$ and $\bB(t)$ alone.
For notational simplicity, we model the resulting LoRA training dynamics as the following ordinary differential equation, referred to as ODELoRA:
\begin{equation}
\label{eq13}
    \frac{d}{dt} \left(\begin{array}{c}
        \bA(t) \\
        \bB(t)
    \end{array}\right) = F(\bA(t),\bB(t)),
\end{equation}
where 
\begin{equation}
\label{eq5}
    \begin{split}
        & F\left(\bA(t),\bB(t)\right) := \left(\begin{array}{c}
             F_{\bA}\left(\bA(t),\bB(t)\right)  \\
             F_{\bB}\left(\bA(t),\bB(t)\right) 
        \end{array}\right) \\
        & := \left(\begin{array}{c}
        -\left(\bB(t)^{\top}\bB(t)\right)^{-1} \bB(t)^{\top} \nabla_{\bW} \mL(t) + \bX(t) \bA(t) \\
        - \left[\bI - \bB(t) \left(\bB(t)^{\top}\bB(t)\right)^{-1}\bB(t)^{\top}\right] \nabla_{\bW} \mL(t) \bA(t)^{\top} \left(\bA(t) \bA(t)^{\top}\right)^{-1} - \bB(t) \bX(t)
    \end{array}\right) 
    \end{split}
\end{equation}
is defined explicitly through \eqref{eq32}, with $\bX(t)$ determined by the Sylvester equation \eqref{eq33}.

The ODELoRA formulation in \eqref{eq13} admits a natural geometric interpretation. Specifically, it can be viewed as a projected gradient flow of full fine-tuning onto the low-rank manifold under the balanced manifold constraints $\mathcal{M}$ (in \eqref{eq34}) induced by the LoRA parameterization. The operators $\left(\bB(t)^{\top}\bB(t)\right)^{-1}\bB(t)^{\top}$ and $\left(\bA(t)\bA(t)^{\top}\right)^{-1}\bA(t)$ correspond to least-squares optimal projections of the full-model gradient $\nabla_{\bW} \mL(t)$ onto the column space of $\bB(t)$ and the row spaces of $\bA(t)$, respectively.  
Consequently, ODELoRA produces the closest feasible update to the full gradient flow under the balance manifold constraint and LoRA's parameterization structure, and can be interpreted as a natural gradient flow with respect to the geometry induced by the low-rank and balanced factorization.

Under the assumption that $\bA(t)$ and $\bB(t)$ remain full rank $r$ at all time step $t$, the right-hand side of \eqref{eq32} is differentiable with respect to $(\bA(t),\bB(t),\bX(t))$. 
Moreover, if  $\bA(t)\bA(t)^{\top}$ and $\bB(t)^{\top}\bB(t)$ are positive definite, standard results on Sylvester equations imply that $\bX(t)$ is unique and is differentiable with $(\bA(t),\bB(t))$~\cite{higham2008functions}. 
Therefore, under these mild regularity conditions and assume that $\mL$ is differentiable in $\bW$, $F(\bA,\bB)$ in \eqref{eq13} is differentiable with respect to $(\bA,\bB)$.
This regularity ensures that ODELoRA defines a smooth and locally Lipschitz ODE on the parameter space $(\bA,\bB)$.

Formulating LoRA training as a continuous-time dynamical system therefore enables the direct application of well-established numerical ODE solvers with proven stability and convergence guarantees. In particular, standard explicit methods such as the forward Euler method and Runge--Kutta schemes of order two and four can be adopted to discretize the ODELoRA dynamics. 
Following \eqref{eq7}, applying Euler's method to ODELoRA (referred to as ODELoRA–Euler) yields the update
\begin{equation}
    \label{eq14}
    \left(\begin{array}{c}
         \bA_{t+1}\\
         \bB_{t+1} 
    \end{array}\right) = 
    \left(\begin{array}{c}
         \bA_{t}\\
         \bB_{t} 
    \end{array}\right) + h F(\bA_t,\bB_t).
\end{equation}
Applying the second-order Runge--Kutta method \eqref{eq8} to ODELoRA leads to the following update scheme, termed ODELoRA–RK2:
\begin{equation}
    \label{eq15}
    \begin{split}
        & \left(\begin{array}{c}
         \bA_{t}^{(1)}\\
         \bB_{t}^{(1)} 
    \end{array}\right) = 
    \left(\begin{array}{c}
         \bA_{t}\\
         \bB_{t} 
    \end{array}\right) + h F(\bA_t,\bB_t),\\ 
        & \left(\begin{array}{c}
         \bA_{t+1}\\
         \bB_{t+1} 
    \end{array}\right) = \left(\begin{array}{c}
         \bA_{t}\\
         \bB_{t} 
    \end{array}\right) + \frac{h}{2} \left(F(\bA_t,\bB_t) + F(\bA_{t}^{(1)},\bB_{t}^{(1)})\right).
    \end{split}
\end{equation}
Similarly, applying the fourth-order Runge--Kutta method \eqref{eq30} leads to ODELoRA–RK4, which updates the parameters as
\begin{equation}
    \label{eq16}
    \begin{split}
        & \left(\begin{array}{c}
         \bA_{t}^{(1)}\\
         \bB_{t}^{(1)} 
    \end{array}\right) = \left(\begin{array}{c}
         \bA_{t}\\
         \bB_{t} 
    \end{array}\right) + \frac{h}{2} F\left(\bA_t,\bB_t\right),\\ 
        & \left(\begin{array}{c}
         \bA_{t}^{(2)}\\
         \bB_{t}^{(2)} 
    \end{array}\right) = \left(\begin{array}{c}
         \bA_{t}\\
         \bB_{t} 
    \end{array}\right) + \frac{h}{2} F\left(\bA_{t}^{(1)},\bB_{t}^{(1)}\right),\\
        & \left(\begin{array}{c}
         \bA_{t}^{(3)}\\
         \bB_{t}^{(3)} 
    \end{array}\right) = \left(\begin{array}{c}
         \bA_{t}\\
         \bB_{t} 
    \end{array}\right) + h F\left(\bA_{t}^{(2)},\bB_{t}^{(2)}\right),\\
        & \left(\begin{array}{c}
         \bA_{t+1}\\
         \bB_{t+1} 
    \end{array}\right) = \left(\begin{array}{c}
         \bA_{t}\\
         \bB_{t} 
    \end{array}\right) + \frac{h}{6} \left(F\left(\bA_t,\bB_t\right) + 2F\left(\bA_{t}^{(1)},\bB_{t}^{(1)}\right) \right.\\
    & \hspace{10em} \left. + 2 F\left(\bA_{t}^{(2)},\bB_{t}^{(2)}\right) + F\left(\bA_{t}^{(3)},\bB_{t}^{(3)} \right)\right).
    \end{split}
\end{equation}


From a dynamical perspective, our ODE flow additionally considers the balanced manifold constraints, leading to balanced factor matrices and stable optimization, compared to the continuous-time flow for LoRA-Pro.
Although the dynamics of the effective weight matrix $\bW(t)$ coincide for ODELoRA and continuous LoRA-Pro, the additional manifold constraints imposed on $\frac{d \bA(t)}{dt}$ and $\frac{d \bB(t)}{dt}$ regulate the energy distribution between $\bA(t)$ and $\bB(t)$, mitigating gauge degeneracy and enhancing numerical stability for $\bA(t)$ and $\bB(t)$ individually.

From a numerical analysis standpoint, higher-order Runge–Kutta methods also exhibit improved stability compared to the forward Euler scheme, particularly when larger step sizes are used. By sampling $F$ at intermediate points, RK methods reduce local truncation errors and alleviate sensitivity to step-size selection, leading to smoother and more stable sequence of the factor parameters.

In practice, ODELoRA–Euler serves as a lightweight baseline that is well suited for large-scale training or warm-up phases. ODELoRA–RK2 offers a favorable balance between computational overhead and approximation accuracy, while ODELoRA–RK4, despite its higher cost, provides a high-fidelity discretization for scenarios where precise tracking of the continuous-time dynamics and refined fine-tuning performance are desired.

\subsection{Relations to Other Algorithms}

We first clarify the relationship between ODELoRA–Euler and existing LoRA optimization methods. In particular, from an algorithmic perspective, ODELoRA–Euler differs from the LoRA-Pro algorithm mainly in the Sylvester equation~\eqref{eq33}, which modifies gradient descent on $(\bA,\bB)$ by solving the auxiliary optimization problem in \eqref{eq2}.
The primary motivation of LoRA-Pro is to align the update direction of LoRA with that of full fine-tuning in the gradient descent setting. As a result, LoRA-Pro does not explicitly formulate LoRA training as a continuous-time dynamical system, even though its update rule shares some similarities with the forward Euler discretization of ODELoRA. This algorithm similarity follows from the well-known fact that gradient descent corresponds to the Euler discretization of gradient flow.
In contrast, our formulation in~\eqref{eq31} explicitly models LoRA optimization as a continuous-time flow on the parameter space and incorporates the balanced manifold constraint. This constraint enforces strict balance between the two low-rank factors, thereby mitigating the gauge freedom inherent in the factorized parameterization. As a result, \textsc{ODELoRA} yields more stable optimization dynamics and admits an interpretation in terms of constrained continuous-time optimization.


Second, ODELoRA–RK2 can be interpreted through the lens of several well-established optimization techniques, including extragradient methods, consensus optimization, and stochastic gradient averaging (SGA), when applied to LoRA with the modified gradient structure induced by \eqref{eq2}. These methods were originally developed to stabilize optimization in settings with coupled variables or non-conservative gradient fields. Since the LoRA parameterization introduces an intrinsic bilinear coupling between $\bA$ and $\bB$, the resulting dynamics deviate from standard gradient descent behavior. The midpoint evaluation in ODELoRA–RK2 and ODELoRA-RK4 effectively perform a look-ahead step along the LoRA flow, using intermediate gradient information to correct the update direction. This mechanism closely aligns with the core principle of extragradient-type methods, which aim to improve stability and convergence by anticipating future dynamics.

Third, ODELoRA–RK4 can be interpreted as a higher-order numerical approximation of the underlying LoRA flow, rather than as a heuristic modification of the optimization procedure. Unlike momentum-based or adaptive methods, which alter the effective objective or rescale gradients, RK4 improves optimization performance by reducing the discretization error in approximating the continuous-time dynamics. Since most existing optimization algorithms correspond to first- or second-order discretizations, we consider the higher-order RK4 scheme to better capture the more intricate dynamics of ODELoRA. As a result, ODELoRA–RK4 more faithfully tracks the ideal LoRA gradient flow induced by \eqref{eq14}, leading to smoother parameter trajectories and enhanced numerical stability, particularly when larger step sizes are used and the dynamics are complex.

More broadly, the ODE-based formulation provides a unifying perspective for understanding and designing LoRA training algorithms. From this viewpoint, existing methods such as LoRA-Pro can be interpreted as specific low-order discretizations of a continuous-time ODE, while higher-order numerical solvers naturally give rise to new algorithms with improved numerical properties. This unifying framework clarifies the relationships among seemingly different LoRA variants and offers a foundation for developing future methods by leveraging advances in numerical ODE solvers.

\section{Theoretical Analysis}
\label{section4}
The theoretical properties of ODELoRA and its numerical solvers, including Euler, RK2, and RK4, remain largely unexplored. Existing works primarily emphasize algorithmic design or empirical performance, and typically lack a rigorous theoretical foundation.

In this section, we establish global convergence guarantees for ODELoRA under strongly convex objectives. We further specialize our analysis to matrix sensing problems, which serve as a canonical and analytically tractable model for LoRA training. Building on these results, we analyze the convergence behavior of ODELoRA under discretization and show that both Euler and Runge–Kutta solvers inherit similar convergence properties from the underlying continuous-time dynamics. Finally, we demonstrate that the proposed ODELoRA methods exhibit favorable stability and efficiency properties.

\subsection{ODELoRA under Strongly Convex Objectives}
Although the strong convexity condition may not hold in practice, theoretical analysis of LoRA under this setting provides valuable insights into its optimization behavior and underlying mechanisms.
We consider the problem \eqref{eq_strongly_convex} with an objective function $\mL$ that is strongly convex in $\bW$. Under the LoRA parameterization, this problem is reformulated as \eqref{eq_strongly_convex_lora}, where the low rank factors  $(\bA,\bB)$ are the optimization variables.
Although the objective $\mL$ is strongly convex with respect to $\bW$, the problem \eqref{eq_strongly_convex_lora} is nonconvex in $(\bA,\bB)$ due to the bilinear structure of the parameterization.

We impose the following assumptions on the objective function $\mL$:
\begin{assumption}
\label{assumption1}
    The objective function in \eqref{eq_strongly_convex_lora} satisfies the following conditions:
    \begin{enumerate}
        \item The objective $\mL$ is $\mu$-strongly convex in $\bW$. Specifically, for any $\bW, \bW^{\prime}$ in the feasible region $\mathcal{W}$, we have $\mL(\bW)\geq \mL(\bW^{\prime}) + \left\langle \nabla_{\bW}\mL(\bW^{\prime}), \bW - \bW^{\prime}\right\rangle + \frac{\mu}{2}\left\| \bW - \bW^{\prime}\right\|_{\mathrm{F}}^2$.

        \item The objective $\mL$ is $M$-smooth in $\bW$. Specifically, for any $\bW, \bW^{\prime}$ in the feasible region $\mathcal{W}$, we have $\|\nabla_{\bW} \mL(\bW) - \nabla_{\bW} \mL(\bW^{\prime})\|_{\mathrm{F}} \leq M \left\| \bW - \bW^{\prime}\right\|_{\mathrm{F}}$.

        \item The objective $\mL$ is $L$-Lipschitz in $\bW$. Specifically, for any $\bW, \bW^{\prime}$ in the feasible region $\mathcal{W}$, we have $\|\mL(\bW) -  \mL(\bW^{\prime})\|_{\mathrm{F}} \leq L \left\| \bW - \bW^{\prime}\right\|_{\mathrm{F}}$.

        \item The unique minimizer $\bW^{\star}$ of the original problem \eqref{eq_strongly_convex} admits a low-rank representation of the form $\bW^{\star} = \bW_{\text{pt}} + \bB^{\star} \bA^{\star}$ for some $\bA^{\star}$ and $\bB^{\star}$, where $\bA^{\star} \in \mathbb{R}^{r \times n}$ and $\bB^{\star} \in \mathbb{R}^{m \times r}$.
    \end{enumerate}
\end{assumption}

While the factorization $(\bA^{\star}, \bB^{\star})$ is generally non-unique, the optimal full matrix $\bW^{\star}$ is uniquely determined by the strong convexity of $\mL$. 
We apply the ODELoRA dynamics in \eqref{eq13} to solve \eqref{eq_strongly_convex_lora}. We first define
$    \bW(t) = \bW_{\text{pt}} + \bB(t) \bA(t)$.
Differentiating with respect to $t$, we obtain
\begin{equation}
\label{eq17}
    \begin{split}
        & \frac{d\bW(t)}{dt} = \frac{d\bB(t)}{dt} \bA(t) + \bB(t) \frac{d\bA(t)}{dt}\\
        & = -\nabla_{\bW} \mL(t) + \left[\bI - \bB(t) \left(\bB(t)^{\top}\bB(t)\right)^{-1}\bB(t)^{\top}\right] \nabla_{\bW} \mL(t) \left[\bI - \bA(t)^{\top} \left(\bA(t) \bA(t)^{\top}\right)^{-1}\bA(t)\right]\\
        & := -\nabla_{\bW} \mL(t) +  \bP_{\bB(t)}^{\text{null}}\nabla_{\bW} \mL(t)\bP_{\bA(t)}^{\text{null}},
    \end{split}
\end{equation}
where  $\bP_{\bA(t)}^{\text{null}}$ and $\bP_{\bB(t)}^{\text{null}}$ denote the orthogonal projection matrices onto the null spaces of $\bA(t)$ and $\bB(t)$, respectively:
\begin{equation}
\label{eq21}
    \bP_{\bA(t)}^{\text{null}} = \bI - \bA(t)^{\top} \left(\bA(t) \bA(t)^{\top}\right)^{-1}\bA(t)
\end{equation}
and
\begin{equation}
\label{eq22}
    \bP_{\bB(t)}^{\text{null}} = \bI - \bB(t) \left(\bB(t)^{\top}\bB(t)\right)^{-1}\bB(t)^{\top}.
\end{equation}
Compared to full fine-tuning, \eqref{eq17} shows that ODELoRA follows the full gradient flow up to additional projection terms onto the null spaces induced by the low-rank factors. These projection components are an inherent consequence of the LoRA multiplicative structure and cannot, in general, be eliminated. Nevertheless, as will be shown in the subsequent analysis, under appropriate conditions these deviations do not prevent global convergence to the optimal solution $\bW^{\star}$.

\begin{theorem}
\label{theorem1}
    Suppose that Conditions 1 and 4 in \Cref{assumption1} hold for the feasible region $\mW = \left\{\bW = \bW_{\text{pt}} + \bB \bA: \bA\bA^{\top}~\text{and}~\bB^{\top}\bB~\text{are positive definite}\right\}$.
    Let $(\bA(t),\bB(t))$ be the continuous-time trajectory generated by \textsc{ODELoRA} (as in \eqref{eq13}) for solving \eqref{eq_strongly_convex_lora}. 
    Assume that for all $t \in \mathbb{R}_{+}$ and some $\varepsilon \in [0,1)$, $\bA(t)\bA(t)^{\top}$ and $\bB(t)^{\top}\bB(t)$ are always positive definite, and the following condition holds:
    \begin{equation}
    \label{eq18}
        \left\langle \bP_{\bB(t)}^{\text{null}}\nabla_{\bW} \mL(t)\bP_{\bA(t)}^{\text{null}}, \bW(t) - \bW^{\star}\right\rangle \leq \varepsilon \left\langle \nabla_{\bW} \mL(t), \bW(t) - \bW^{\star}\right\rangle.
    \end{equation}
    Then the trajectory $\bW(t)$ generated by \textsc{ODELoRA} converges linearly to $\bW^{\star}$, and the following holds:
    \begin{equation*}
        \left\| \bW(t) - \bW^{\star}\right\|_{\mathrm{F}}^2 \leq \exp\left(- \mu (1-\varepsilon) t\right) \left\| \bW(0) - \bW^{\star}\right\|_{\mathrm{F}}^2.
    \end{equation*}
\end{theorem}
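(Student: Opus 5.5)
The plan is a Lyapunov argument on $E(t) := \|\bW(t) - \bW^{\star}\|_{\mathrm{F}}^2$, where $\bW(t)=\bW_{\text{pt}}+\bB(t)\bA(t)$. Since $\bA(t)\bA(t)^{\top}$ and $\bB(t)^{\top}\bB(t)$ stay positive definite, Theorem~\ref{theorem5} gives a well-defined and differentiable $F$, so $\bW(t)$ is differentiable. Its velocity is given by \eqref{eq17}:
\begin{equation*}
\frac{dE(t)}{dt} = 2\left\langle \frac{d\bW(t)}{dt}, \bW(t)-\bW^{\star}\right\rangle = -2\left\langle \nabla_{\bW}\mL(t), \bW(t)-\bW^{\star}\right\rangle + 2\left\langle \bP_{\bB(t)}^{\text{null}}\nabla_{\bW}\mL(t)\bP_{\bA(t)}^{\text{null}}, \bW(t)-\bW^{\star}\right\rangle .
\end{equation*}
The auxiliary matrix $\bX(t)$ does not appear here. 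The terms $\bB\bX\bA$ and $-\bB\bX\bA$ from $\bB\,d\bA/dt$ and $(d\bB/dt)\bA$ cancel, so \eqref{eq17} holds regardless of the balancing constraint. I would note this explicitly when justifying \eqref{eq17}.

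Next, apply assumption \eqref{eq18} to the second term:
\begin{equation*}
\frac{dE(t)}{dt} \le -2(1-\varepsilon)\left\langle \nabla_{\bW}\mL(t), \bW(t)-\bW^{\star}\right\rangle .
\end{equation*}
To bound the inner product from below, I use strong convexity (Condition 1) and optimality of $\bW^{\star}$. Condition 4 places $\bW^{\star}$ in the LoRA-representable set. If $\bW^{\star}$ is an interior or unconstrained minimizer, then $\nabla_{\bW}\mL(\bW^{\star})=\bm{0}$. Writing the strong convexity inequality twice with the roles of $\bW$ and $\bW^{\star}$ swapped and adding gives
\begin{equation*}
\left\langle \nabla_{\bW}\mL(\bW(t)) - \nabla_{\bW}\mL(\bW^{\star}), \bW(t)-\bW^{\star}\right\rangle \ge \mu \|\bW(t)-\bW^{\star}\|_{\mathrm{F}}^2 .
\end{equation*}
A slightly weaker but sufficient alternative avoids using $\nabla_{\bW}\mL(\bW^{\star})=\bm{0}$. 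Apply strong convexity at $\bW'=\bW(t)$ evaluated at $\bW^{\star}$, and use $\mL^{\star}\le \mL(t)$:
\begin{equation*}
\left\langle \nabla_{\bW}\mL(t), \bW(t)-\bW^{\star}\right\rangle \ge \mL(t)-\mL^{\star} + \frac{\mu}{2}\|\bW(t)-\bW^{\star}\|_{\mathrm{F}}^2 \ge \frac{\mu}{2}E(t).
\end{equation*}
This yields $dE/dt \le -\mu(1-\varepsilon)E$, which is exactly the rate stated in the theorem. I will use this second route, since it matches the constant and only needs $\mL^{\star}\le\mL(t)$ (the global minimality of $\bW^{\star}$).

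Grönwall's inequality applied to $e^{\mu(1-\varepsilon)t}E(t)$, which is nonincreasing, then gives $E(t)\le e^{-\mu(1-\varepsilon)t}E(0)$.

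The main obstacle is regularity rather than the estimate itself. Strong convexity is only assumed on the feasible region $\mW$, so I must check that both $\bW(t)$ and $\bW^{\star}$ lie in $\mW$ where the inequality is invoked. For $\bW(t)$ this follows from the positive-definiteness hypothesis. For $\bW^{\star}$ it follows from Condition 4, possibly after interpreting Condition 1 as holding for pairs involving $\bW^{\star}$. The trajectory's existence for all $t\ge 0$ is assumed in the statement, and differentiability of $F$ was discussed after \eqref{eq13}.
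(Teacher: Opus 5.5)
Your proposal is correct and follows the paper's proof essentially step for step. You differentiate $\|\bW(t)-\bW^{\star}\|_{\mathrm{F}}^2$ using \eqref{eq17}, absorb the projected term via \eqref{eq18}, and lower-bound $\langle \nabla_{\bW}\mL(t), \bW(t)-\bW^{\star}\rangle$ by $\frac{\mu}{2}\|\bW(t)-\bW^{\star}\|_{\mathrm{F}}^2$ using strong convexity together with $\mL^{\star}\le\mL(t)$ (exactly the paper's route), then conclude by Gr\"onwall; your remarks that the $\bB\bX\bA$ terms cancel and that Condition~1 must be applicable at $\bW^{\star}$ are useful clarifications the paper leaves implicit.
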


The proof of \Cref{theorem1} is provided in \Cref{appendix1.1}. The condition in \eqref{eq18} requires that the projection-induced deviation in the ODELoRA dynamics be sufficiently controlled relative to the full gradient direction. In particular, when $\varepsilon \in [0,1)$, the dynamic $\frac{d \bW(t)}{dt}$ is dominated by the negative gradient $-\nabla_{\bW} \mL(t)$, ensuring that it remains a descent direction for $\bW(t)$. As a consequence, the trajectory $\bW(t)$ monotonically approaches the unique minimizer $\bW^{\star}$. Under this condition, ODELoRA can be viewed as an effective approximation of the gradient flow corresponding to full fine-tuning, up to a controlled perturbation arising from the low-rank constraint.

We then analyze the convergence of the discretized sequence $(\bA_t,\bB_t)$ generated by ODELoRA under various discretization schemes, including Euler, RK2, and RK4. Similarly, we denote $\bW_t := \bW_{\text{pt}} + \bB_t\bA_t$. We also define the projection matrices onto the null spaces of $\bA_t$ and $\bB_t$, as $\bP_{\bA_t}^{\text{null}}$ and $\bP_{\bB_t}^{\text{null}}$, respectively, analogous to \eqref{eq21} and \eqref{eq22}.
The following assumption characterizes the boundedness of the sequence $(\bA_t,\bB_t)$, thereby ensuring the numerical stability of the algorithm and guaranteeing the uniqueness of the solution $\bX_t$ to the Sylvester equation associated with $(\bA_t,\bB_t)$.

\begin{assumption}
\label{assumption2}
    Let $\{(\bA_t,\bB_t)\}_{t \ge 0}$ be the sequence generated by a time-discretized \textsc{ODELoRA} method using Euler, RK2, or RK4.
    We assume that the following boundedness and non-degeneracy conditions hold for all $t \ge 0$:
    there exist constants $0 < \gamma_{\min}\leq \gamma_{\max}$, such that $\gamma_{\min}\bI \preceq \bA_t\bA_t^{\top}\preceq\gamma_{\max}\bI$ and $\gamma_{\min}\bI \preceq \bB_t^{\top}\bB_t\preceq\gamma_{\max}\bI$.
    Moreover, all intermediate-stage iterates produced by the RK2 and RK4 schemes also satisfy the above conditions.

\end{assumption}

Under \Cref{assumption2}, we let the feasible region $\mathcal{W}$ in \Cref{assumption1} to be $\{\bW = \bW_{\text{pt}}+\bB\bA: \gamma_{\min}\bI \preceq \bA\bA^{\top}\preceq\gamma_{\max}\bI~\text{and}~\gamma_{\min}\bI \preceq \bB^{\top}\bB\preceq\gamma_{\max}\bI\}$.

In the next theorem, we extend \Cref{theorem1} to  discretized ODELoRA algorithms with Euler, RK2, and RK4 solvers, and show that these numerical schemes also enjoy linear convergence guarantees under appropriate choices of the step size.

\begin{theorem}
    \label{theorem2}
    Suppose that Assumptions \ref{assumption1} and \ref{assumption2} hold. 
    Let $\{(\bA_t,\bB_t)\}_{t\geq0}$ be the sequence generated by ODELoRA using Euler~\eqref{eq14}, RK2~\eqref{eq15}, and RK4~\eqref{eq16}. 
    Assume that
    \begin{equation}
    \label{eq19}
        \left\langle \bP_{\bB_t}^{\text{null}}\nabla_{\bW} \mL_{t}\bP_{\bA_t}^{\text{null}}, \nabla_{\bW} \mL_{t}\right\rangle \leq \varepsilon \left\|\nabla_{\bW} \mL_{t}\right\|_{\mathrm{F}}^{2},
    \end{equation}
    holds for all $t \in \mathbb{Z}_{+}$ and some $\varepsilon \in [0,1)$.
    Then the iterates generated by ODELoRA with the solver $\mathrm{sol} \in \{\mathrm{Euler},\mathrm{RK2},\mathrm{RK4}\}$ converge linearly in objective value. 
    In particular, there exists a constant $C^{\mathrm{sol}} > 0$ such that, if the step size satisfies $h^{\mathrm{sol}} \le C^{\mathrm{sol}}/M$, then
    \begin{equation}
    \label{eq23}
    \mathcal{L}_t^{\mathrm{sol}} - \mathcal{L}^{\star}
    \;\le\;
    \left(1 - (1-\varepsilon)\mu\, h^{\mathrm{sol}}\right)^{t}
    \left(\mathcal{L}_0^{\mathrm{sol}} - \mathcal{L}^{\star}\right).
    \end{equation}
    Here, the constant $C^{\mathrm{sol}}$ depends only (polynomially) on $L$, $\gamma_{\max}$, and $\gamma_{\min}$.
\end{theorem}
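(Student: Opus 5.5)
The plan is to treat each discretization as a perturbed gradient step on $\bW$ and apply the descent lemma, with $M$-smoothness (Condition 2 of \Cref{assumption1}) and the PL inequality implied by $\mu$-strong convexity. Write $\bG_t := \nabla_{\bW}\mL_t$ and $\bE_t := \bP_{\bB_t}^{\text{null}}\bG_t\bP_{\bA_t}^{\text{null}}$.

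\textbf{Euler.} By \eqref{eq17} applied at $(\bA_t,\bB_t)$, one step gives
\[
\bW_{t+1}-\bW_t = h\left(-\bG_t+\bE_t\right) + h^2 F_{\bB}(\bA_t,\bB_t)F_{\bA}(\bA_t,\bB_t),
\]
since the product $(\bB_t+hF_{\bB})(\bA_t+hF_{\bA})$ produces an exact quadratic remainder. Condition \eqref{eq19} then gives $\langle \bG_t, -\bG_t+\bE_t\rangle \le -(1-\varepsilon)\|\bG_t\|_{\mathrm F}^2$. It also gives $\|-\bG_t+\bE_t\|_{\mathrm F}\le\|\bG_t\|_{\mathrm F}$, because $-\bG_t+\bE_t$ is $-\bG_t$ minus an orthogonal projection of it.

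Next I bound $\|F_{\bA}\|_{\mathrm F},\|F_{\bB}\|_{\mathrm F}\le c_1\|\bG_t\|_{\mathrm F}$ with $c_1$ polynomial in $\gamma_{\max}^{1/2}$ and $\gamma_{\min}^{-1}$. For the pseudo-inverse terms, \Cref{assumption2} gives $\|(\bB^{\top}\bB)^{-1}\bB^{\top}\|\le\gamma_{\min}^{-1/2}$. For $\bX_t$ from \eqref{eq33}, $\bH_t\succeq 2\gamma_{\min}\bI$, so the Sylvester operator $\bX\mapsto\bH\bX+\bX\bH$ has inverse of norm at most $(4\gamma_{\min})^{-1}$, which yields $\|\bX_t\|_{\mathrm F}\lesssim \gamma_{\max}^{1/2}\gamma_{\min}^{-3/2}\|\bG_t\|_{\mathrm F}$. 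Hence the quadratic term satisfies $\|h^2F_{\bB}F_{\bA}\|_{\mathrm F}\le h^2c_1^2\|\bG_t\|_{\mathrm F}^2\le h^2c_1^2L\|\bG_t\|_{\mathrm F}$, using the Lipschitz bound $\|\bG_t\|_{\mathrm F}\le L$ (Condition 3).

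Putting this into the descent lemma gives
\[
\mL_{t+1}\le\mL_t-h(1-\varepsilon)\|\bG_t\|_{\mathrm F}^2+h^2\left(c_1^2L+M\right)\|\bG_t\|_{\mathrm F}^2+\mathcal O(h^4)\|\bG_t\|_{\mathrm F}^2 .
\]
If $h\le C/M$ with $C$ small in terms of $L,\gamma_{\max},\gamma_{\min}$, the error terms are at most $\tfrac{1-\varepsilon}{2}h\|\bG_t\|_{\mathrm F}^2$. Then the PL inequality $\|\bG_t\|_{\mathrm F}^2\ge2\mu(\mL_t-\mL^\star)$ gives a contraction factor $1-(1-\varepsilon)\mu h$. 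This matches \eqref{eq23} exactly once the factor-2 bookkeeping is arranged; if the constants don't line up, I would absorb the factor into $C^{\mathrm{sol}}$.

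\textbf{RK2 and RK4.} I write $\bW_{t+1}-\bW_t=h(-\bG_t+\bE_t)+\bm{R}_t$ and aim for $\|\bm{R}_t\|_{\mathrm F}\le h^2c\|\bG_t\|_{\mathrm F}$. The stage values $F(\bA_t^{(i)},\bB_t^{(i)})$ differ from $F(\bA_t,\bB_t)$ by amounts controlled by a Lipschitz bound for $F$ on the region of \Cref{assumption2}, which also covers all stages. That Lipschitz bound combines $M$-smoothness of $\nabla\mL$, Lipschitzness of $(\bB^{\top}\bB)^{-1}\bB^{\top}$ and of the projections (via $\gamma_{\min}^{-1}$ bounds), and Lipschitzness of the Sylvester solution map. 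The stage displacements are themselves $\mathcal O(h\|\bG_t\|_{\mathrm F})$, so each $F$ difference is $\mathcal O(h)\|\bG_t\|_{\mathrm F}$. The product $\bB\bA$ also yields cross terms of order $h^2\|\bG_t\|_{\mathrm F}^2\le h^2L\|\bG_t\|_{\mathrm F}$. The RK weights sum to one, so the leading term is exactly $h(-\bG_t+\bE_t)$.

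\textbf{Main obstacle.} The difficulty is this uniform Lipschitz bound for $F$, especially for the Sylvester solution $\bX$. The stage differences involve $\|\bG^{(i)}-\bG_t\|_{\mathrm F}\le M\|\bW^{(i)}-\bW_t\|_{\mathrm F}$, which introduces $M$. That is why the step size is taken as $h\le C/M$ (so that $hM\le C$), leaving $C$ to depend only on $L,\gamma_{\max},\gamma_{\min}$. I would bound the variation of $\bX$ by differentiating the Sylvester equation and inverting the operator $\bX\mapsto\bH\bX+\bX\bH$ using $\bH\succeq2\gamma_{\min}\bI$. The remaining terms are polynomial bookkeeping.
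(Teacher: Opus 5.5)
Your proposal is correct and follows essentially the same route as the paper: you bound $\|F\|_{\mathrm F}$ and the Lipschitz constant of $F$ in terms of $\|\nabla_{\bW}\mL_t\|_{\mathrm F}$, using the inverse Sylvester operator bound $(4\gamma_{\min})^{-1}$ and the pseudo-inverse/projection estimates, and then use the Lipschitz bound on $F$ to show the RK increment differs from $hF(\bZ_t)$ by $\mathcal{O}(h^2)\|\nabla_{\bW}\mL_t\|_{\mathrm F}$; you then write $\bW_{t+1}-\bW_t=h\bigl(-\nabla_{\bW}\mL_t+\bP_{\bB_t}^{\text{null}}\nabla_{\bW}\mL_t\bP_{\bA_t}^{\text{null}}\bigr)+\bE$ with a small residual $\bE$, and close with the descent lemma, \eqref{eq19}, and the PL inequality. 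Strictly speaking, the $M$-independent error terms (such as your $h^2c_1^2L$ contribution) and the factor $1-\varepsilon$ make the admissible step constant depend on more than $L,\gamma_{\max},\gamma_{\min}$, but the paper's own choice $h\le(1-\varepsilon)/(2C_{1,6})$ has exactly the same looseness, so this is not a defect of your argument relative to the paper's.
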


The proof of \Cref{theorem2} is deferred to \Cref{appendix1.2}.
The condition in \eqref{eq19} is essential for the convergence of discretized ODELoRA, as it quantifies the relative dominance of the true gradient direction within the column and row spaces spanned by $\bA_t$ and $\bB_t$. Owing to the bilinear structure of the LoRA parameterization, gradients may fall into the null spaces of these factors and thus be ineffective in driving progress toward the optimum.

Importantly, when $0\leq \varepsilon < 1$, condition \eqref{eq19} guarantees that a sufficient fraction of the gradient energy lies in the effective subspace, ensuring descent of the objective. In contrast, the extreme case $\varepsilon=1$ implies that the entire gradient lies in the null space of the LoRA parameterization, yielding $\frac{d\bW(t)}{dt} = 0$ and causing the algorithm to stall. This highlights the necessity of condition \eqref{eq19} for achieving meaningful convergence behavior.

Here, the constant $C^{\mathrm{sol}}$ depends only (polynomially) on $L$, $\gamma_{\max}$, and $\gamma_{\min}$, implying that $h = \mathcal{O}(\frac{1}{M})$.
Compared to the standard convergence analysis for strongly convex optimization problems, the step size in our result also scales linearly with the smoothness constant $M$, which is consistent with classical results~\cite{boyd2004convex}.
However, our analysis introduces additional dependencies on the Lipschitz constant $L$ as well as the boundedness parameters $\gamma_{\max}$ and $\gamma_{\min}$.
These additional dependencies arise from the multiplicative structure of the low-rank factors $\bA$ and $\bB$.
In particular, they are required to ensure the stability and well-conditioning of the factorized parameterization throughout the optimization process, so that the product $\bB_t\bA_t$ continues to provide a valid and stable representation of the full matrix.

\subsection{ODELoRA for Matrix Sensing}
In this subsection, we extend the results from the previous subsection to matrix sensing problems, which provide a canonical and analytically tractable setting for studying low-rank optimization dynamics. We consider the following matrix sensing problem:
\begin{equation*}
    \min_{\bW} \frac{1}{2}\left\| \bW \bS - \bY\right\|_{\mathrm{F}}^{2},
\end{equation*}
where $\bS \in \mathbb{R}^{n \times o}$ is the sensing matrix and $\bY\in \mathbb{R}^{m \times o}$ denotes the observed measurements. Under the LoRA fine-tuning framework, this problem is reformulated as
\begin{equation}
\label{eq24}
    \min_{\bA, \bB} \frac{1}{2}\left\| \left(\bW_{\text{pt}} + \bB \bA\right) \bS - \bY\right\|_{\mathrm{F}}^{2}.
\end{equation}
We assume that the measurements are noiseless and generated from a low-rank ground truth, i.e., $\bY = \left(\bW_{\text{pt}} + \bB^{\star} \bA^{\star} \right) \bS \in \mathbb{R}^{m \times o}$. Under this assumption, the objective can be equivalently written as
\begin{equation}
\label{eq26}
    \min_{\bA, \bB} \frac{1}{2}\left\| \left(\bB \bA - \bB^{\star} \bA^{\star}\right) \bX\right\|_{\mathrm{F}}^{2}.
\end{equation}

To characterize the geometry of this problem, we introduce the restricted isometry property (RIP), which is standard in matrix sensing and low-rank recovery.
\begin{definition}[Restricted Isometry Property]
    We say that the matrix $\bS$ satisfies the rank-$r$ restricted isometry property (RIP) with restricted isometric constant (RIC) $\delta \in [0,1)$, if
    for any matrix $\bW$ with rank less than $r$, the following inequality holds:
    \begin{equation*}
        (1-\delta)\left\|\bW \right\|_{\mathrm{F}}^2 \leq \left\|\bW \bS \right\|_{\mathrm{F}}^{2} \leq (1+\delta)\left\|\bW \right\|_{\mathrm{F}}^2.
    \end{equation*}
\end{definition}

Intuitively, the rank-$r$ RIP requires that the linear map $\bW \mapsto \bW\bS$ approximately preserves the Frobenius norm of all matrices with rank less than $r$. This condition ensures that low-rank matrices do not collapse or become excessively distorted after multiplication by $\bS$, so their geometric structure is preserved up to a multiplicative factor $\delta$. As a result, rank-$r$ RIP guarantees identifiability of low-rank solutions and provides a well-conditioned landscape for optimization, which is essential for the feasibility and theoretical analysis of matrix sensing and related low-rank recovery algorithms. Assumptions of this form are standard in matrix sensing and signal processing; see, e.g.,~\cite{candes2012exact,candes2010power,recht2010guaranteed}.

\begin{theorem}
\label{theorem3}
    Suppose that the sensing matrix $\bS$ satisfies the rank-$2r$ RIP with RIC $\delta \in [0,1)$, \Cref{assumption2} holds, and that
    \begin{equation}
    \label{eq27}
        \frac{\delta \frac{\sigma_{\max}(\bA^{\star})}{\sigma_{\min}(\bA^{\star})}  +  \frac{1}{\sqrt{1-\delta} \cdot \sigma_{\min}(\bA^{\star}) \sigma_{\min}(\bB^{\star})} \left\| \left(\bB_0\bA_0 - \bB^{\star}\bA^{\star}\right)\bS\right\|_{\mathrm{F}}}{1-\delta} := \varepsilon < 1.
    \end{equation}
    Then, ODELoRA and its discretizations (Euler, RK2, and RK4) achieve linear convergence with $\mu = 1-\delta$ and $M = 1+\delta$, i.e., \eqref{eq23} holds for \eqref{eq24}.
\end{theorem}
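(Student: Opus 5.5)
The plan is to reduce \Cref{theorem3} to \Cref{theorem1} and \Cref{theorem2}. Concretely, I would show that the matrix sensing objective $\mL(\bW)=\frac12\|(\bW-\bW^{\star})\bS\|_{\mathrm{F}}^2$ satisfies the needed parts of \Cref{assumption1} on the relevant feasible set, and that the projection conditions \eqref{eq18} and \eqref{eq19} hold with the $\varepsilon$ defined in \eqref{eq27}.

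\textbf{Step 1: strong convexity and smoothness.} Every iterate has $\bW_t-\bW^{\star}=\bB_t\bA_t-\bB^{\star}\bA^{\star}$, which has rank at most $2r$. The same holds for differences of any two feasible points, since each is $\bW_{\text{pt}}$ plus a rank-$r$ matrix. The rank-$2r$ RIP then gives the two restricted inequalities
\[
(1-\delta)\|\bW-\bW'\|_{\mathrm{F}}^2\le \|(\bW-\bW')\bS\|_{\mathrm{F}}^2\le(1+\delta)\|\bW-\bW'\|_{\mathrm{F}}^2 .
\]
These are exactly the restricted strong convexity and smoothness inequalities used in the proofs of \Cref{theorem1} and \Cref{theorem2}, with $\mu=1-\delta$ and $M=1+\delta$. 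Condition 4 holds by the noiseless model. The Lipschitz constant $L$ on the bounded feasible region of \Cref{assumption2} is finite, because the gradient $\nabla_{\bW}\mL=(\bW-\bW^{\star})\bS\bS^{\top}$ is bounded there.

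\textbf{Step 2: the projection condition.} Write $\bE_t=\bW_t-\bW^{\star}$ and $\bG_t=\bE_t\bS\bS^{\top}$. I would split
\[
\bG_t=\bE_t+(\bG_t-\bE_t),
\]
where the second term is controlled by RIP. The standard restricted-isometry argument via polarization gives $|\langle(\bS\bS^{\top}-\bI)$ applied to $\bE_t$, $\bZ\rangle|\le\delta\|\bE_t\|_{\mathrm{F}}\|\bZ\|_{\mathrm{F}}$ for low-rank $\bZ$.

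For the first term, I would use $\bP_{\bB_t}^{\text{null}}\bB_t=0$ and $\bA_t\bP_{\bA_t}^{\text{null}}=0$ to get
\[
\bP_{\bB_t}^{\text{null}}\bE_t\bP_{\bA_t}^{\text{null}}=-\bP_{\bB_t}^{\text{null}}\bB^{\star}\bA^{\star}\bP_{\bA_t}^{\text{null}}.
\]
This quantity is second order in the distance to the solution. Writing $\bB^{\star}\bA^{\star}=\bB^{\star}(\bA^{\star}-\bQ\bA_t)+\dots$ and using the singular values of $\bA^{\star},\bB^{\star}$, one expects
\[
\|\bP_{\bB_t}^{\text{null}}\bB^{\star}\bA^{\star}\bP_{\bA_t}^{\text{null}}\|_{\mathrm{F}}\lesssim \frac{\|\bE_t\|_{\mathrm{F}}^2}{\sigma_{\min}(\bA^{\star})\sigma_{\min}(\bB^{\star})}.
\]
This is the classical Wedin/Davis--Kahan-type bound $\|\bP^{\perp}\bM\bP^{\perp}\|\le\|\bE\|^2/\sigma_r(\bM)$, and I would also use $\sigma_r(\bB^{\star}\bA^{\star})\ge\sigma_{\min}(\bA^{\star})\sigma_{\min}(\bB^{\star})$.

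Taking the inner product with $\bE_t$, respectively with $\bG_t$, the two contributions become:
\begin{itemize}
\item a $\delta$ term, weighted by a condition-number factor $\sigma_{\max}(\bA^{\star})/\sigma_{\min}(\bA^{\star})$ that arises when transferring the RIP error through the projections;
\item a term proportional to $\|\bE_t\|_{\mathrm{F}}/(\sigma_{\min}(\bA^{\star})\sigma_{\min}(\bB^{\star}))$ times $\|\bE_t\|_{\mathrm{F}}^2$.
\end{itemize}
Dividing by the lower bound $\langle\bG_t,\bE_t\rangle\ge(1-\delta)\|\bE_t\|_{\mathrm{F}}^2$, and converting $\|\bE_t\|_{\mathrm{F}}\le\|\bE_t\bS\|_{\mathrm{F}}/\sqrt{1-\delta}$, yields exactly the ratio on the left of \eqref{eq27}, evaluated at time $t$.

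\textbf{Step 3: induction on $t$.} The bound from Step 2 depends on $\|\bE_t\bS\|_{\mathrm{F}}=\sqrt{2(\mL_t-\mL^{\star})}$. It therefore suffices to know that this quantity is nonincreasing. Assume \eqref{eq18}/\eqref{eq19} hold up to time $t$ with constant $\varepsilon$. Then \Cref{theorem1}/\Cref{theorem2} give monotone decrease of $\mL_t$, so $\|\bE_{t+1}\bS\|_{\mathrm{F}}\le\|\bE_0\bS\|_{\mathrm{F}}$, and the condition propagates to $t+1$ with the same $\varepsilon$. Applying the theorems then gives \eqref{eq23}.

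The main obstacle is Step 2. It requires controlling $\bP_{\bB_t}^{\text{null}}\bB^{\star}\bA^{\star}\bP_{\bA_t}^{\text{null}}$ sharply enough to produce the stated constants, in particular the exact $\sigma_{\max}(\bA^{\star})/\sigma_{\min}(\bA^{\star})$ factor on $\delta$. It also requires handling \eqref{eq19}, which is phrased against $\bG_t$ rather than $\bE_t$. For this I would first try relating the two through $\|\bG_t\|_{\mathrm{F}}\le(1+\delta)\|\bE_t\|_{\mathrm{F}}$ and $\langle \bG_t,\bE_t\rangle\ge(1-\delta)\|\bE_t\|_{\mathrm{F}}^2$, accepting slightly weaker constants if needed.
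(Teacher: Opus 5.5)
Your proposal is correct and follows the paper's skeleton. RIP yields \Cref{assumption1} with $\mu=1-\delta$, $M=1+\delta$. The identity $\bP_{\bB_t}^{\text{null}}\nabla_{\bW}\mL_t\bP_{\bA_t}^{\text{null}}=-\bP_{\bB_t}^{\text{null}}\bB^{\star}\bA^{\star}\bS\bS^{\top}\bP_{\bA_t}^{\text{null}}$ is split via $\bS\bS^{\top}=\bI+(\bS\bS^{\top}-\bI)$ into a $\delta$-term and a quadratic term. One then divides by $(1-\delta)\|\bE_t\|_{\mathrm{F}}$ and uses $\|\bE_t\|_{\mathrm{F}}\le\|\bE_t\bS\|_{\mathrm{F}}/\sqrt{1-\delta}$, and \eqref{eq27} propagates because $\mL_t$ is nonincreasing.

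The genuine difference is the quadratic term. You use $\bm{M}=\bm{M}\bm{M}^{\dagger}\bm{M}$ for $\bm{M}=\bB^{\star}\bA^{\star}$, together with $\bP_{\bB_t}^{\text{null}}\bm{M}=-\bP_{\bB_t}^{\text{null}}\bE_t$ and $\bm{M}\bP_{\bA_t}^{\text{null}}=-\bE_t\bP_{\bA_t}^{\text{null}}$. This gives $\|\bE_t\|_{\mathrm{F}}^2/\sigma_r(\bB^{\star}\bA^{\star})\le\|\bE_t\|_{\mathrm{F}}^2/(\sigma_{\min}(\bA^{\star})\sigma_{\min}(\bB^{\star}))$ in one stroke. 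The paper instead proves the one-sided \Cref{lemma3}, via the explicit choice $\bQ=-\bA_t\bA^{\star\top}(\bA^{\star}\bA^{\star\top})^{-1}$, and multiplies the two one-sided bounds. Your route is cleaner and slightly sharper; the paper's one-sided bound has the merit of being reused for the $\delta$-term.

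The points you flag as obstacles are not real ones.
\begin{itemize}
\item \textbf{The condition-number factor.} $\sigma_{\max}(\bA^{\star})/\sigma_{\min}(\bA^{\star})$ enters the paper only through $\|\bP_{\bB_t}^{\text{null}}\bB^{\star}\bA^{\star}\|_{\mathrm{F}}\le\sigma_{\max}(\bA^{\star})\|\bP_{\bB_t}^{\text{null}}\bB^{\star}\|_{\mathrm{F}}$ combined with \Cref{lemma3}. You need not reproduce it, since $\|\bP_{\bB_t}^{\text{null}}\bB^{\star}\bA^{\star}\|_{\mathrm{F}}=\|\bP_{\bB_t}^{\text{null}}\bE_t\|_{\mathrm{F}}\le\|\bE_t\|_{\mathrm{F}}$. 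Because that factor is at least $1$, the resulting smaller constant still gives \eqref{eq19} with the stated $\varepsilon$.
\item \textbf{Rank bookkeeping.} None is needed in the polarization step. Rank-one test matrices $\bu\bm{v}^{\top}$ in the RIP definition already force $(1-\delta)\bI\preceq\bS\bS^{\top}\preceq(1+\delta)\bI$. Hence $\|\bZ(\bS\bS^{\top}-\bI)\|_{\mathrm{F}}\le\delta\|\bZ\|_{\mathrm{F}}$ for every $\bZ$, including $\bZ=\bE_t$ of rank $2r$.
\item \textbf{Handling \eqref{eq19}.} With your $\bm{G}_t=\bE_t\bS\bS^{\top}$, bound $\langle\bP_{\bB_t}^{\text{null}}\bm{G}_t\bP_{\bA_t}^{\text{null}},\bm{G}_t\rangle\le\|\bP_{\bB_t}^{\text{null}}\bm{G}_t\bP_{\bA_t}^{\text{null}}\|_{\mathrm{F}}\|\bm{G}_t\|_{\mathrm{F}}$. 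Then use the lower bound $\|\bm{G}_t\|_{\mathrm{F}}\ge(1-\delta)\|\bE_t\|_{\mathrm{F}}$, which follows from $\langle\bm{G}_t,\bE_t\rangle\ge(1-\delta)\|\bE_t\|_{\mathrm{F}}^2$ by Cauchy--Schwarz. This is the paper's step and gives exactly the constant in \eqref{eq27}. The upper bound $(1+\delta)\|\bE_t\|_{\mathrm{F}}$ you mention is unnecessary and would only lose constants.
\item \textbf{Step 3, discrete case.} Invoke the one-step descent inequality from the proof of \Cref{theorem2}, which needs \eqref{eq19} only at step $t$. The theorem's statement presupposes \eqref{eq19} for all $t$, so citing it directly is circular.
\item \textbf{Step 3, continuous case.} $\frac{d}{dt}\mL(\bW(t))=-\|\nabla_{\bW}\mL\|_{\mathrm{F}}^2+\|\bP_{\bB(t)}^{\text{null}}\nabla_{\bW}\mL\,\bP_{\bA(t)}^{\text{null}}\|_{\mathrm{F}}^2\le0$ holds unconditionally, and this is what propagates \eqref{eq18}. \Cref{theorem1} controls only $\|\bW(t)-\bW^{\star}\|_{\mathrm{F}}$, not $\mL$.
\end{itemize}
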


The proof follows directly from \Cref{theorem2}. In particular, the constant $\varepsilon$ results from computing the ratio $\left\langle \bP_{\bB_t}^{\text{null}}\nabla_{\bW} \mL_{t}\bP_{\bA_t}^{\text{null}}, \nabla_{\bW} \mL_{t}\right\rangle /  \left\|\nabla_{\bW} \mL_{t}\right\|_{\mathrm{F}}^{2}$  in \eqref{eq19}.
This ensures   the required dominance condition for the full-gradient component. As a result, both the continuous-time ODELoRA dynamics and its discretized counterparts (Euler, RK2, and RK4) enjoy linear convergence guarantees.

Compared with existing works on matrix sensing via low-rank factorization~\cite{xiong2024how,tu2016low}, the proposed ODELoRA-based methods (using Euler, RK2, and RK4 discretizations) are analyzed at the level of the full parameter matrix $\bW_t$. 
In contrast, prior analyses of classical gradient descent in matrix sensing predominantly study the optimization dynamics of the individual low-rank factors $(\bA_t,\bB_t)$. As a result, classical analyses characterize convergence indirectly through factor-level updates, whereas our analysis directly governs the dynamics of the full parameter matrix $\bW_t$.
This discrepancy arises from the fundamentally different training dynamics, where ODELoRA explicitly mimics the gradient flow of full fine-tuning while respecting the low-rank structure, whereas classical LoRA optimizes $(\bA,\bB)$ individually without exploiting their multiplicative coupling. 
This perspective enables a direct analysis of convergence in the full parameter space while respecting the low-rank structure.

If we specialize to the matrix factorization setting by taking $\bS = \bI$ and $\delta = 0$, the sensing operator becomes an exact isometry.
In this case, global linear convergence only requires a sufficiently good initialization:
\begin{equation*}
    \frac{1}{\sigma_{\min}(\bA^{\star}) \sigma_{\min}(\bB^{\star})} \left\| \bB_0\bA_0 - \bB^{\star}\bA^{\star}\right\|_{\mathrm{F}} := \varepsilon < 1,
\end{equation*}
in addition to a suitable step size.
This requirement is common in existing results on matrix factorization~\cite{park2018finding,wei2016guarantees,tu2016low}.

\subsection{Comparisons of Numerical Solvers}
\label{section4.3}
In general, the linear convergence results established above do not explicitly distinguish the advantages of RK4 over the Euler method. This phenomenon is well recognized in the optimization literature and highlights a fundamental difference between classical optimization analysis and numerical ODE analysis. For strongly convex objectives, the underlying continuous-time dynamics converge to a stationary point as $t \to \infty$, and linear convergence rates primarily characterize asymptotic behavior rather than fine-grained discretization accuracy. In contrast, numerical ODE analysis focuses on accurately approximating transient trajectories, where higher-order methods can offer substantial advantages.

To illustrate the numerical advantages of RK4 over Euler's method, we compare the discretization error bounds of \textsc{ODELoRA} obtained from classical numerical analysis of ODEs.
We assume that the objective function $\mathcal{L}$ is four-times continuously differentiable.
Under the boundedness condition in \Cref{assumption2}, the induced field $F$ is therefore also four-times continuously differentiable with respect to $\bW$.
For RK4 discretization~\cite{leveque2007finite,rosser1967runge,Hildebrand1987}, we obtain
\begin{equation*}
    \left\| \bW_{t} - \bW^{\star}\right\|_{\mathrm{F}} \leq \exp\left(- 0.5 (1-\varepsilon) \mu h^{\text{RK4}} t\right) \left\| \bW_{0} - \bW^{\star}\right\|_{\mathrm{F}} + \mathcal{O}\left(\left(h^{\text{RK4}}\right)^4\right),
\end{equation*}
whereas for the Euler method, we have
\begin{equation*}
    \left\| \bW_{t} - \bW^{\star}\right\|_{\mathrm{F}} \leq \exp\left(- 0.5 (1-\varepsilon) \mu h^{\text{Eur}} t \right) \left\| \bW_{0} - \bW^{\star}\right\|_{\mathrm{F}} + \mathcal{O}\left(h^{\text{Eur}}\right).
\end{equation*}
Here, we omit constant factors in the discretization error terms, which depend on the smoothness constants of $\mathcal{L}$ up to fourth-order derivatives.
Although both methods exhibit the same linear contraction structure in the leading term, RK4 enjoys a significantly smaller discretization error due to its higher-order accuracy. Consequently, there exists a threshold step size $\Bar{h}>0$ such that whenever $h^{\text{Euler}}, h^{\text{RK4}} \leq \Bar{h}$, the RK4 update enjoys a small error bound than the Euler update. This advantage becomes increasingly pronounced when accurate tracking of the continuous-time trajectory is desired.

In summary, while higher-order solvers such as RK4 do not improve the asymptotic linear convergence rate predicted by optimization theory, they provide superior numerical accuracy and stability in approximating the ODELoRA dynamics. This distinction explains why Runge--Kutta methods are often favored in practice, even when classical convergence guarantees appear similar.

\subsection{Stability and Efficiency of ODELoRA}
\label{section4.4}
In this section, we analyze the stability properties of discretized ODELoRA and its implications for stable feature learning. Our analysis follows the toy regression setup and the notion of stable feature learning introduced in \cite{hayou2024lora+}.

We consider the following regression problem:
\begin{equation*}
    \min_{\bW} \mL(\bW) := \left\| \bW\bs - \by\right\|^2,
\end{equation*}
where $(\bs,\by) \in \mathbb{R}^{n} \times \mathbb{R}^{m}$ denotes a single feature–label pair. Under the LoRA parameterization, this problem is reformulated as
\begin{equation}
\label{eq28}
    \min_{\bA,\bB} \mL(\bW_{\text{pt}}+ \bB\bA) = \left\| \left(\bW_{\text{pt}}+ \bB\bA\right)\bs - \by\right\|^2,
\end{equation}
where $\bW_{\text{pt}} \in \mathbb{R}^{m \times n}$ is the pretrained model parameter, $\bA \in \mathbb{R}^{r \times n}$, and $\bB \in \mathbb{R}^{m \times r}$ are trainable LoRA factors.
Let $\{(\bA_t,\bB_t)\}_{t\geq0}$ denote the sequence generated by ODELoRA with a numerical solver, such as Euler, RK2, or RK4, applied to \eqref{eq28}. Since Euler and RK2 can be viewed as simpler special cases, we focus on ODELoRA-RK4 for concreteness.
The corresponding one-step change in the model output along the input direction $\bs$ is given by
\begin{equation*}
    \begin{split}
        & \left(\bW_{\text{pt}}+ \bB_{t+1}\bA_{t+1}\right)\bs - \left(\bW_{\text{pt}}+ \bB_t\bA_t\right)\bs\\
        & = \left(\bB_{t+1}\bA_{t+1} - \bB_t\bA_t\right)\bs \\
        & = \frac{h}{6} F_{\bB}\left(\bA_t,\bB_t\right)\bA_t \bs + \frac{h}{6} \bB_t F_{\bA}\left(\bA_t,\bB_t\right)\bs + \frac{h}{3} F_{\bB}\left(\bA_t^{(1)},\bB_t^{(1)}\right)\bA_t \bs\\
        & \hspace{1em} + \frac{h}{3}\bB_t F_{\bA}\left(\bA_t^{(1)},\bB_t^{(1)}\right)\bs+ \frac{h}{3} F_{\bB}\left(\bA_t^{(2)},\bB_t^{(2)}\right)\bA_t \bs + \frac{h}{3}\bB_t F_{\bA}\left(\bA_t^{(2)},\bB_t^{(2)}\right)\bs\\
        & \hspace{1em} + \frac{h}{6}  F_{\bB}\left(\bA_t^{(3)},\bB_t^{(3)}\right)\bA_t \bs + \frac{h}{6}\bB_t F_{\bA}\left(\bA_t^{(3)},\bB_t^{(3)}\right)\bs\\
        & := \sum_{i=1}^{8} \varphi_{t}^{(i)},
    \end{split}
\end{equation*}
where $\{\varphi_{t}^{(i)}\}_{i=1}^{8}$ denotes the individual contributions from ODELoRA-RK4. 
Following \cite{hayou2024lora+}, stable feature learning is characterized by the relative scaling behavior of these update components with respect to the model size $n$. In particular, we impose the standard assumptions that $\left\|\bs\right\| = \Theta(1)$, $\left\|\bW_{\text{pt}} \bs - \by\right\| = \Theta(1)$, and that the output dimension satisfies $m = \Theta(n)$. Here, $\Theta\left(\cdot\right)$  means that the quantity of interest does not grow with $n$. More precisely, it denotes asymptotic scaling with respect to $n$, i.e., $f(n) = \Theta\left(g(n)\right)$ represents that $0 < \liminf_{n \to \infty} \big|\frac{f(n)}{g(n)} \big| \leq \limsup_{n \to \infty} \big|\frac{f(n)}{g(n)}\big| <+\infty$. 
Although these measured quantities may depend jointly on the model size $n$, the time variable $t$, and algorithm-dependent constants, we focus exclusively on their scaling behavior with respect to $n$, which is the dominant consideration in deep learning theory~\cite{jacot2018neural,lee2019wide}.

\begin{definition}
\label{definition1}
    We say that ODELoRA-RK4 achieves {\em stable feature learning} if $\big\|\varphi_{t}^{(i)}\big\| = \Theta(1)$, for all $i \in [8]$ and all $t \in \mathbb{N}_{+}$, where $\{\varphi_{t}^{(i)}\}_{i=1}^{8}$ denote the one-step update components defined above.
\end{definition}

\begin{remark}
The notion of stable feature learning for ODELoRA-RK4 is defined through the scaling behavior of the update components $\{\varphi_t^{(i)}\}_{i=1}^{8}$. This definition naturally extends to Euler and RK2 discretizations, which involve fewer intermediate update terms. Since RK4 introduces additional intermediate stages and therefore imposes stronger stability requirements, we restrict attention to the RK4 case here.

This condition ensures that, after a single update step, the change in the model output remains a bounded function of  the model size $n$. To see this, suppose that a one-step update component scales as $\Theta(n^{\alpha})$. For stable training, it must hold that $\alpha \leq 0$, otherwise, the model output grows unboundedly with increasing $n$, leading to numerical instability and training failure. On the other hand, if $\alpha < 0$, the magnitude of the update vanishes as the model size increases, resulting in excessively slow training in the large model regime. Therefore, to achieve both stability and efficiency, we require the critical scaling regime in which $\alpha = 0$, as stated in Definition~\ref{definition1}.
\end{remark}

We assume that the LoRA parameters are initialized such that the rows of $\bA_0$ satisfy $\left\|\bA_0(j,:) \right\| = \Theta(1)$ and $\left\|\bA_0\bs\right\| = \Theta(1)$, for $j \in [r]$, where $\bs$ denotes the input feature vector.
For $\bB_0$, we initialize it as a zero matrix, and after a step of updates, both $\bA$ and $\bB$ can be balanced to satisfy $\mathcal{M}$. 
This assumption is consistent with the widely adopted practice of initializing LoRA using the dominant singular vectors of the pre-trained weight matrix $\bW_{\mathrm{pt}}$~\cite{balazy2024lora,liu2024dora,meng2024pissa,lingam2024svft}. Empirical evidence suggests that pre-trained weights capture the principal subspace of each layer, so projections onto these dominant singular directions preserve both parameter scale and signal magnitude~\cite{balazy2024lora,liu2024dora,meng2024pissa,lingam2024svft,zhang2025loraone}. Consequently, the above conditions naturally hold for such singular-vector-based initializations.

\begin{theorem}
\label{theorem4}
ODELoRA with Euler, RK2, or RK4 discretization, achieves stable feature learning (\Cref{definition1}) with a constant step size $h = \Theta(1)$, and the constant does not depend on the model size $n$.
\end{theorem}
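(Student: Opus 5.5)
Because the statement concerns only the scaling of each $\varphi_t^{(i)}$ in $n$, the plan is to show that every ingredient of $F(\bA,\bB)$ in \eqref{eq5} is $\Theta(1)$ when measured along the relevant directions. Then each stage of the RK4 scheme \eqref{eq16} preserves this. Euler and RK2 follow as special cases with fewer stages.

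\textbf{Step 1 (gradient scaling).} For \eqref{eq28}, $\nabla_{\bW}\mL = 2\bm{e}\bs^{\top}$ with residual $\bm{e} := (\bW_{\text{pt}}+\bB\bA)\bs - \by$. This gradient is rank one, and at $t=0$ it has $\|\bm{e}\| = \Theta(1)$ and $\|\bs\|=\Theta(1)$.

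Substituting into \eqref{eq5} gives
\begin{equation*}
F_{\bA} = -2(\bB^{\top}\bB)^{-1}\bB^{\top}\bm{e}\,\bs^{\top} + \bX\bA, \qquad F_{\bB} = -2\bP_{\bB}^{\text{null}}\bm{e}\,(\bA\bs)^{\top}(\bA\bA^{\top})^{-1} - \bB\bX .
\end{equation*}
The right-hand side of the Sylvester equation \eqref{eq33} is $2(\bB^{\top}\bB)^{-1}\bB^{\top}\bm{e}(\bA\bs)^{\top} + (\text{transpose})$, an $r\times r$ matrix. Since $\bH \succeq 2\gamma_{\min}\bI$ by \Cref{assumption2}, the standard bound $\|\bX\| \le \|\text{RHS}\|/(2\lambda_{\min}(\bH))$ shows that $\|\bX\| = \Theta(1)$ whenever $\|\bB^{\top}\bm{e}\|$, $\|\bA\bs\|$ and the Gram matrices are $\Theta(1)$.

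\textbf{Step 2 (the $r\times r$ quantities are $n$-independent).} All the factors that appear in Step 1 live in $\mathbb{R}^{r}$ or $\mathbb{R}^{r\times r}$: $\bA\bs$, $\bB^{\top}\bm{e}$, $\bA\bA^{\top}=\bB^{\top}\bB$, and $\bX$. The initialization assumptions give $\|\bA_0\bs\|=\Theta(1)$ and rows of $\bA_0$ of size $\Theta(1)$, so $\bA_0\bA_0^{\top}=\Theta(1)$. After the balancing step the same holds for $\bB$, since $\|\bB\|^2=\|\bA\bA^{\top}\|$.

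Then
\begin{equation*}
\varphi^{(i)} \propto h\,\bB F_{\bA}\bs \quad\text{or}\quad h\,F_{\bB}\bA\bs ,
\end{equation*}
and each of these reduces to products of $\Theta(1)$ quantities. Concretely:
\begin{itemize}
\item $\bB(\bB^{\top}\bB)^{-1}\bB^{\top}\bm{e}\,\|\bs\|^2$ is a projection of $\bm{e}$ times $\Theta(1)$.
\item $\bB\bX\bA\bs$ is $\Theta(1)$.
\item $\bP^{\text{null}}_{\bB}\bm{e}\,(\bA\bs)^{\top}(\bA\bA^{\top})^{-1}\bA\bs$ is $\Theta(1)$.
\end{itemize}
Combined with $h=\Theta(1)$, the upper bound $\|\varphi_t^{(i)}\|=O(1)$ follows. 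The matching lower bound comes from \eqref{eq17}: the total output change equals $-h\big(\nabla\mL - \bP^{\text{null}}_{\bB}\nabla\mL\,\bP^{\text{null}}_{\bA}\big)\bs$ to leading order, whose norm is at least $2h(1-\varepsilon')\|\bm{e}\|\|\bs\|^2$. The individual terms are fixed $\Theta(1)$ multiples of this, so none vanishes as $n\to\infty$.

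\textbf{Step 3 (induction over time and RK stages).} Next I would show that the $\Theta(1)$ scalings propagate. Each intermediate stage $\bA^{(k)}=\bA_t + c h F_{\bA}$ changes $\bA\bs$, $\bA\bA^{\top}$, $\bB^{\top}\bm{e}$ and $\bm{e}$ only by $\Theta(1)$ amounts, by the same computation as in Step 2. Hence the residual $\bm{e}$, the $r$-dimensional projections, and the Gram matrices remain $\Theta(1)$, with \Cref{assumption2} keeping them nondegenerate. Induction on $t$ and on the four stages then yields the claim for all $t$ and $i\in[8]$.

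\textbf{Main obstacle.} The delicate part is certifying that the Gram matrices $\bA_t\bA_t^{\top}$ stay $\Theta(1)$ in $n$, rather than growing like $\|\bs\|^2$ times a dimension factor. The update $F_{\bA}$ contains the rank-one term $\cdot\,\bs^{\top}$, which adds $\Theta(h^2\|\bs\|^2)=\Theta(1)$ to $\bA\bA^{\top}$ per step, so for fixed $t$ this is fine. The key is that there is no hidden $\sqrt{n}$ from $\|\bm{e}\|$, which holds because $m=\Theta(n)$ enters only through norms that are assumed $\Theta(1)$. I would handle this by tracking everything through the $r$-dimensional coordinates rather than entrywise, as Hayou et al.\ do.
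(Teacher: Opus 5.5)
Your overall route is the paper's: use the rank-one gradient $\nabla_{\bW}\mL = 2\bm{e}\bs^{\top}$, reduce every factor of $F$ to $r$-dimensional quantities ($\bA\bs$, $\bB^{\top}\bm{e}$, the Gram matrices, $\bX$), bound $\bX$ via the Sylvester equation, and induct over RK stages and iterations. Two steps, however, do not hold as written.

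\textbf{The lower-bound inference is invalid.} ``The individual terms are fixed $\Theta(1)$ multiples of this'' is false. Write out the first two pieces:
\begin{align*}
\varphi_t^{(1)} &= -\tfrac{h}{6}\bigl(2\bP^{\text{null}}_{\bB_t}\bm{e}_t\,(\bA_t\bs)^{\top}(\bA_t\bA_t^{\top})^{-1}\bA_t\bs + \bB_t\bX_t\bA_t\bs\bigr),\\
\varphi_t^{(2)} &= -\tfrac{h}{6}\bigl(2\bB_t(\bB_t^{\top}\bB_t)^{-1}\bB_t^{\top}\bm{e}_t\,\|\bs\|^2 - \bB_t\bX_t\bA_t\bs\bigr).
\end{align*}
The $\bB_t\bX_t\bA_t\bs$ contributions cancel in the sum, and the residual contributions involve the two orthogonal components of $\bm{e}_t$. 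So a lower bound on the total output change transfers to no individual piece. For example, at any point with $\bB_t^{\top}\bm{e}_t=\bm{0}$ the Sylvester right-hand side vanishes, so $\bX_t=\bm{0}$ and $\varphi_t^{(2)}=\bm{0}$, while $\varphi_t^{(1)}\neq\bm{0}$. Also, ``to leading order'' gives no control when $h=\Theta(1)$. The paper does not argue lower bounds separately; it asserts each piece is $\Theta(1)$ from its inductive hypotheses \eqref{eq29}. To reach that level, drop this argument and build the non-vanishing of the relevant projected quantities into your inductive hypothesis.

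\textbf{The base case is missing.} In the setting of Theorem~\ref{theorem4}, $\bB_0=\bm{0}$, so $(\bB_0^{\top}\bB_0)^{-1}$ is undefined. Assumption~\ref{assumption2}, which is not a hypothesis of Theorem~\ref{theorem4}, fails at $t=0$. Your ``after the balancing step the same holds for $\bB$'' presupposes that the first step already produced nondegenerate $\Theta(1)$ factors. The paper supplies this step with the regularized inverse $(\bB_0^{\top}\bB_0+\epsilon\bI)^{-1}$. Then $\nabla_{\bA}\mL_0=\bm{0}$, so $\bX_0=\bm{0}$ and $\bA_0^{(1)}=\bA_0$. Meanwhile $\bB_0^{(1)}=-h\,\bm{e}_0(\bA_0\bs)^{\top}(\bA_0\bA_0^{\top})^{-1}$ has $\Theta(1)$ columns. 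The remaining stages are handled as in the inductive step, and the result is projected onto $\mathcal{M}$.

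Doing this computation exposes an obstruction your appeal to Assumption~\ref{assumption2} hides. Set $\bm{e}_0=\bW_{\text{pt}}\bs-\by$, so $\bm{e}=\bm{e}_0+\bB\bA\bs$. The columns of $F_{\bB}$ lie in $\mathrm{span}(\bP^{\text{null}}_{\bB}\bm{e})+\mathrm{col}(\bB)$. Hence every iterate and RK stage started from $\bB_0=\bm{0}$ keeps $\mathrm{col}(\bB)\subseteq\mathrm{span}(\bm{e}_0)$. For $r\ge 2$, $\bB_t^{\top}\bB_t$ is therefore singular for all $t$. Product-preserving rebalancing cannot repair this, because $\bB_t\bA_t$ has rank at most one. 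Uniform nondegeneracy is thus unavailable here; the paper's hypothesis on $(\bB_t^{\top}\bB_t)^{-1}$ meets the same obstruction. A rigorous version must either take $r=1$ or carry the $\epsilon$-regularized inverse through all the bounds.
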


The detailed proof is provided in \Cref{appendix2}. As pointed out in LoRA+~\cite{hayou2024lora+}, classical LoRA fails to achieve stable feature learning with a constant step size; instead, the admissible step size must scale with the model size $n$. Such sensitivity to model dimension is undesirable in deep learning, where robustness with respect to architectural scaling is crucial.
In contrast, \Cref{theorem4} shows that the proposed ODELoRA framework achieves stable feature learning with a constant step size $h = \Theta(1)$. This implies that the permissible step size remains essentially unchanged as the model size grows, showing improved robustness and scalability.


\section{Experiments}
\label{section5}
\subsection{Toy Examples on Matrix Sensing}

\begin{figure}[tb!]
    \centering
    \subfigure[$\delta=0.05$, $h=1$]{%
        \includegraphics[width=0.30\textwidth]{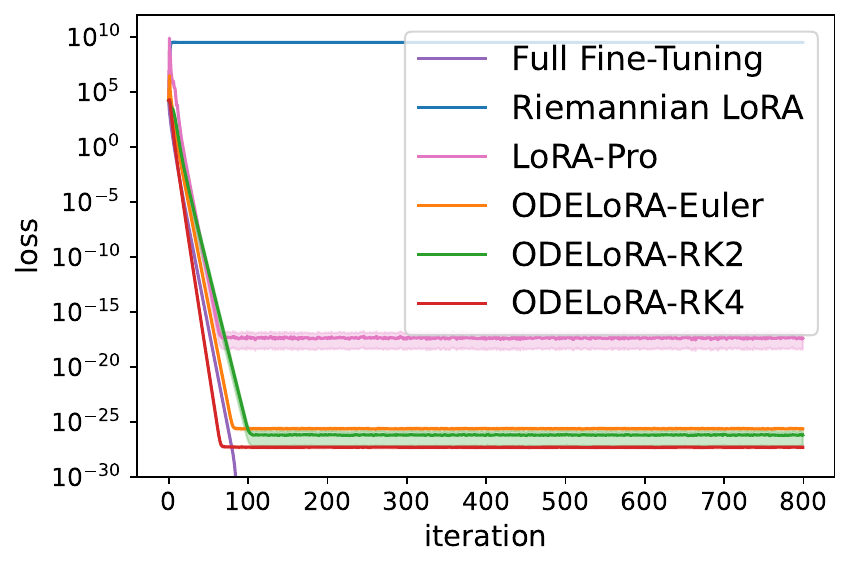} \label{fig_loss_iteration_del_0.05_h_1.00}}
    \subfigure[$\delta=0.05$, $h=0.5$]{%
        \includegraphics[width=0.30\textwidth]{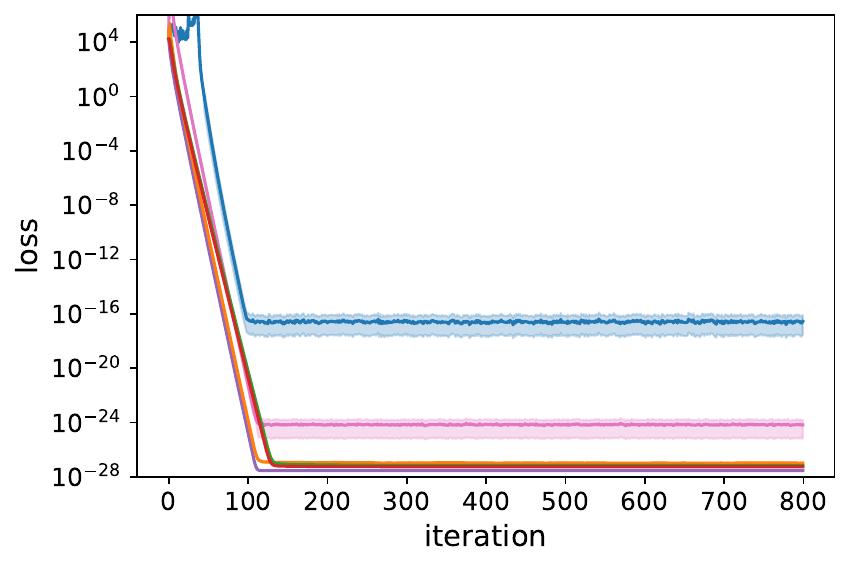} \label{fig_oss_iteration_del_0.05_h_0.50}}
    \subfigure[$\delta=0.05$, $h=0.1$]{%
        \includegraphics[width=0.30\textwidth]{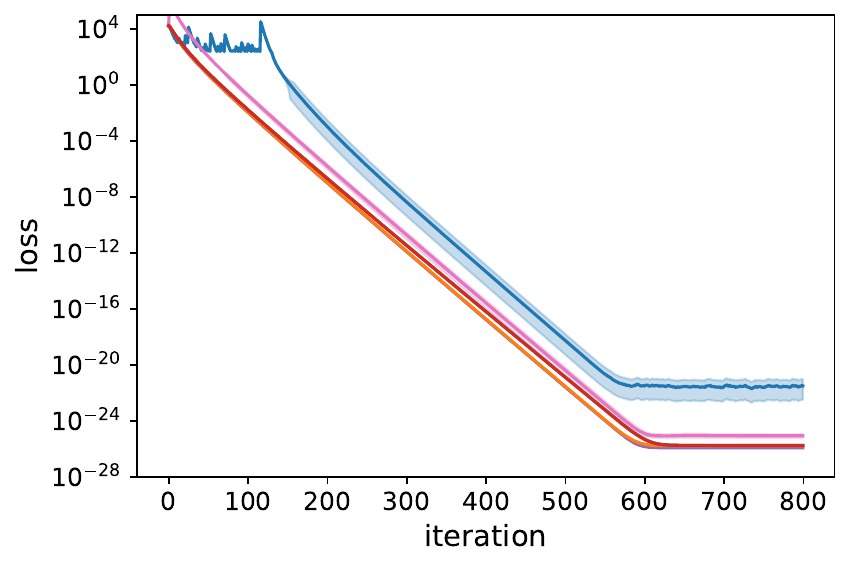} \label{fig_oss_iteration_del_0.05_h_0.10}}\\
    \subfigure[$\delta=0.1$, $h=0.8$]{%
        \includegraphics[width=0.30\textwidth]{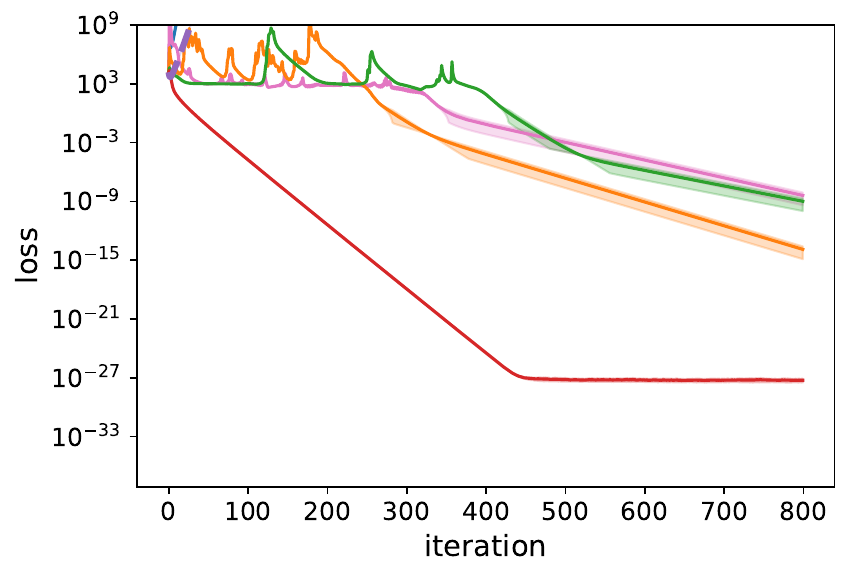} \label{fig_loss_iteration_del_0.1_h_0.80}}
    \subfigure[$\delta=0.1$, $h=0.7$]{%
        \includegraphics[width=0.30\textwidth]{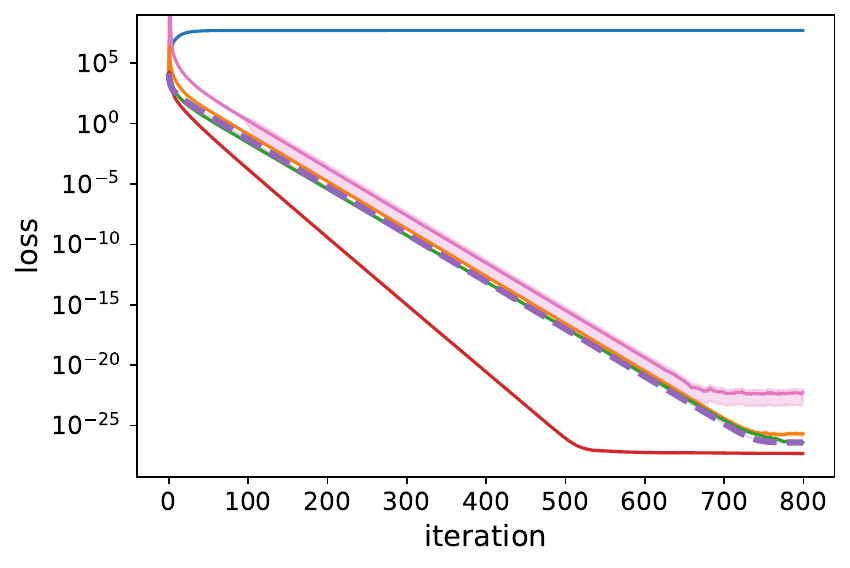} \label{fig_loss_iteration_del_0.1_h_0.70}}
    \subfigure[$\delta=0.1$, $h=0.1$]{%
        \includegraphics[width=0.30\textwidth]{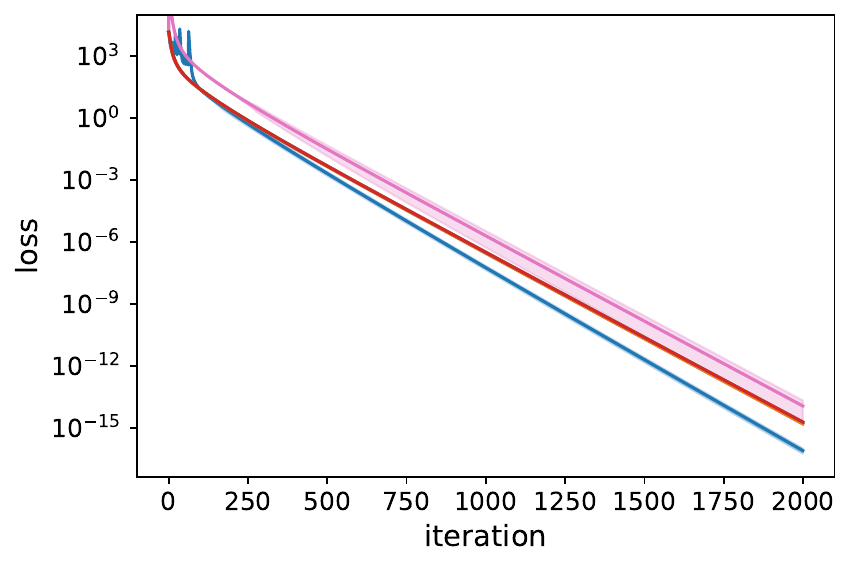} \label{fig_loss_iteration_del_0.1_h_0.10}}
    \caption{Performance comparisons of full fine-tuning, Riemannian LoRA, LoRA-Pro, and ODELoRA with Euler and Runge--Kutta discretizations on matrix sensing problems under varying RIC $\delta$ and step size $h$. }
    \label{fig_loss_iteration_del_h}
\end{figure}

We conduct numerical experiments on the matrix sensing problem~\eqref{eq26} to validate our theoretical results on the linear convergence of ODELoRA.

The trajectories of the objective values for different optimization methods, including full fine-tuning, Riemannian LoRA, LoRA-Pro, and ODELoRA with Euler and Runge--Kutta (RK2 and RK4) discretizations, under varying RIC $\delta$ and step sizes $h$ are shown in \Cref{fig_loss_iteration_del_h}. 
In all these cases, the classical LoRA with vanilla gradient descent is divergent under these step sizes. 
As illustrated in the figures, whenever the methods converge, ODELoRA with different discretizations exhibits linear convergence with essentially the same rate, which is consistent with our theoretical analysis.
When larger step sizes are applied, ODELoRA-RK4 remains stable and convergent, whereas RK2 and Euler become unstable and diverge. This highlights the advantage of higher-order discretization methods, which provide improved numerical stability and robustness to larger step sizes. These observations are consistent with the analysis in \Cref{section4.3}, where higher-order methods are shown to incur smaller discretization errors than lower-order methods under the same step size.
Moreover, it is surprising that ODELoRA-RK4 exhibit stable convergence under the setting $(\delta,h) = (0.1,0.8)$, outperforming all other methods, including full fine-tuning. We attribute this behavior to a combination of several reasons: the LoRA parameterization that explicitly constrains the low-rank structure; the proposed balanced manifold constraints, which further regularize the optimization trajectory; as well as the high-order accuracy of the RK4 integrator, which provides a more powerful and accurate discretization of the underlying continuous optimization flow.
A similar phenomenon is observed for $(\delta,h) = (0.1,0.7)$, where ODELoRA-RK4 converges faster than full fine-tuning and other discretization schemes, highlighting the robustness and efficiency of higher-order ODE-based optimization in challenging regimes.
Overall, these results demonstrate the benefit of Runge--Kutta methods for discretizing ODELoRA.
Although higher-order schemes such as RK4 require a higher per-iteration computational cost, they admit significantly larger step sizes and provide superior stability. 
As a result, ODELoRA-RK4 is particularly suitable for precise fine-tuning in the later stages of optimization.

We also observe that Riemannian LoRA exhibits instability and oscillatory behavior during the early stages of training. In contrast, ODELoRA demonstrates consistent and monotonic improvement throughout the optimization process.
This difference can be explained by the underlying optimization mechanisms. Riemannian LoRA can be interpreted as a preconditioned gradient descent method on the factorized variables $(\bA,\bB)$, where the initial preconditioner may be inaccurate or overly aggressive. As training progresses and the magnitudes of $(\bA,\bB)$ become appropriately scaled, the preconditioning effect improves, resulting in more stable updates.
Moreover, Riemannian LoRA deviates more substantially from full fine-tuning compared to ODELoRA, although it may eventually reach comparable or even faster convergence in certain settings. 
For LoRA-Pro, which mimics full fine-tuning without balanced manifold regularization, we observe that the loss initially increases at the beginning of training. This transient behavior can be attributed to imbalanced iterates in the low-rank parameterization; as the optimization progresses and a certain equilibrium is established, the loss subsequently decreases and the method begins to converge.
Taken together, these observations further support the close alignment between ODELoRA and full fine-tuning and the training stability of ODELoRA. In particular, the formulation of ODELoRA in \eqref{eq31}, which explicitly accounts for matching and balance, provides an explanation for its superior behavior. In contrast, Riemannian LoRA can be interpreted as a preconditioned optimization method, effectively implementing an adaptive step-size gradient descent scheme, which leads to qualitatively different optimization dynamics.

\subsection{Physics-Informed Neural Networks}

In well-structured matrix sensing problems, ODELoRA with different discretization schemes exhibits similar optimization trajectories when relatively small step sizes are used, and the advantage of higher-order methods such as RK4 is less obvious. This behavior can be attributed to the favorable properties of matrix sensing objectives, which are well-conditioned and strongly convex. Consequently, the stability benefits of ODELoRA-RK4 do not manifest prominently in such settings.
To further highlight the advantages of the proposed method, we evaluate ODELoRA on the fine-tuning of physics-informed neural networks (PINNs), where the loss landscape is significantly more complicated and nonconvex due to the involvement of differential operators. In this regime, optimization stability becomes more critical, allowing the benefits of higher-order discretization schemes to be more clearly observed.

We follow the PINN formulation in~\cite{raissi2019physics}, where solutions to differential equations are represented by multi-layer perceptrons (MLPs). 
When fine-tuning PINNs for a new PDE task, instead of updating all model parameters, we adopt LoRA-based fine-tuning, which substantially reduces the number of trainable parameters~\cite{wang2025transfer}. 
In all experiments, we use MLPs with a hidden dimension of $1000$ and hyperbolic tangent activation functions. The LoRA rank for all parameter matrices is fixed to $r=4$ throughout the paper.
We compute the relative error (rel) between the predicted solution $\phi(\bx;\Theta)$ and the ground truth solution $\bu(\bx)$ on a test dataset $\{(\bx_i, \bu(\bx_i))\}_{i=1}^{N_t}$, defined as $\sqrt{\frac{\sum_{i=1}^{N_t} \|\phi(\bx_i;\Theta) - \bu(\bx_i)\|^2}{\sum_{i=1}^{N_t} \|\bu(\bx_i)\|^2}}$, where $\phi(\cdot;\Theta)$ is the neural network parameterized by $\Theta$.

\subsubsection{Allen-Cahn Equations}

\begin{figure}[tb!]
    \centering
    \subfigure[$\blam=(1,1)$, training loss]{%
        \includegraphics[width=0.30\textwidth]{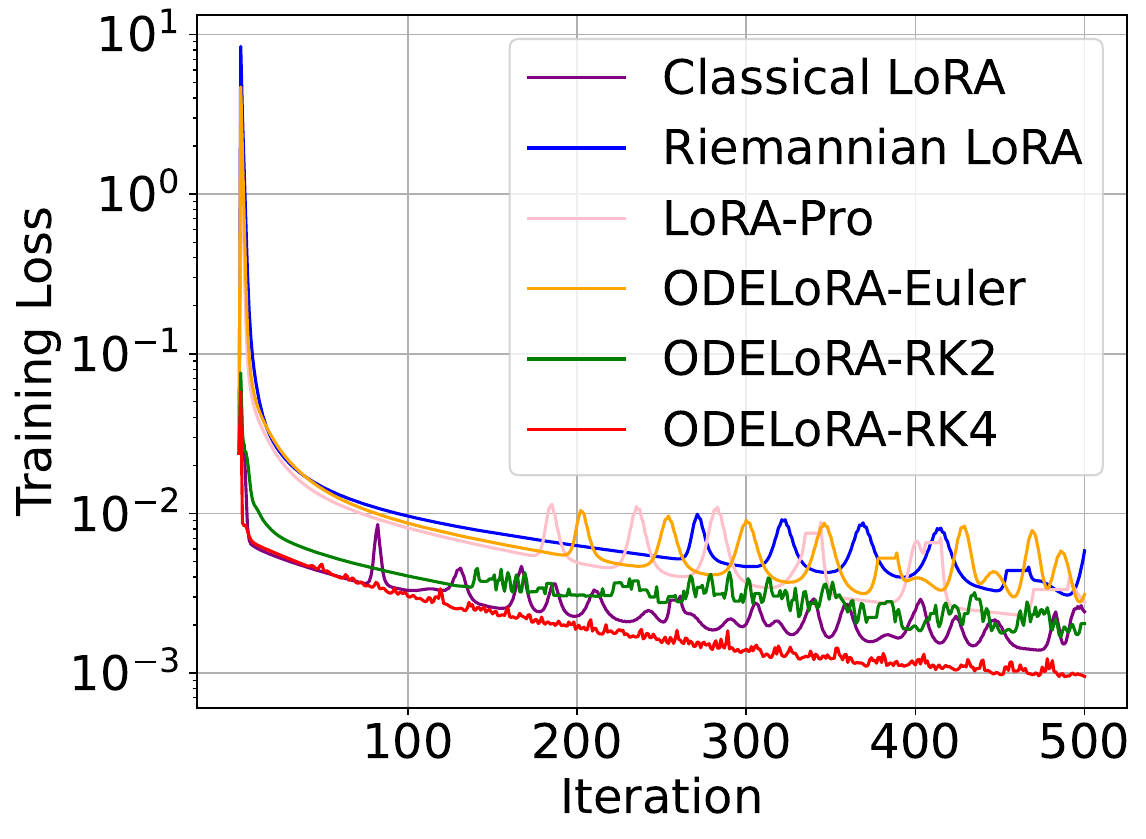} \label{fig_allen-cahn_1_1_loss}}
    \subfigure[$\blam=(1,5)$, training loss]{%
        \includegraphics[width=0.30\textwidth]{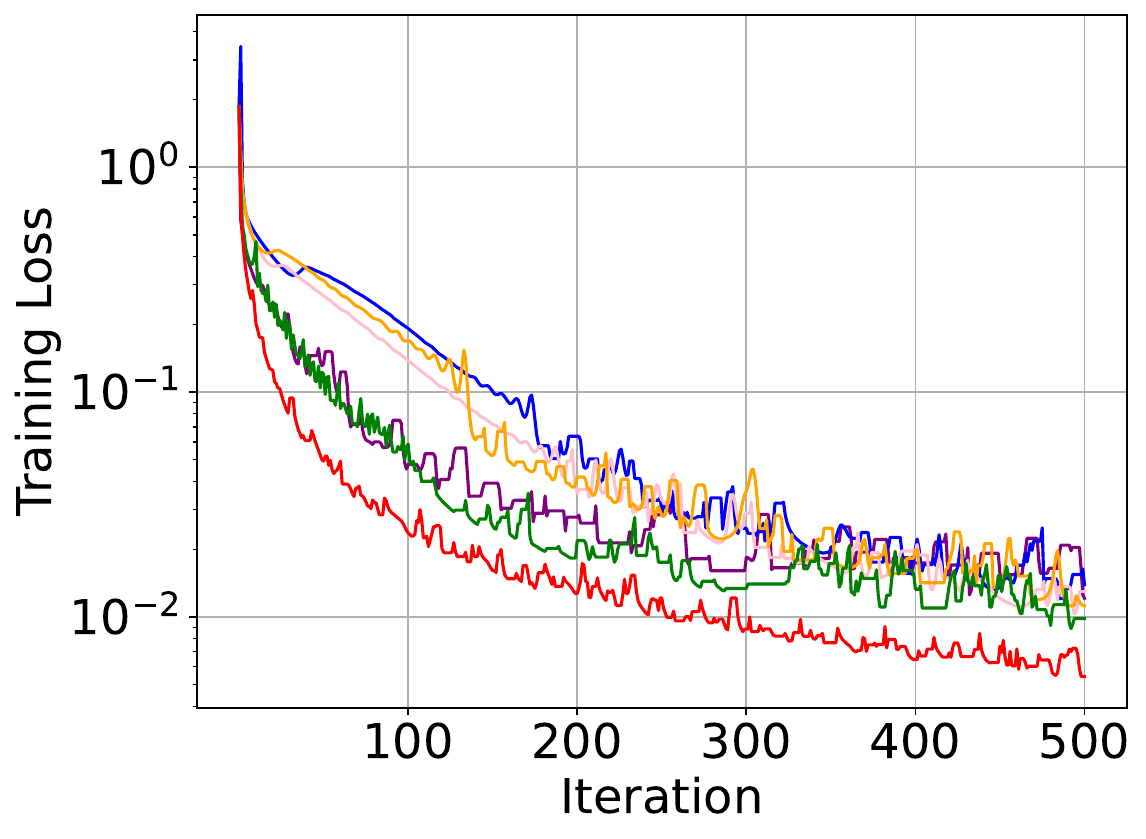} \label{fig_allen-cahn_1_5_loss}}
    \subfigure[$\blam=(2,1)$, training loss]{%
        \includegraphics[width=0.30\textwidth]{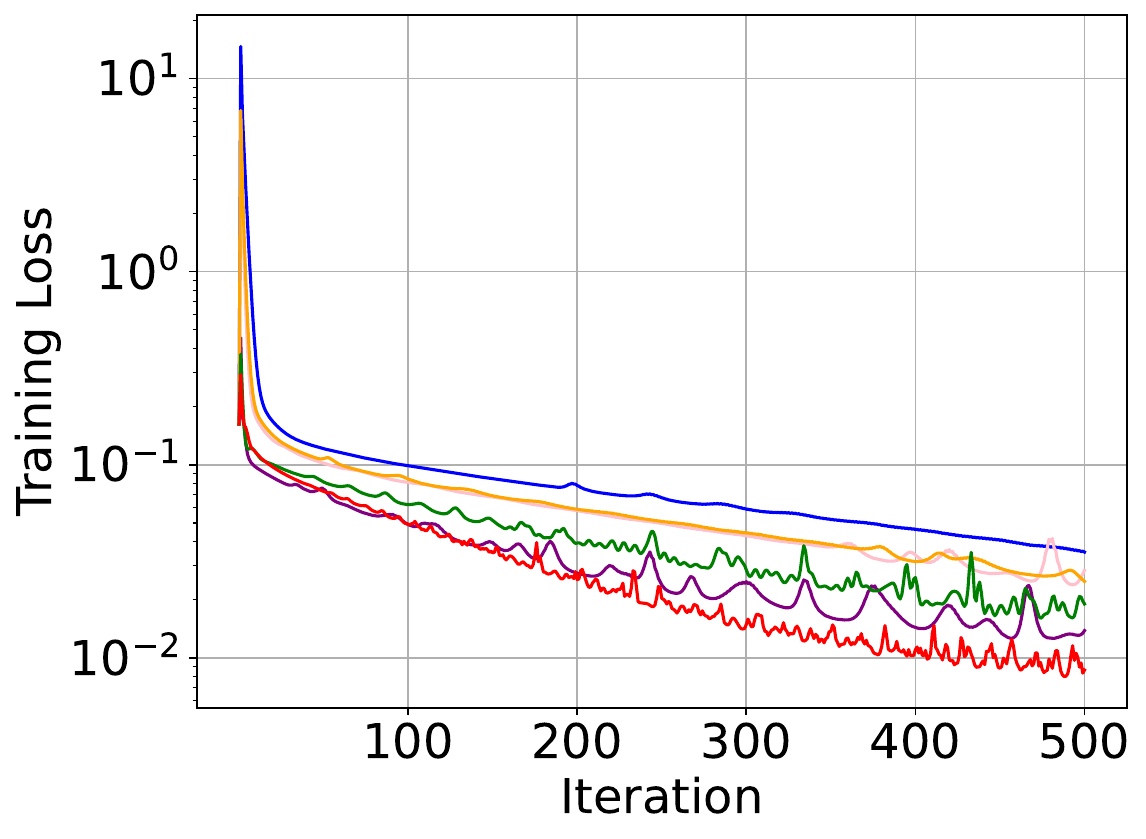} \label{fig_allen-cahn_2_1_loss}}\\
    \subfigure[$\blam=(1,1)$, relative error]{%
        \includegraphics[width=0.30\textwidth]{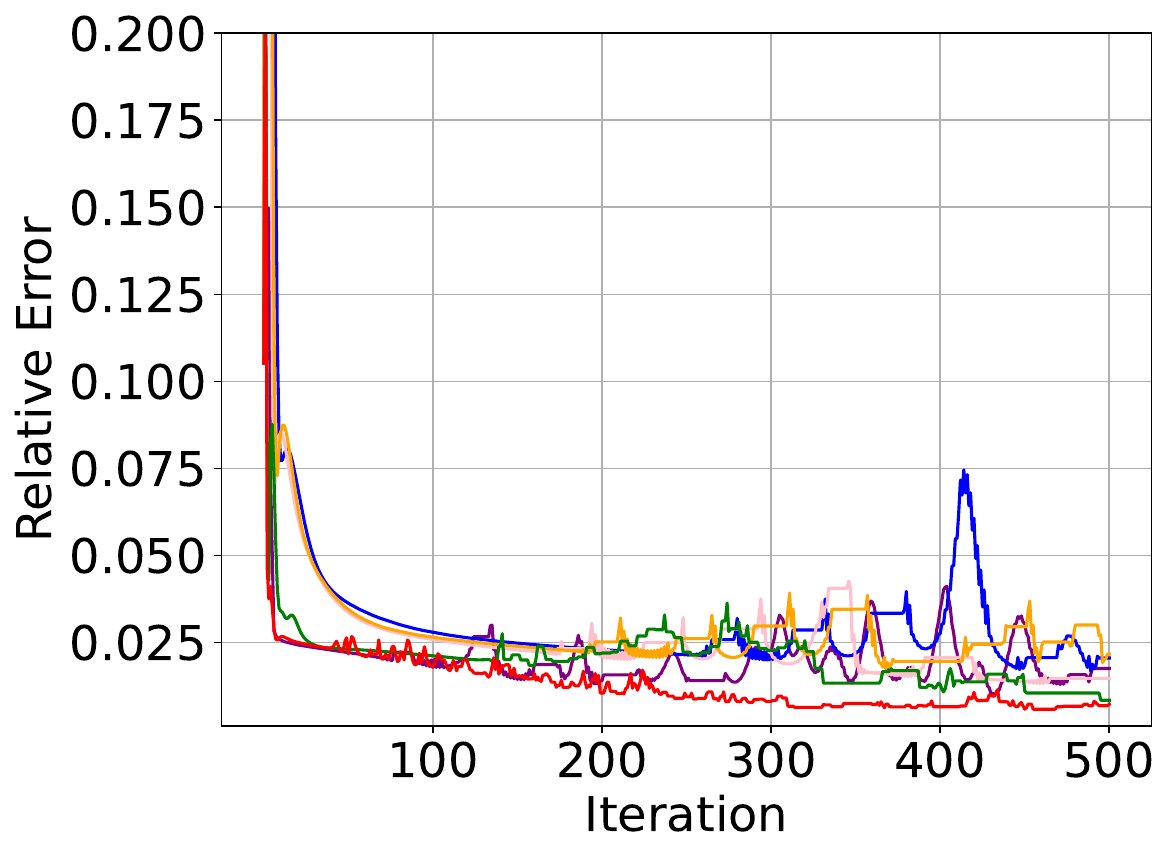} \label{fig_allen-cahn_1_1_error}}
    \subfigure[$\blam=(1,5)$, relative error]{%
        \includegraphics[width=0.30\textwidth]{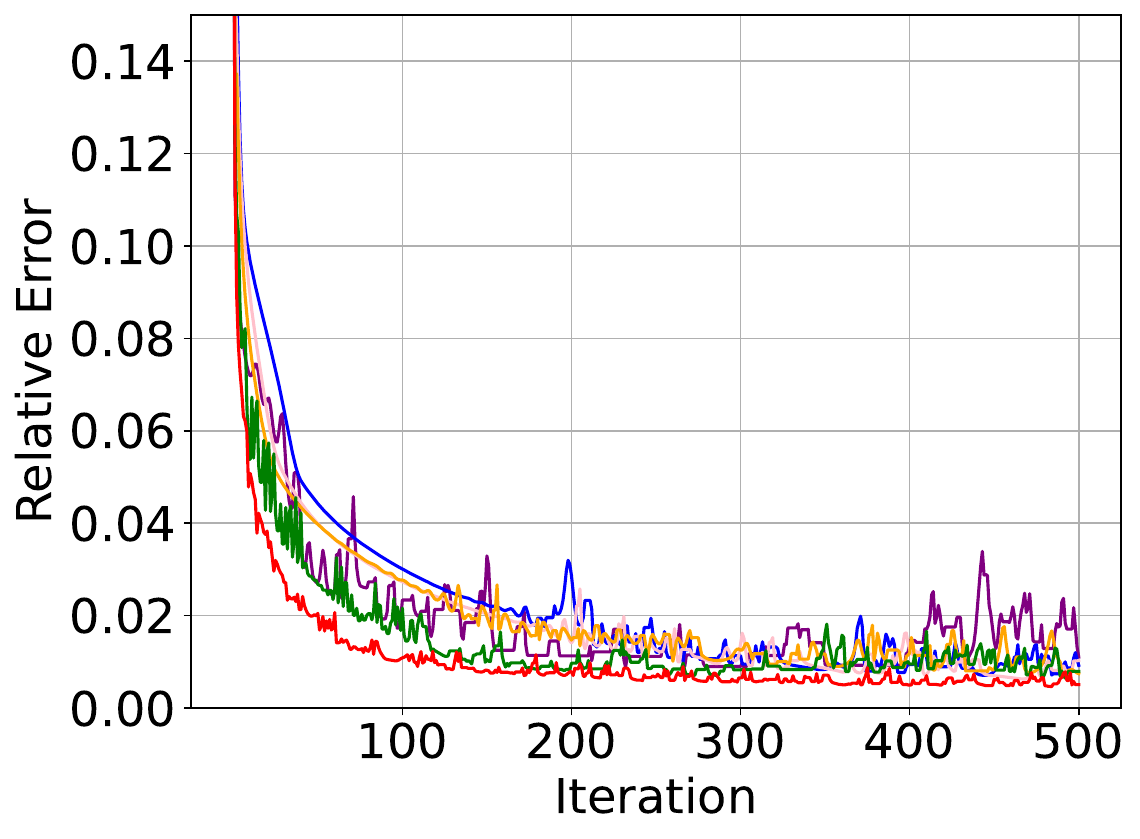} \label{fig_allen-cahn_1_5_error}}
    \subfigure[$\blam=(2,1)$, relative error]{%
        \includegraphics[width=0.30\textwidth]{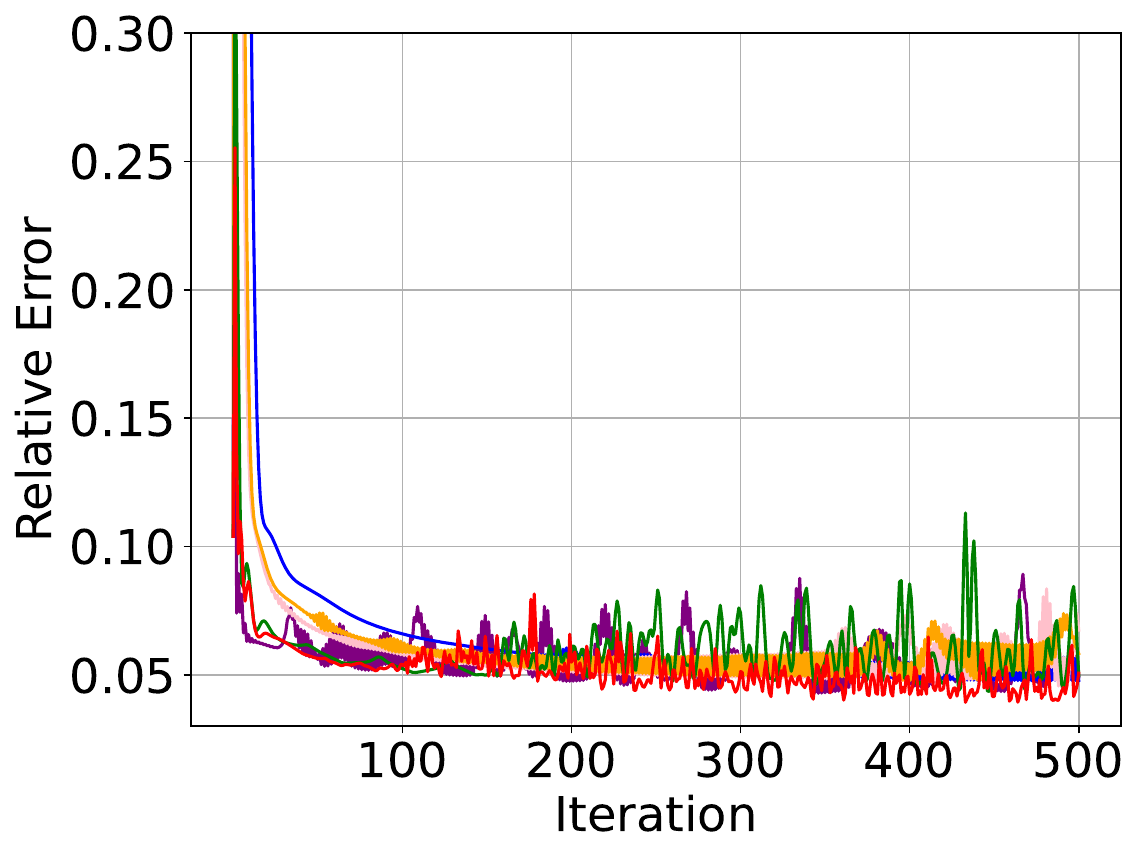} \label{allen-cahn_2_1_error}}
    \caption{Performance comparison of classical LoRA, Riemannian LoRA, LoRA-Pro, and ODELoRA with Euler and Runge--Kutta discretizations on solving Allen-Cahn equations under varying physical parameters $\bm{\lambda}$. }
    \label{fig_allen-cahn_loss_error}
\end{figure}

We consider the following Allen-Cahn equations with varying physical parameters $\blam \in \mathbb{R}^2$:
\begin{equation}
    \begin{split}
        \frac{\partial u(t,\bm{x};\blam)}{\partial t} & - \Delta u(t,\bm{x};\blam) -u(t,\bm{x};\blam)^3 + u(t,\bm{x};\blam) = g(t,\bm{x};\blam), \quad (t,\bm{x}) \in [0,1] \times \Omega,\\
        u(t,\bm{x};\blam) & = h_{1}(t,\bm{x};\blam), \quad (t,\bm{x}) \in [0,1] \times \partial \Omega,\\
        u(0,\bm{x};\blam) & = h_{2}(\bm{x};\blam), \quad \bm{x} \in \Omega,
    \end{split}
\end{equation}
where the temporal and  spatial domains are defined as $[0,1]$ and $\Omega = \{\bx \in \mathbb{R}^{2}: \left\|\bx \right\| \leq 1\}$, respectively. The exact solution $u(\bx;\blam)$ with the parameter $\blam$ is defined as $u(\bx;\blam) = e^{-t} \sin \left(\frac{\pi \lambda_1}{2}\left( 1 - \left\|\bm{x} \right\|\right)^{2.5} \right) + \lambda_2 \cdot e^{-t} \sin \left(\frac{\pi}{2}\left( 1 - \left\|\bm{x} \right\|\right) \right)$. 
Here, $\lambda_1$ represents the frequency, and $\lambda_2$ controls the variation level. 
We first pre-train a MLP model on the PDE with $\blam =(1,0)$, and then fine-tune it by LoRA on PDEs with $\blam=(1,1)$, $(1,5)$, and $(2,1)$.

We visualize the experimental results in \Cref{fig_allen-cahn_loss_error}, which reports both the training loss and the relative error under different physical parameters~$\blam$.
Overall, we observe that among all compared optimizers, ODELoRA-RK4 exhibits the most stable training dynamics, characterized by a smooth and consistently decreasing trajectory across all settings. In particular, ODELoRA-RK4 shows significantly reduced oscillation, indicating strong numerical stability during optimization.
Furthermore, ODELoRA-RK2 also demonstrates improved stability compared to classical LoRA, Riemannian LoRA, LoRA-Pro, and ODELoRA-Euler, although its convergence is slightly less smooth than that of RK4. These results support our main claim and motivation that higher-order discretization schemes lead to more stable optimization trajectories, as they better approximate the underlying continuous-time optimization flow induced by the proposed ODE formulation.
These results are fully consistent with classical results from numerical ODE solvers.
In contrast, classical LoRA exhibits apparent oscillatory and zigzag behavior, especially in the later stages of training. This instability can be attributed to the fact that classical LoRA optimizes the low-rank factors individually using standard gradient descent, without explicitly accounting for their multiplicative structures. As a result, gradient updates on individual factors can amplify instability due to the bilinear structure of the LoRA parameterization.
For Riemannian LoRA, although geometric constraints are imposed through gradient preconditioning on the low-rank manifold, the training dynamics still remain relatively unstable. This suggests that while Riemannian methods incorporate geometric information, they do not explicitly model the coupled continuous-time dynamics of the LoRA factors, and thus may still deviate from the ideal optimization flow.

When considering computational efficiency, the first four methods, classical LoRA, Riemannian LoRA, LoRA-Pro, and ODELoRA-Euler, are first-order methods, requiring only a single gradient evaluation per update. However, as observed in \Cref{fig_allen-cahn_loss_error}, their convergence is consistently slower than that of ODELoRA-RK2 and ODELoRA-RK4.
Although ODELoRA-RK4 requires approximately four times the per-iteration computational cost of classical LoRA, Riemannian LoRA, LoRA-Pro, and ODELoRA-Euler due to multiple gradient evaluations at intermediate stages, its convergence speed measured in terms of computational cost remains highly competitive, in addition to its superior stability. For example, when comparing the training loss and relative error at iteration $500$ for first-order methods, we consider the performance of ODELoRA-RK4 at iteration $100$ to ensure a fair comparison in terms of total gradient evaluations. Even under this conservative comparison, ODELoRA-RK4 achieves comparable, and in many cases, better performance.
These results indicate that ODELoRA-RK4 maintains a competitive convergence rate even after accounting for its higher per-iteration cost, while simultaneously providing significantly improved numerical stability. Such stability is notably absent in classical LoRA, which exhibits oscillations throughout training.

Finally, these observations suggest a practical hybrid optimization strategy for LoRA fine-tuning. In the early stages of training, fast methods such as LoRA-Pro or ODELoRA-Euler may be adopted to achieve rapid progress. As the optimization approaches convergence, one can transition to ODELoRA-RK4 to benefit from its enhanced stability and accelerated convergence due to its close tracking of the continuous optimization flow mimicing the full fine-tuning.

\subsubsection{Elliptic Equations}

\begin{figure}[tb!]
    \centering
    \subfigure[$\blam=(1,1)$, training loss]{%
        \includegraphics[width=0.30\textwidth]{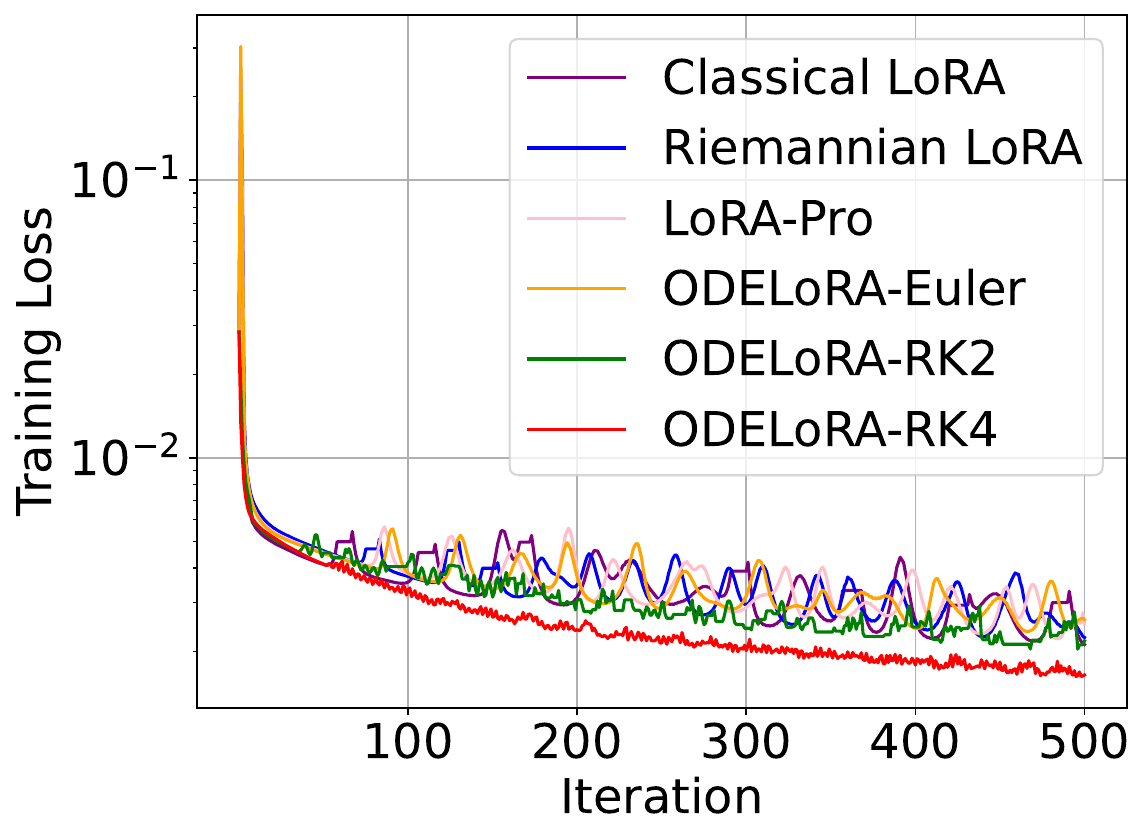} \label{fig_elliptic_1_1_loss}}
    \subfigure[$\blam=(1,5)$, training loss]{%
        \includegraphics[width=0.30\textwidth]{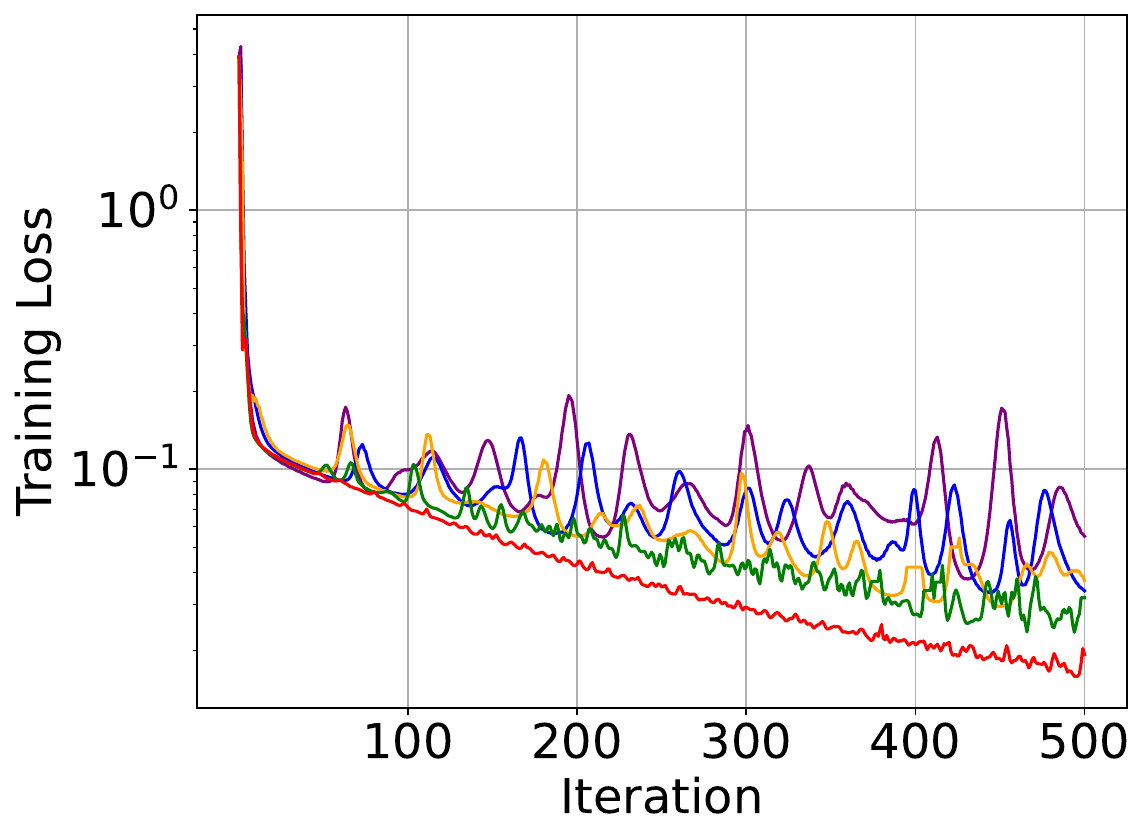} \label{fig_elliptic_1_5_loss}}
    \subfigure[$\blam=(2,1)$, training loss]{%
        \includegraphics[width=0.30\textwidth]{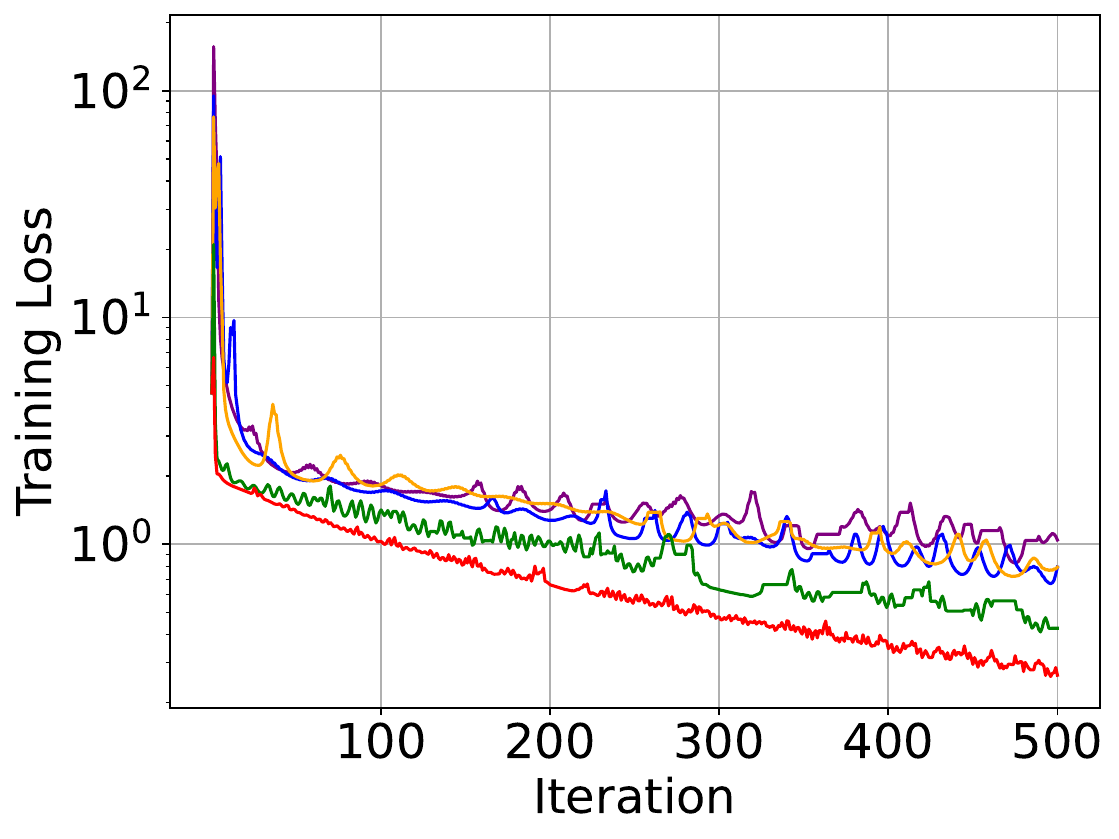} \label{fig_elliptic_2_1_loss}}\\
    \subfigure[$\blam=(1,1)$, relative error]{%
        \includegraphics[width=0.30\textwidth]{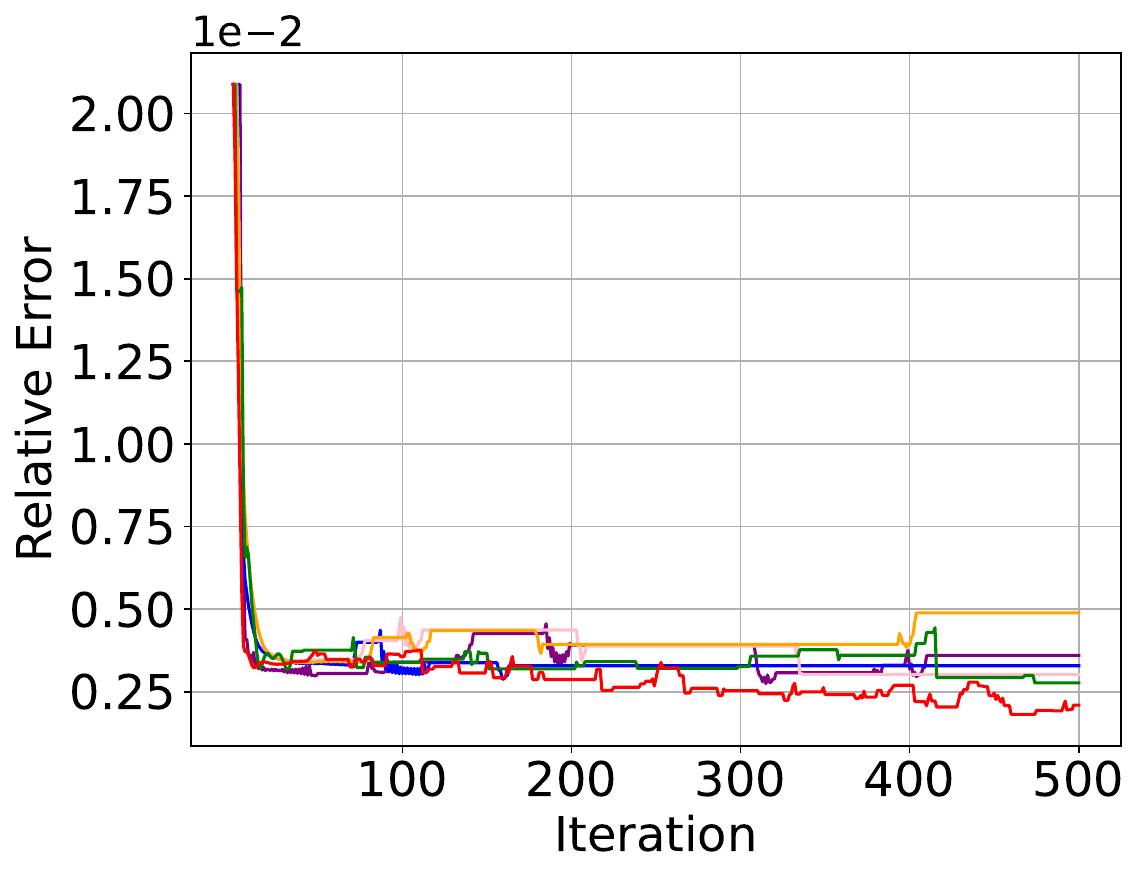} \label{fig_elliptic_1_1_error}}
    \subfigure[$\blam=(1,5)$, relative error]{%
        \includegraphics[width=0.30\textwidth]{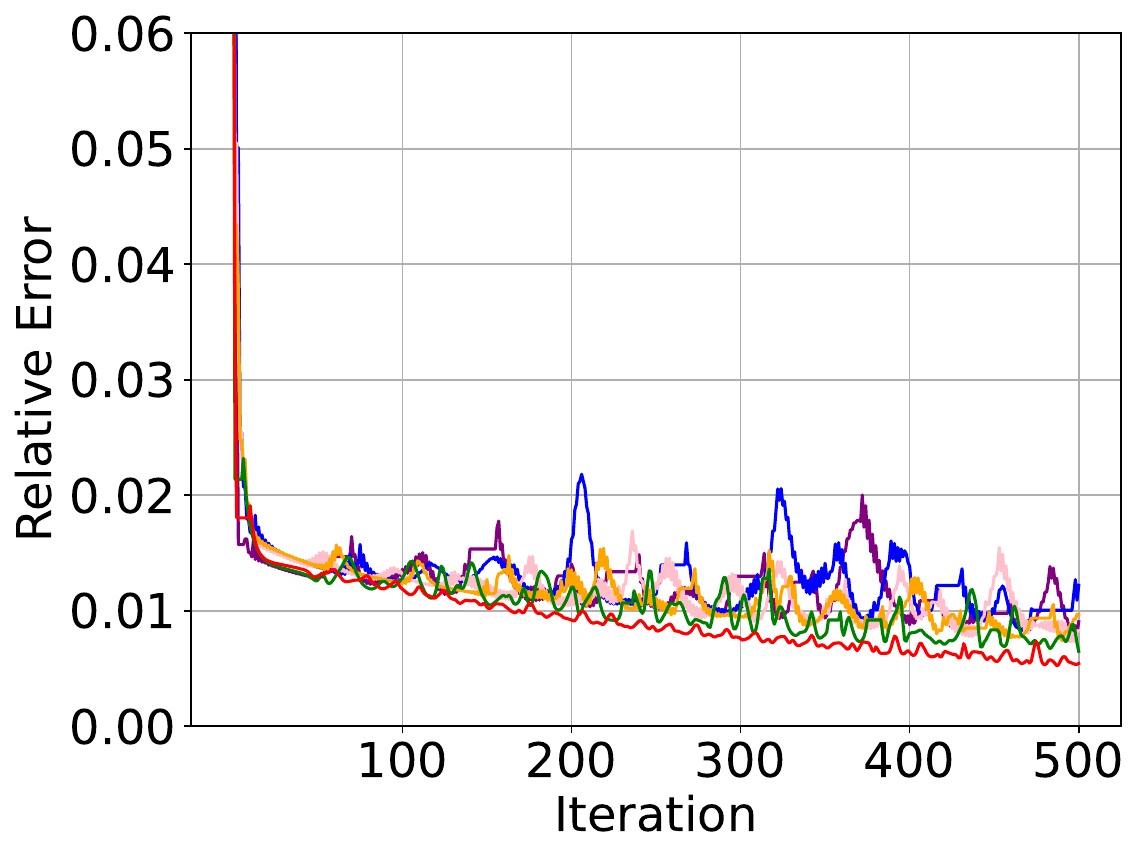} \label{fig_elliptic_1_5_error}}
    \subfigure[$\blam=(2,1)$, relative error]{%
        \includegraphics[width=0.30\textwidth]{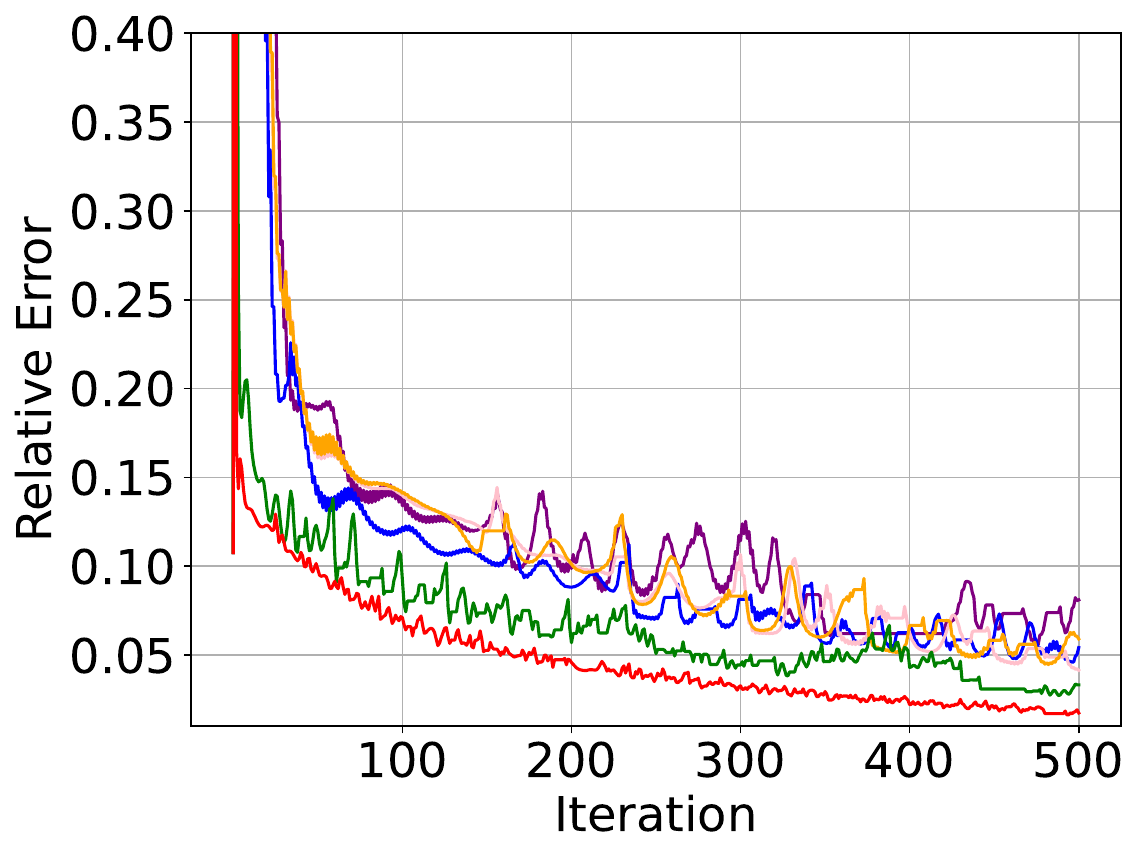} \label{fig_elliptic_2_1_error}}
    \caption{Performance comparison of classical LoRA, Riemannian LoRA, LoRA-Pro, and ODELoRA with Euler and Runge--Kutta discretizations on solving elliptic equations under varying physical parameters $\bm{\lambda}$. }
    \label{fig_elliptic_loss_error}
\end{figure}

We consider a class of elliptic equations with varying physical parameters $\blam \in \mathbb{R}^2$:
\begin{align}
        - \nabla \cdot \left(a(\bm{x}) \cdot \nabla u(\bm{x};\blam)\right) + \left\| \nabla u(\bm{x};\blam)\right\|^2   = g(\bm{x};\blam), \quad \bm{x} &\in \Omega, \nonumber\\*
        u(\bm{x};\blam) = h(\bm{x};\blam), \quad \bm{x} &\in \partial \Omega,
       \end{align} 
where the domain is defined as $\Omega = \{\bx \in \mathbb{R}^{2}: \left\|\bx \right\| \leq 1\}$, and the coefficient function is given by $a(\bx) = 1 + \frac{1}{2}\left\| \bx\right\|^2$. The exact solution $\bu(\bx;\blam)$ with the parameter $\blam$ is defined as $u(\bx;\blam) = \sin \big(\frac{\pi \lambda_1}{2}\big( 1 - \left\|\bm{x} \right\| \big)^{2.5} \big) + \lambda_2 \cdot \sin \big(\frac{\pi}{2}\big( 1 - \left\|\bm{x} \right\| \big) \big)$.

We fine-tune PINNs on a collection of elliptic equations using LoRA with different optimizers for updating the low-rank factors, and report the corresponding training loss and relative error in \Cref{fig_elliptic_loss_error}.
The behavior across different optimizers is consistent with the observations for the Allen–Cahn equation discussed in the previous subsection.
However, in this setting, we observe more obvious instability for classical LoRA, Riemannian LoRA, LoRA-Pro, and ODELoRA-Euler, even when relatively small step sizes $h$ are adopted. These methods exhibit noticeable oscillations and irregular fluctuations in both training loss and relative error throughout the training process. 
In contrast, ODELoRA-RK4 consistently maintains a smooth decrease in both loss and error across all evaluated examples. 
Even in regimes where first-order methods suffer from severe instability, ODELoRA-RK4 continues to progress steadily toward lower error levels, demonstrating strong robustness to problem parameters.
These results further highlight the advantages of using higher-order discretization schemes for LoRA training when considering it as solving ODEs. By more faithfully approximating the underlying ODE dynamics, higher-order Runge--Kutta methods lead to improved stability and reliability, especially in stiff or sensitive regimes that commonly arise in physics-informed learning.

\section{Conclusion and Discussions}
\label{section6}

In this paper, we address the mismatch between LoRA fine-tuning and full fine-tuning by developing a continuous-time optimization flow for LoRA factors, termed \emph{ODELoRA}, which explicitly tracks the gradient flow of the full parameter matrix and satisfies the balanced manifold constraints.
In practice, we realize this flow by discretizing ODELoRA using well-established numerical solvers for ODEs, including Euler and Runge--Kutta methods.
This formulation provides a unified ODE-based perspective for understanding, analyzing, and designing optimization algorithms for LoRA fine-tuning.
We establish linear convergence for ODELoRA and its discretizations when considering strongly convex objectives, and further extend the analysis to the matrix sensing problem, which serves as a concrete and theoretically tractable example.
Moreover, motivated by the notion of \emph{stable feature learning}, a property known to be critical in deep learning, we show that ODELoRA achieves stable feature learning with a constant step size.
Notably, this step size does not scale with the model dimension, which is a highly desirable property for large-scale deep learning.
Finally, we validate our theoretical results through numerical experiments on matrix sensing, and  demonstrate the superior empirical performance of the proposed ODELoRA in fine-tuning physics-informed neural networks.

Several promising directions remain for future explorations.
First, although we adopt Runge--Kutta methods as reliable discretization schemes, these solvers are designed for broad classes of ODEs.
Given the specific structure of the ODELoRA flow, it would be of interest to develop customized numerical solvers that better exploit this structure and potentially lead to improved accuracy or efficiency.
Second, it is of interest to design optimization flows (ODEs) that explicitly enforce desirable structural properties of the LoRA factors and the full parameter matrix.
For example, more versatile manifold constraints for factors can be incorporated into the optimization flow to encode prior structural knowledge and to further stabilize and improve the training dynamics.

Overall, ODELoRA represents a first step toward an ODE-driven framework for LoRA fine-tuning.
By explicitly modeling the continuous-time dynamics and carefully designing discretization schemes, this perspective opens new possibilities for faithful optimization algorithm design for LoRA fine-tuning.

\newpage 
\bibliographystyle{plain}
\bibliography{main}

\newpage
\appendix

\section{Proof for \Cref{theorem5}}
\label{appendix1}

The optimization problem~\eqref{eq31}, which consists of a quadratic convex objective subject to linear equality constraints, admits a closed-form solution in general by vectorizing all matrix variables (via Kronecker products) and deriving the corresponding KKT conditions. The resulting solution can then be obtained by solving a linear system associated with the KKT matrix. However, although this approach is generally applicable to linearly constrained quadratic problems, the vectorization substantially enlarges the problem dimension and damage the matrix structures, leading to prohibitively high computational cost.

In this section, we exploit the special structure of the problem and show that the unique solution can instead be obtained through computationally efficient operations on small matrices. We proceed by analyzing the objective and the constraints separately.

We first consider the unconstrained problem associated with the quadratic objective:
\begin{equation*}
    \min_{\bJ(t),\bK(t)}\left\| \bK(t) \bA(t) + \bB(t)\bJ(t) - \left(- \nabla_{\bW} \mL(t)\right) \right\|_{\mathrm{F}}^{2}.
\end{equation*}
Since the objective is convex and quadratic, its global minimizer is characterized by the first-order stationarity conditions. The corresponding solution takes the form
\begin{equation*}
    \begin{split}
        & \bJ(t)  =  -\left(\bB(t)^{\top}\bB(t)\right)^{-1} \bB(t)^{\top} \nabla_{\bW} \mL(t) + \bX(t) \bA(t),\\
        & \bK(t)  = - \left[\bI - \bB(t) \left(\bB(t)^{\top}\bB(t)\right)^{-1}\bB(t)^{\top}\right] \nabla_{\bW} \mL(t) \bA(t)^{\top} \left(\bA(t) \bA(t)^{\top}\right)^{-1} - \bB(t) \bX(t),
    \end{split}
\end{equation*}
where $\bX(t)$ is an arbitrary matrix characterizing the non-uniqueness of the factorization in \eqref{eq9}.
To satisfy the equality constraints in~\eqref{eq31}, we now determine whether there exists a suitable choice of the auxiliary variable $\bX(t)$. Substituting the expressions for $\left(\bJ(t),\bK(t)\right)$ into the constraints leads to the following Sylvester equation:
\begin{equation*}
    \bH(t)\bX(t) + \bX(t)\bH(t) = \left(\bB(t)^{\top}\bB(t)\right)^{-1}\bB(t)^{\top} \nabla_{\bW} \mL(t)\bA(t)^{\top} + \bA(t)\nabla_{\bW} \mL(t)^{\top} \bB(t)\left(\bB(t)^{\top}\bB(t)\right)^{-1},
\end{equation*}
where $\bH(t):=\bA(t)\bA(t)^{\top} + \bB(t)^{\top}\bB(t)$.
If $\bH(t)$ is positive definite, then the above Sylvester equation admits a unique solution~\cite{higham2008functions}. 
With this choice of $\bX(t)$, the pair $\left(\bJ(t),\bK(t)\right)$ satisfies the equality constraints and simultaneously minimizes the unconstrained quadratic objective. Therefore, it constitutes the unique global solution of the original constrained optimization problem~\eqref{eq31}.

\section{Proof for Convergence of ODELoRA}

\subsection{Proof for \Cref{theorem1}}
\label{appendix1.1}
We consider the trajectory of $\bA(t)$ and $\bB(t)$ together as $\bW(t)$:

    \begin{equation}
    \label{eq25}
        \begin{split}
            & \frac{d}{dt}\left\| \bW(t) - \bW^{\star}\right\|_{\mathrm{F}}^2\\
            & = 2 \left\langle \frac{d \bW(t)}{dt}, \bW(t) - \bW^{\star}\right\rangle\\
        & = -2 \left\langle \nabla_{\bW} \mL(\bW(t)), \bW(t) - \bW^{\star}\right\rangle + 2\left\langle \bP_{\bB(t)}^{\text{null}}\nabla_{\bW} \mL(\bW(t))\bP_{\bA(t)}^{\text{null}}, \bW(t) - \bW^{\star}\right\rangle\\
        & \leq - 2(1-\varepsilon)  \left\langle \nabla_{\bW} \mL(\bW(t)), \bW(t) - \bW^{\star}\right\rangle \\
        & \leq -\mu(1-\varepsilon) \left\| \bW(t) - \bW^{\star}\right\|_{\mathrm{F}}^2,
        \end{split}
    \end{equation}
    where the last inequality makes use of the strong convexity of the objective that
    \begin{equation*}
        0 \geq \mL(\bW^{\star}) - \mL(\bW_t) \geq \left\langle \nabla_{\bW}\mL(\bW(t)), \bW^{\star} - \bW(t)\right\rangle + \frac{\mu}{2}\left\| \bW^{\star} - \bW(t)\right\|_{\mathrm{F}}^2.
    \end{equation*}
    Therefore, the linear convergence of ODELoRA is obtained from \eqref{eq25}:
    \begin{equation*}
        \left\| \bW(t) - \bW^{\star}\right\|_{\mathrm{F}}^2 \leq \exp\left(- \mu (1-\varepsilon) t\right) \left\| \bW(0)-\bW^{\star}\right\|_{\mathrm{F}}^2.
    \end{equation*}

\subsection{Proof for \Cref{theorem2}}
\label{appendix1.2}

Before presenting the detailed proof, we first establish two auxiliary lemmas that characterize the boundedness and Lipschitzness of the solution $\bX$ of the Sylvester equation and the induced field function $F$, given $(\bA,\bB)$ lying in the feasible set $\{(\bA,\bB): \gamma_{\min}\bI \preceq \bA\bA^{\top}\preceq\gamma_{\max}\bI~\text{and}~\gamma_{\min}\bI \preceq \bB^{\top}\bB\preceq\gamma_{\max}\bI\}$.

\begin{lemma}
\label{lemma1}
    Suppose that Condition 2 in \Cref{assumption1} and \Cref{assumption2} hold. Then, for any $(\bA,\bB)\in \mW$, the Sylvester equation
    \begin{equation*}
        \bH\bX + \bX \bH =\left(\bB^{\top}\bB\right)^{-1}\bB^{\top} \nabla\mL \bA^{\top} + \bA \nabla\mL^{\top}\bB\left(\bB^{\top}\bB\right)^{-1},
    \end{equation*}
    with $\bH = \bB^{\top}\bB + \bA\bA^{\top}$,
    admits a unique solution $\bX(\bA,\bB)$.
    Moreover, this solution is bounded, satisfying $\left\| \bX\right\|_{\mathrm{F}} \leq \frac{L \sqrt{\gamma_{\max}}}{2 \gamma_{\min}^{3/2}}$, and is Lipschitz continuous with respect to $(\bA,\bB)$ in $\mathcal{W}$, with the Lipschitz constant
    \begin{equation*}
        \frac{M \gamma_{\max}}{2\gamma_{\min}^{3/2}} + \frac{3L \gamma_{\max}^{3/2}}{\gamma_{\min}^3}.
    \end{equation*}
\end{lemma}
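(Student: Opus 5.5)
Throughout, write $\bC(\bA,\bB) := \left(\bB^{\top}\bB\right)^{-1}\bB^{\top} \nabla\mL \bA^{\top} + \bA \nabla\mL^{\top}\bB\left(\bB^{\top}\bB\right)^{-1}$ for the right-hand side of the Sylvester equation. The plan is to handle existence/uniqueness, boundedness and Lipschitz continuity in turn, all via the spectral structure of the Sylvester operator $\mathcal{S}_{\bH}(\bX) := \bH\bX + \bX\bH$.

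\textbf{Existence, uniqueness and a bound on the inverse.} By \Cref{assumption2}, $\bH = \bA\bA^{\top} + \bB^{\top}\bB$ is symmetric with $2\gamma_{\min}\bI \preceq \bH \preceq 2\gamma_{\max}\bI$. Diagonalize $\bH = \bQ \bm{\Lambda}\bQ^{\top}$ with $\bQ$ orthogonal. In the rotated variable $\tilde\bX = \bQ^{\top}\bX\bQ$, the equation reads entrywise $(\lambda_i+\lambda_j)\tilde X_{ij} = \tilde C_{ij}$. Each eigenvalue sum satisfies $\lambda_i+\lambda_j \geq 4\gamma_{\min}>0$, so $\mathcal{S}_{\bH}$ is invertible and the solution is unique. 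Moreover $\|\mathcal{S}_{\bH}^{-1}\|_{\mathrm{F}\to\mathrm{F}} \le 1/(4\gamma_{\min})$; this crude bound is enough, and it is the only spectral fact needed later.

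\textbf{Boundedness.} Bound $\|\bC\|_{\mathrm{F}} \le 2\|(\bB^{\top}\bB)^{-1}\|\,\|\bB\|\,\|\nabla\mL\|_{\mathrm{F}}\,\|\bA\|$, using the Frobenius--spectral submultiplicativity. Interpret Condition~3 in \Cref{assumption1} as $\|\nabla\mL\|_{\mathrm{F}}\le L$, and use $\|\bA\|,\|\bB\|\le\sqrt{\gamma_{\max}}$ and $\|(\bB^{\top}\bB)^{-1}\|\le 1/\gamma_{\min}$. This gives $\|\bC\|_{\mathrm{F}} \le 2L\gamma_{\max}/\gamma_{\min}$, and hence $\|\bX\|_{\mathrm{F}}\le L\gamma_{\max}/(2\gamma_{\min}^2)$. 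Since $\sqrt{\gamma_{\max}/\gamma_{\min}}\ge1$, this is weaker than the stated $L\sqrt{\gamma_{\max}}/(2\gamma_{\min}^{3/2})$. To recover the sharper constant, I would refine the estimate: write $(\bB^{\top}\bB)^{-1}\bB^{\top} = \bB^{\dagger}$, whose norm is at most $1/\sqrt{\gamma_{\min}}$ rather than $\sqrt{\gamma_{\max}}/\gamma_{\min}$. This yields $\|\bC\|_{\mathrm{F}}\le 2L\sqrt{\gamma_{\max}}/\sqrt{\gamma_{\min}}$, and then $\|\bX\|_{\mathrm{F}}\le L\sqrt{\gamma_{\max}}/(2\gamma_{\min}^{3/2})$, matching the statement.

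\textbf{Lipschitz continuity.} Take two points $(\bA_1,\bB_1),(\bA_2,\bB_2)$ with solutions $\bX_1,\bX_2$. Subtracting the two equations gives
\begin{equation*}
\mathcal{S}_{\bH_1}(\bX_1-\bX_2) = \bC_1-\bC_2 - (\bH_1-\bH_2)\bX_2 - \bX_2(\bH_1-\bH_2),
\end{equation*}
so that
\begin{equation*}
\|\bX_1-\bX_2\|_{\mathrm{F}}\le \tfrac{1}{4\gamma_{\min}}\bigl(\|\bC_1-\bC_2\|_{\mathrm{F}} + 2\|\bH_1-\bH_2\|\,\|\bX_2\|_{\mathrm{F}}\bigr).
\end{equation*}
Two ingredients feed into this.
\begin{enumerate}
\item $\|\bH_1-\bH_2\|\le 2\sqrt{\gamma_{\max}}(\|\bA_1-\bA_2\|+\|\bB_1-\bB_2\|)$, and $\|\bX_2\|_{\mathrm{F}}$ is controlled by the boundedness step.
\item $\bC_1-\bC_2$ is handled by telescoping through each factor: $\bB^{\dagger}$, $\nabla\mL$ and $\bA^{\top}$ (and symmetrically for the second term). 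Three facts are needed:
\begin{itemize}
\item $\|\nabla\mL(\bW_1)-\nabla\mL(\bW_2)\|_{\mathrm{F}}\le M\|\bW_1-\bW_2\|_{\mathrm{F}}$ with $\|\bW_1-\bW_2\|_{\mathrm{F}}\le \sqrt{\gamma_{\max}}(\|\bA_1-\bA_2\|_{\mathrm{F}}+\|\bB_1-\bB_2\|_{\mathrm{F}})$; this produces the $M\gamma_{\max}/\gamma_{\min}^{3/2}$-type term.
\item $\|\bA_1-\bA_2\|$ enters linearly.
\item The perturbation of the pseudo-inverse, $\|\bB_1^{\dagger}-\bB_2^{\dagger}\|$, follows from the resolvent identity $\bP^{-1}-\bQ^{-1} = \bP^{-1}(\bQ-\bP)\bQ^{-1}$ applied to $\bB^{\top}\bB$ plus a product rule. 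It is of order $\sqrt{\gamma_{\max}}/\gamma_{\min}^{2}\cdot\|\bB_1-\bB_2\|$ up to constants.
\end{itemize}
\end{enumerate}

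\textbf{Main obstacle.} The hardest part is the bookkeeping in this pseudo-inverse and telescoping step, so that the constants collapse to $\frac{M\gamma_{\max}}{2\gamma_{\min}^{3/2}} + \frac{3L\gamma_{\max}^{3/2}}{\gamma_{\min}^3}$. I would first obtain some bound of the form $c_1 M\,\mathrm{poly}(\gamma) + c_2 L\,\mathrm{poly}(\gamma)$, then loosen each $L$-term to the common form $\gamma_{\max}^{3/2}/\gamma_{\min}^3$ using $\gamma_{\min}\le\gamma_{\max}$. If the exact numerical constants do not match, the qualitative conclusion, a Lipschitz constant polynomial in $L$, $M$, $\gamma_{\max}$ and $1/\gamma_{\min}$, is what \Cref{theorem2} actually requires.
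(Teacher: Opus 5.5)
Your proposal is correct and follows the paper's proof essentially step for step. The shared steps are the Sylvester-operator bound $\|\bH\bX+\bX\bH\|_{\mathrm{F}}\ge 4\gamma_{\min}\|\bX\|_{\mathrm{F}}$, the estimate $\|\bC\|_{\mathrm{F}}\le 2L\sqrt{\gamma_{\max}/\gamma_{\min}}$ via $\|(\bB^{\top}\bB)^{-1}\bB^{\top}\|\le 1/\sqrt{\gamma_{\min}}$ (with, as in the paper, Condition~3 read as $\|\nabla\mL\|_{\mathrm{F}}\le L$), the perturbation identity for $\bX_1-\bX_2$, and the three-term telescoping of $\bC_1-\bC_2$ through $\bA^{\top}$, $\nabla\mL$ and the pseudo-inverse. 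One slip to fix in the bookkeeping is that the resolvent identity gives $\|\bB_1^{\dagger}-\bB_2^{\dagger}\|\le\bigl(\frac{1}{\gamma_{\min}}+\frac{2\gamma_{\max}}{\gamma_{\min}^{2}}\bigr)\|\bB_1-\bB_2\|$, which is of order $\gamma_{\max}/\gamma_{\min}^2$, not $\sqrt{\gamma_{\max}}/\gamma_{\min}^2$; this term, multiplied by $L\sqrt{\gamma_{\max}}/(2\gamma_{\min})$, yields the dominant $L\gamma_{\max}^{3/2}/\gamma_{\min}^{3}$, into which the paper absorbs the remaining $L$-terms using $\gamma_{\min}\le\gamma_{\max}$.
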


\begin{proof}
Since both $\bB^{\top}\bB$ and $\bA\bA^{\top}$ are positive definite for $(\bA,\bB) \in \mW$, the Sylvester equation admits a unique solution $\bX$; see, e.g.,~\cite{higham2008functions}.
Define $\bC = \left(\bB^{\top}\bB\right)^{-1}\bB^{\top} \nabla\mL \bA^{\top} + \bA \nabla\mL^{\top}\bB\left(\bB^{\top}\bB\right)^{-1}$. We first analyze the left-hand side of the equation. Under the positive definiteness condition, we have
    \begin{equation}
    \label{eq20}
        \left\| \bH\bX + \bX \bH\right\|_{\mathrm{F}} \geq 4\gamma_{\min} \left\| \bX\right\|_{\mathrm{F}}. 
    \end{equation}
    Since $\left\|\bC\right\|_{\mathrm{F}} \leq \frac{2L \sqrt{\gamma_{\max}}}{\sqrt{\gamma_{\min}}}$, we have that $\left\| \bX\right\|_{\mathrm{F}} \leq \frac{L \sqrt{\gamma_{\max}}}{2 \gamma_{\min}^{3/2}}$.

    Consequently, for any $(\Tilde{\bA},\Tilde{\bB}) \in \mW$ with the corresponding solution $\Tilde{\bX}$, the matrix $\Tilde{\bH}$, and the right-hand side function $\Tilde{\bC}$, we obtain
    \begin{equation*}
        \begin{split}
            \left\| \Tilde{\bX} - \bX\right\|_{\mathrm{F}} & \leq \frac{1}{4\gamma_{\min}} \left\| \bH\left(\Tilde{\bX} - \bX\right) + \left(\Tilde{\bX} - \bX\right) \bH\right\|_{\mathrm{F}} \\
            & \leq \frac{1}{4\gamma_{\min}} \left\| \bC - \Tilde{\bC}\right\|_{\mathrm{F}} + \frac{1}{4\gamma_{\min}} \left\| \left(\bH - \Tilde{\bH}\right)\Tilde{\bX} + \Tilde{\bX}\left(\bH - \Tilde{\bH}\right)\right\|_{\mathrm{F}}\\
            & \leq \frac{1}{4\gamma_{\min}} \left\| \bC - \Tilde{\bC}\right\|_{\mathrm{F}} + \frac{L\gamma_{\max}}{\gamma_{\min}^{5/2}} \left\| \left(\bA,\bB\right) - \left(\Tilde{\bA},\Tilde{\bB}\right)\right\|_{\mathrm{F}}
        \end{split}
    \end{equation*}
    Therefore, the Lipschitz continuity of $\bX$ with respect to $(\bA,\bB)$ follows from the Lipschitz continuity of $\bC$.

    We now bound $\|\bC-\Tilde{\bC}\|_{\mathrm{F}}$. For notational simplicity, we denote the loss function with $(\Tilde{\bA},\Tilde{\bB})$ as $\Tilde{\mL}$. Observe that
    \begin{equation*}
        \begin{split}
            \left\| \bC - \Tilde{\bC}\right\|_{\mathrm{F}} 
            & \leq 2 \left\| \left(\bB^{\top}\bB\right)^{-1}\bB^{\top} \nabla\mL \bA^{\top} - \left(\Tilde{\bB}^{\top}\Tilde{\bB}\right)^{-1}\Tilde{\bB}^{\top} \nabla\Tilde{\mL} \Tilde{\bA}^{\top}\right\|_{\mathrm{F}}\\
            & \leq 2\left\| \left(\bB^{\top}\bB\right)^{-1}\bB^{\top} \nabla\mL \bA^{\top} - \left(\bB^{\top}\bB\right)^{-1}\bB^{\top} \nabla\mL \Tilde{\bA}^{\top}\right\|_{\mathrm{F}}\\
            & \hspace{1em} + 2\left\| \left(\bB^{\top}\bB\right)^{-1}\bB^{\top} \nabla\mL \Tilde{\bA}^{\top} - \left(\bB^{\top}\bB\right)^{-1}\bB^{\top} \nabla\Tilde{\mL} \Tilde{\bA}^{\top}\right\|_{\mathrm{F}}\\
            & \hspace{1em} + 2\left\| \left(\bB^{\top}\bB\right)^{-1}\bB^{\top} \nabla\Tilde{\mL} \Tilde{\bA}^{\top} - \left(\Tilde{\bB}^{\top}\Tilde{\bB}\right)^{-1}\Tilde{\bB}^{\top} \nabla\Tilde{\mL} \Tilde{\bA}^{\top}\right\|_{\mathrm{F}}.
        \end{split}
    \end{equation*}
    Then, we bound these three terms separately. For the first term,
    \begin{equation*}
            \left\| \left(\bB^{\top}\bB\right)^{-1}\bB^{\top} \nabla\mL \bA^{\top} - \left(\bB^{\top}\bB\right)^{-1}\bB^{\top} \nabla\mL \Tilde{\bA}^{\top}\right\|_{\mathrm{F}}  \leq \frac{L}{\sqrt{\gamma_{\min}}} \left\| \bA - \Tilde{\bA}\right\|_{\mathrm{F}}.
    \end{equation*}
    For the second term, using the $M$-smoothness of $\mathcal{L}$,
    \begin{equation*}
        \begin{split}
            & \left\| \left(\bB^{\top}\bB\right)^{-1}\bB^{\top} \nabla\mL \Tilde{\bA}^{\top} - \left(\bB^{\top}\bB\right)^{-1}\bB^{\top} \nabla\Tilde{\mL} \Tilde{\bA}^{\top}\right\|_{\mathrm{F}}\\
            & \leq \frac{\sqrt{\gamma_{\max}}}{\sqrt{\gamma_{\min}}} \left\| \nabla \mL - \nabla \Tilde{\mL}\right\|_{\mathrm{F}} \leq \frac{M \sqrt{\gamma_{\max}}}{\sqrt{\gamma_{\min}}} \left\| \bB\bA - \Tilde{\bB}\Tilde{\bA}\right\|_{\mathrm{F}}\\
            & \leq  \frac{M  \gamma_{\max}}{\sqrt{\gamma_{\min}}} \left\| (\bA,\bB) - (\Tilde{\bA},\Tilde{\bB})\right\|_{\mathrm{F}}.
        \end{split}
    \end{equation*}
    For the third term, we have
    \begin{equation*}
        \begin{split}
            & \left\| \left(\bB^{\top}\bB\right)^{-1}\bB^{\top} \nabla\Tilde{\mL} \Tilde{\bA}^{\top} - \left(\Tilde{\bB}^{\top}\Tilde{\bB}\right)^{-1}\Tilde{\bB}^{\top} \nabla\Tilde{\mL} \Tilde{\bA}^{\top}\right\|_{\mathrm{F}} \\
            & \leq L \sqrt{\gamma_{\max}} \left\| \left(\bB^{\top}\bB\right)^{-1}\bB^{\top}  - \left(\Tilde{\bB}^{\top}\Tilde{\bB}\right)^{-1}\Tilde{\bB}^{\top} \right\|_{\mathrm{F}}\\
            & \leq L \sqrt{\gamma_{\max}} \left\| \left(\bB^{\top}\bB\right)^{-1}\bB^{\top}  - \left(\bB^{\top}\bB\right)^{-1}\Tilde{\bB}^{\top} \right\|_{\mathrm{F}}\\
            & \hspace{1em} + L \sqrt{\gamma_{\max}} \left\|\left(\bB^{\top}\bB\right)^{-1}\Tilde{\bB}^{\top} - \left(\Tilde{\bB}^{\top}\Tilde{\bB}\right)^{-1}\Tilde{\bB}^{\top}\right\|_{\mathrm{F}}\\
            & \leq \left(\frac{L \sqrt{\gamma_{\max}}}{\gamma_{\min}}  + \frac{2L \gamma_{\max}^{3/2}}{\gamma_{\min}^2}\right) \left\|\bB - \Tilde{\bB} \right\|_{\mathrm{F}}.
        \end{split}
    \end{equation*}
    Combining the above bounds, we conclude that 
    \begin{equation*}
        \begin{split}
            & \left\| \Tilde{\bX} - \bX\right\|_{\mathrm{F}} \\
            & \leq \left(\frac{L}{2\gamma_{\min}^{3/2}} + \frac{M \gamma_{\max}}{2\gamma_{\min}^{3/2}} + \frac{L \sqrt{\gamma_{\max}}}{2 \gamma_{\min}^2} + \frac{L \gamma_{\max}^{3/2}}{\gamma_{\min}^3} + \frac{L \gamma_{\max}}{\gamma_{\min}^{5/2}} \right) \left\| \left(\bA,\bB\right) - \left(\Tilde{\bA},\Tilde{\bB}\right)\right\|_{\mathrm{F}}\\
            & \leq \left( \frac{M \gamma_{\max}}{2\gamma_{\min}^{3/2}} + \frac{3L \gamma_{\max}^{3/2}}{\gamma_{\min}^3} \right) \left\| \left(\bA,\bB\right) - \left(\Tilde{\bA},\Tilde{\bB}\right)\right\|_{\mathrm{F}}.
        \end{split}
    \end{equation*}
\end{proof}

\begin{lemma}
\label{lemma2}
    The field function $F(\bA,\bB)$ is Lipschitz continuous with respect to $(\bA,\bB)$ on $\mW$, with Lipschitz constant
    \begin{equation*}
        \frac{18L \gamma_{\max}^{2}}{\gamma_{\min}^3} + \frac{3M \gamma_{\max}^{3/2}}{\gamma_{\min}^{3/2}}.
    \end{equation*}
\end{lemma}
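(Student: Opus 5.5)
We bound the two components $F_{\bA}$ and $F_{\bB}$ separately and add the bounds, using \Cref{lemma1} for the $\bX$-dependent terms. Throughout, fix $(\bA,\bB),(\Tilde{\bA},\Tilde{\bB})\in\mW$ with Sylvester solutions $\bX,\Tilde{\bX}$, and write $\Delta:=\|(\bA,\bB)-(\Tilde{\bA},\Tilde{\bB})\|_{\mathrm{F}}$.

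The basic estimates are the following. From \Cref{assumption2} we have $\|\bA\|,\|\bB\|\le\sqrt{\gamma_{\max}}$, $\|(\bB^{\top}\bB)^{-1}\|,\|(\bA\bA^{\top})^{-1}\|\le 1/\gamma_{\min}$, and $\|(\bB^{\top}\bB)^{-1}\bB^{\top}\|\le 1/\sqrt{\gamma_{\min}}$. Condition 3 gives $\|\nabla\mL\|_{\mathrm{F}}\le L$. Condition 2 gives $\|\nabla\mL-\nabla\Tilde{\mL}\|_{\mathrm{F}}\le M\|\bB\bA-\Tilde{\bB}\Tilde{\bA}\|_{\mathrm{F}}\le 2M\sqrt{\gamma_{\max}}\,\Delta$. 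The key perturbation bound is the one already derived in the third term of the proof of \Cref{lemma1}:
\[
\|(\bB^{\top}\bB)^{-1}\bB^{\top}-(\Tilde{\bB}^{\top}\Tilde{\bB})^{-1}\Tilde{\bB}^{\top}\|_{\mathrm{F}}\le \Bigl(\tfrac{1}{\gamma_{\min}}+\tfrac{2\gamma_{\max}}{\gamma_{\min}^2}\Bigr)\|\bB-\Tilde{\bB}\|_{\mathrm{F}}.
\]
The symmetric bound holds for $\bA^{\top}(\bA\bA^{\top})^{-1}$. Combining these, the projector $\bP_{\bB}^{\text{null}}=\bI-\bB\,(\bB^{\top}\bB)^{-1}\bB^{\top}$ is Lipschitz with constant $O(\gamma_{\max}^{3/2}/\gamma_{\min}^2)$, and $\|\bP_{\bB}^{\text{null}}\|\le 1$.

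\textbf{Component $F_{\bA}$.} Write $F_{\bA}=-(\bB^{\top}\bB)^{-1}\bB^{\top}\nabla\mL+\bX\bA$. For the first term we telescope, changing one factor at a time: the pseudoinverse change contributes $L\cdot O(\gamma_{\max}/\gamma_{\min}^2)\Delta$, and the gradient change contributes $\gamma_{\min}^{-1/2}\cdot 2M\sqrt{\gamma_{\max}}\,\Delta$. For $\bX\bA-\Tilde{\bX}\Tilde{\bA}=(\bX-\Tilde{\bX})\bA+\Tilde{\bX}(\bA-\Tilde{\bA})$, \Cref{lemma1} supplies both the Lipschitz constant of $\bX$ (multiplied by $\sqrt{\gamma_{\max}}$) and the bound $\|\Tilde{\bX}\|_{\mathrm{F}}\le L\sqrt{\gamma_{\max}}/(2\gamma_{\min}^{3/2})$.

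\textbf{Component $F_{\bB}$.} Write $F_{\bB}=-\bP_{\bB}^{\text{null}}\nabla\mL\,\bA^{\top}(\bA\bA^{\top})^{-1}-\bB\bX$. We telescope across the three factors $\bP_{\bB}^{\text{null}}$, $\nabla\mL$, and $\bA^{\top}(\bA\bA^{\top})^{-1}$, each bounded as above. The term $\bB\bX$ is handled exactly like $\bX\bA$.

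The final step is to collect all terms and upper bound each by a multiple of either $L\gamma_{\max}^2/\gamma_{\min}^3$ or $M\gamma_{\max}^{3/2}/\gamma_{\min}^{3/2}$. This uses $\gamma_{\min}\le\gamma_{\max}$ to inflate lower-order ratios; we also assume, as implicitly done in \Cref{lemma1}, that $\gamma_{\min}\le 1\le\gamma_{\max}$ where needed. We then sum the coefficients and use $\|F-\Tilde F\|_{\mathrm{F}}\le\|F_{\bA}-\Tilde F_{\bA}\|_{\mathrm{F}}+\|F_{\bB}-\Tilde F_{\bB}\|_{\mathrm{F}}$. The dominant $L$ term should come from $\sqrt{\gamma_{\max}}$ times the Lipschitz constant of $\bX$, namely $3L\gamma_{\max}^2/\gamma_{\min}^3$, appearing twice, plus the projector and pseudoinverse terms. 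The dominant $M$ term should come from $\sqrt{\gamma_{\max}}\cdot M\gamma_{\max}/(2\gamma_{\min}^{3/2})$, also appearing twice, plus the direct gradient terms.

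The main obstacle is this bookkeeping: making the summed numerical coefficients stay within $18$ and $3$. We cannot lean on \Cref{lemma1}'s final line, since it was itself obtained by loose absorption; instead we will recompute each term and absorb it generously, so that hitting the stated constants is a matter of careful accounting rather than a new idea.
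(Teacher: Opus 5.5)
Your plan is the paper's proof step for step. You split $F$ into $F_{\bA}$ and $F_{\bB}$, telescope each product one factor at a time, use the pseudoinverse and projector perturbation bounds, invoke \Cref{lemma1} for $\bX\bA$ and $\bB\bX$, and absorb ratios using $\gamma_{\min}\le\gamma_{\max}$. The $L$-part closes as you predict: $4+4$ from the two $\bX$-terms, $3$ from the pseudoinverse term, $7$ from the projector term, total $18$.

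The $M$-part does not close, and careful accounting will not fix it. Write $\kappa:=\gamma_{\max}/\gamma_{\min}$. Your estimate $\|\nabla\mL-\nabla\tilde\mL\|_{\mathrm F}\le 2M\sqrt{\gamma_{\max}}\,\Delta$ is correct. Each direct gradient term is multiplied by a factor of norm at most $\gamma_{\min}^{-1/2}$: $(\bB^{\top}\bB)^{-1}\bB^{\top}$ in $F_{\bA}$, and a projector times $\bA^{\top}(\bA\bA^{\top})^{-1}$ in $F_{\bB}$. So these two terms alone give $4M\sqrt{\kappa}$. The two $\bX$-terms add $M\kappa^{3/2}$ with \Cref{lemma1} as stated, or $2M\kappa^{3/2}$ if you recompute it with your factor $2$. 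Since $4\sqrt{\kappa}+\kappa^{3/2}\le 3\kappa^{3/2}$ only for $\kappa\ge 2$, your bound exceeds the claimed one for well-conditioned factors; at $\kappa=1$ you get $5M$ or $6M$ against $3M$. You cannot absorb $M$-terms into $L$-terms, since $L$ and $M$ are independent constants. So your argument does prove Lipschitz continuity with a constant of the same form, which is all \Cref{theorem2} uses, but not the stated constant.

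The paper reaches $3$ only by writing $\|\bB\bA-\tilde\bB\tilde\bA\|_{\mathrm F}\le\sqrt{\gamma_{\max}}\,\Delta$. That step is off by a factor $\sqrt2$: take $m=n=r=1$ with $\gamma_{\min}<\gamma_{\max}$, set $A=B=\sqrt{\gamma_{\max}}$, and decrease both by a small $d$. So you cannot simply copy it.

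Here is a repair that does give the stated constant.
\begin{itemize}
\item Use the sharp bound $\|\bB\bA-\tilde\bB\tilde\bA\|_{\mathrm F}\le\sqrt{2\gamma_{\max}}\,\Delta$ throughout. This multiplies the $M$-part of \Cref{lemma1} by $\sqrt2$. Its $L$-part stays valid, because the paper's bound on the $\bH$-perturbation term already has slack exactly $\sqrt2$.
\item Each block then has $M$-part at most $\tfrac{\sqrt2}{2}(2\sqrt{\kappa}+\kappa^{3/2})M$.
\item Combine the blocks in quadrature, $\|F-\tilde F\|_{\mathrm F}=(\|F_{\bA}-\tilde F_{\bA}\|_{\mathrm F}^2+\|F_{\bB}-\tilde F_{\bB}\|_{\mathrm F}^2)^{1/2}$, instead of by the triangle inequality.
\item By Minkowski's inequality the $M$-part becomes $(2\sqrt{\kappa}+\kappa^{3/2})M\le 3\kappa^{3/2}M$. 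The $L$-part is still at most $(7+11)L\gamma_{\max}^{2}/\gamma_{\min}^{3}$.
\end{itemize}

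Finally, drop the extra hypothesis $\gamma_{\min}\le 1\le\gamma_{\max}$. \Cref{lemma1} does not use it implicitly, and you never need it. Every $L$-term you collect has the form $L\gamma_{\max}^{a}\gamma_{\min}^{-1-a}$ with $a\le 2$, and every $M$-term has the form $M\kappa^{b}$ with $b\le 3/2$. So $\gamma_{\min}\le\gamma_{\max}$ alone justifies each absorption, and adding the hypothesis would only prove a weaker lemma.
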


\begin{proof}
From the explicit expression of $F$ in \eqref{eq13}, we observe that $F$ is composed of the Sylvester solution $\bX(\bA,\bB)$, the gradient $\nabla\mathcal{L}$, and matrix products and inverses involving $\bA$ and $\bB$.
Since all matrix operations appearing in $F$ are smooth on the feasible set $\mW$, the Lipschitz continuity of $F$ follows from the Lipschitz continuity of $\bX$ together with standard norm inequalities.
We explicitly construct the Lipschitz constant below.

First, by the $M$-smoothness of $\mL$, we have
    \begin{equation*}
        \begin{split}
            \left\| \nabla \mL - \nabla \Tilde{\mL}\right\|_{\mathrm{F}} \leq M \left\| \bB\bA - \Tilde{\bB}\Tilde{\bA}\right\|_{\mathrm{F}} \leq M\sqrt{\gamma_{\max}} \left\| (\bA,\bB) - (\Tilde{\bA},\Tilde{\bB})\right\|_{\mathrm{F}}.
        \end{split}
    \end{equation*}
    Next, consider the difference of the pseudoinverse terms:
    \begin{equation*}
        \begin{split}
            & \left\|\left(\bB^{\top}\bB\right)^{-1}\bB^{\top} - \left(\Tilde{\bB}^{\top}\Tilde{\bB}\right)^{-1}\Tilde{\bB}^{\top}\right\|_{\mathrm{F}}\\
            & \leq \left\| \left(\bB^{\top}\bB\right)^{-1}\bB^{\top}  - \left(\bB^{\top}\bB\right)^{-1}\Tilde{\bB}^{\top} \right\|_{\mathrm{F}} +  \left\|\left(\bB^{\top}\bB\right)^{-1}\Tilde{\bB}^{\top} - \left(\Tilde{\bB}^{\top}\Tilde{\bB}\right)^{-1}\Tilde{\bB}^{\top}\right\|_{\mathrm{F}}\\
            & \leq \left(\frac{1}{\gamma_{\min}} + \frac{2\gamma_{\max}}{\gamma_{\min}^2} \right) \left\|\bB - \Tilde{\bB} \right\|_{\mathrm{F}},
        \end{split}
    \end{equation*}
    and similarly,
    \begin{equation*}
        \left\| \bA^{\top} \left(\bA \bA^{\top}\right)^{-1} - \Tilde{\bA}^{\top} \left(\Tilde{\bA}\Tilde{\bA}^{\top}\right)^{-1} \right\|_{\mathrm{F}} \leq \left(\frac{1}{\gamma_{\min}} + \frac{2\gamma_{\max}}{\gamma_{\min}^2} \right) \left\|\bA - \Tilde{\bA} \right\|_{\mathrm{F}}.
    \end{equation*}
    Furthermore, for the projection term, we obtain
    \begin{equation*}
        \begin{split}
            & \left\| \bB \left(\bB^{\top}\bB\right)^{-1}\bB^{\top} - \Tilde{\bB} \left(\Tilde{\bB}^{\top}\Tilde{\bB}\right)^{-1}\Tilde{\bB}^{\top}\right\|_{\mathrm{F}} \\
            & \leq \left\| \bB \left(\bB^{\top}\bB\right)^{-1}\bB^{\top} - \Tilde{\bB} \left(\Tilde{\bB}^{\top}\Tilde{\bB}\right)^{-1}\bB^{\top}\right\|_{\mathrm{F}} \\
            & \hspace{1em} + \left\| \Tilde{\bB} \left(\Tilde{\bB}^{\top}\Tilde{\bB}\right)^{-1}\bB^{\top} - \Tilde{\bB} \left(\Tilde{\bB}^{\top}\Tilde{\bB}\right)^{-1}\Tilde{\bB}^{\top}\right\|_{\mathrm{F}} \\
            & \leq \left(\frac{\sqrt{\gamma_{\max}}}{\gamma_{\min}} + \frac{2\gamma_{\max}^{3/2}}{\gamma_{\min}^2} + \frac{1}{\sqrt{\gamma_{\min}}} \right) \left\|\bB - \Tilde{\bB} \right\|_{\mathrm{F}}.
        \end{split}
    \end{equation*}
Combining the above bounds, we obtain
\begin{equation*}
    \begin{split}
        & \left\| \left(\bB^{\top}\bB\right)^{-1} \bB^{\top} \nabla_{\bW} \mL - \left(\Tilde{\bB}^{\top}\Tilde{\bB}\right)^{-1} \Tilde{\bB}^{\top} \nabla_{\bW} \Tilde{\mL}\right\|_{\mathrm{F}} \\
        & \leq \left\| \left(\bB^{\top}\bB\right)^{-1} \bB^{\top} - \left(\Tilde{\bB}^{\top}\Tilde{\bB}\right)^{-1} \Tilde{\bB}^{\top}\right\|_{\mathrm{F}}  \left\| \nabla_{\bW} \mL\right\|_{\mathrm{F}} \\
        & \hspace{1em} + \left\| \left(\Tilde{\bB}^{\top}\Tilde{\bB}\right)^{-1} \Tilde{\bB}^{\top}\right\| \left\| \nabla_{\bW} \mL - \nabla_{\bW} \Tilde{\mL}\right\|_{\mathrm{F}} \\
        & \leq \left(\frac{L}{\gamma_{\min}} + \frac{2L\gamma_{\max}}{\gamma_{\min}^2}+ \frac{M \sqrt{\gamma_{\max}}}{\sqrt{\gamma_{\min}}} \right) \left\| (\bA,\bB) - (\Tilde{\bA},\Tilde{\bB})\right\|_{\mathrm{F}}\\
        & \leq \left(\frac{3L\gamma_{\max}}{\gamma_{\min}^2}+ \frac{M \sqrt{\gamma_{\max}}}{\sqrt{\gamma_{\min}}} \right) \left\| (\bA,\bB) - (\Tilde{\bA},\Tilde{\bB})\right\|_{\mathrm{F}},
    \end{split}
\end{equation*}
\begin{equation*}
    \begin{split}
        & \left\| \bX\bA - \Tilde{\bX}\Tilde{\bA}\right\|_{\mathrm{F}} \leq \left\| \bX - \Tilde{\bX} \right\|_{\mathrm{F}} \left\| \bA\right\| + \left\| \Tilde{\bX}\right\|_{\mathrm{F}} \left\|\bA - \Tilde{\bA} \right\|_{\mathrm{F}}\\
        & \leq \left(\frac{M \gamma_{\max}^{3/2}}{2\gamma_{\min}^{3/2}} + \frac{4L \gamma_{\max}^{2}}{\gamma_{\min}^3} \right) \left\| \left(\bA,\bB\right) - \left(\Tilde{\bA},\Tilde{\bB}\right)\right\|_{\mathrm{F}},
    \end{split}
\end{equation*}
and
\begin{equation*}
    \begin{split}
        \left\| \bB\bX - \Tilde{\bB}\Tilde{\bX}\right\|_{\mathrm{F}} \leq  \left(\frac{M \gamma_{\max}^{3/2}}{2\gamma_{\min}^{3/2}} + \frac{4L \gamma_{\max}^{2}}{\gamma_{\min}^3} \right) \left\| \left(\bA,\bB\right) - \left(\Tilde{\bA},\Tilde{\bB}\right)\right\|_{\mathrm{F}}.
    \end{split}
\end{equation*}

For notational simplicity, we define $\bP_{\bB}^{\text{null}} = \bI - \bB \left(\bB^{\top}\bB\right)^{-1}\bB^{\top}$ and $\bP_{\bA} = \bA^{\top} \left(\bA \bA^{\top}\right)^{-1}$, and analogously define $\bP_{\Tilde{\bB}}^{\text{null}}$ and $\bP_{\Tilde{\bA}}$. Then, by repeatedly applying the triangle inequality, we obtain
\begin{equation*}
    \begin{split}
        & \left\| \bP_{\bB}^{\text{null}} \nabla_{\bW} \mL \bP_{\bA} - \bP_{\Tilde{\bB}}^{\text{null}} \nabla_{\bW} \Tilde{\mL} \bP_{\Tilde{\bA}}\right\|_{\mathrm{F}}  \\
        & \leq \left\| \bP_{\bB}^{\text{null}} \nabla_{\bW} \mL \bP_{\bA} - \bP_{\Tilde{\bB}}^{\text{null}} \nabla_{\bW} \mL \bP_{\bA}\right\|_{\mathrm{F}} + \left\| \bP_{\Tilde{\bB}}^{\text{null}} \nabla_{\bW} \mL \bP_{\bA} - \bP_{\Tilde{\bB}}^{\text{null}} \nabla_{\bW} \Tilde{\mL} \bP_{\bA}\right\|_{\mathrm{F}}\\
        & \hspace{1em} + \left\| \bP_{\Tilde{\bB}}^{\text{null}} \nabla_{\bW} \Tilde{\mL} \bP_{\bA} - \bP_{\Tilde{\bB}}^{\text{null}} \nabla_{\bW} \Tilde{\mL} \bP_{\Tilde{\bA}}\right\|_{\mathrm{F}}\\
        & \leq \left(\frac{7L \gamma_{\max}^{3/2}}{\gamma_{\min}^{5/2}} + \frac{M \sqrt{\gamma_{\max}}}{\sqrt{\gamma_{\min}}}\right) \left\| \left(\bA,\bB\right) - \left(\Tilde{\bA},\Tilde{\bB}\right)\right\|_{\mathrm{F}}.
    \end{split}
\end{equation*}

    Based on the above inequalities and the Lipschitzness of $\bX$ established in \Cref{lemma1}, we conclude from the explicit expression of $F$ in \eqref{eq13} that
    \begin{equation*}
        \begin{split}
            & \left\|F(\bA,\bB) - F(\Tilde{\bA},\Tilde{\bB}) \right\|_{\mathrm{F}} \\
            & \leq \left(\frac{18L \gamma_{\max}^{2}}{\gamma_{\min}^3} +  \frac{2M \sqrt{\gamma_{\max}}}{\sqrt{\gamma_{\min}}} + \frac{M \gamma_{\max}^{3/2}}{\gamma_{\min}^{3/2}}\right) \left\| \left(\bA,\bB\right) - \left(\Tilde{\bA},\Tilde{\bB}\right)\right\|_{\mathrm{F}}\\
            & \leq \left(\frac{18L \gamma_{\max}^{2}}{\gamma_{\min}^3} + \frac{3M \gamma_{\max}^{3/2}}{\gamma_{\min}^{3/2}}\right) \left\| \left(\bA,\bB\right) - \left(\Tilde{\bA},\Tilde{\bB}\right)\right\|_{\mathrm{F}}.
        \end{split}
    \end{equation*}

    Moreover, we derive the boundedness of $F$. We notice from \eqref{eq13} that
    \begin{equation*}
        \begin{split}
            & \left\| F(\bA,\bB)\right\|_{\mathrm{F}} \\
            & \leq \left\| \left(\bB^{\top}\bB\right)^{-1} \bB^{\top} \nabla_{\bW} \mL\right\|_{\mathrm{F}} + \left\|\bX\bA \right\|_{\mathrm{F}} + \left\| \bP_{\bB}^{\text{null}} \nabla_{\bW} \mL \bP_{\bA} \right\|_{\mathrm{F}} + \left\| \bB\bX\right\|_{\mathrm{F}} \\
            & \leq \frac{2L}{\sqrt{\gamma_{\min}}} + \frac{L \gamma_{\max}}{ \gamma_{\min}^{3/2}}.
        \end{split}
    \end{equation*}
\end{proof}

We are now ready to establish the convergence of the discretized ODELoRA scheme for problem~\eqref{eq_strongly_convex_lora}.
To avoid redundancy and repeated arguments, we focus on ODELoRA-RK4, as the Euler and RK2 methods can be treated as simpler special cases.
For notational convenience, we define $\bZ := (\bA,\bB)$.
Accordingly, we write $\bZ_t := (\bA_t,\bB_t)$ and denote the intermediate RK4 stages by
$\bZ_t^{(i)} := (\bA_t^{(i)},\bB_t^{(i)})$ for $i \in \{1,2,3\}$.

We observe from \eqref{eq20} that 
\begin{equation*}
    \left\| \bX_t \right\|_{\mathrm{F}} \leq \frac{1}{2\gamma_{\min}}\left\| \left(\bB_{t}^{\top}\bB_{t}\right)^{-1}\bB_{t}^{\top} \nabla_{\bW}\mL_{t} \bA_{t}^{\top} \right\|_{\mathrm{F}} \leq \frac{\sqrt{\gamma_{\max}}}{2\gamma_{\min}^{3/2}} \left\|\nabla_{\bW} \mL_t \right\|_{\mathrm{F}}.
\end{equation*}
Consequently, it follows from \eqref{eq13} that
\begin{equation*}
    \left\|F(\bZ_{t}) \right\|_{\mathrm{F}} \leq \left(\frac{2}{\sqrt{\gamma_{\min}}} + \frac{\gamma_{\max}}{\gamma_{\min}^{3/2}}\right) \left\|\nabla_{\bW} \mL_t \right\|_{\mathrm{F}}.
\end{equation*}

From the update scheme of RK4 in \eqref{eq16} and the Lipschitz continuity of $F$ established in \Cref{lemma2}, we obtain
\begin{equation*}
    \begin{split}
        \left\|F(\bZ_{t}^{(1)})- F(\bZ_{t}) \right\|_{\mathrm{F}} & \leq C_{1,1} \left\| \bZ_{t}^{(1)} - \bZ_{t}\right\|_{\mathrm{F}} = \frac{C_{1,1} h}{2} \left\|F(\bZ_t) \right\|_{\mathrm{F}}\\
        & \leq \frac{C_{1,1} h}{2} \left(\frac{2}{\sqrt{\gamma_{\min}}} + \frac{\gamma_{\max}}{\gamma_{\min}^{3/2}}\right) \left\|\nabla_{\bW}\mL_{t} \right\|_{\mathrm{F}} \\
        & := C_{1,2} h \left\|\nabla_{\bW}\mL_{t} \right\|_{\mathrm{F}}
    \end{split}
\end{equation*}
where $C_{1,1}$ denotes the Lipschitz constant of $F$, and $C_{1,2}$ is the constant depending only on $M$, $L$, $\gamma_{\max}$, and $\gamma_{\min}$. 
Here, $C_{1,1}$ and $C_{1,2}$ depends linearly on $M$, since the Lipschitz constant in \Cref{lemma2}.
These constants are independent of the iteration index $t$ and the step size $h$, and depend only on the quantities specified in Assumptions~\ref{assumption1} and~\ref{assumption2}.
Similarly, we have
\begin{equation*}
    \left\|F(\bZ_{t}^{(2)})- F(\bZ_{t}) \right\|_{\mathrm{F}} \leq C_{1,1} \left\| \bZ_{t}^{(2)} - \bZ_{t}\right\|_{\mathrm{F}} = \frac{C_{1,1} h}{2} \left\|F(\bZ_{t}^{(1)}) \right\|_{\mathrm{F}} \leq C_{1,3} h \left\|\nabla_{\bW}\mL_{t} \right\|_{\mathrm{F}},
\end{equation*}
and
\begin{equation*}
    \left\|F(\bZ_{t}^{(3)})- F(\bZ_{t}) \right\|_{\mathrm{F}} \leq C_{1,1} \left\| \bZ_{t}^{(3)} - \bZ_{t}\right\|_{\mathrm{F}} = C_{1,1} h \left\|F(\bZ_{t}^{(2)}) \right\|_{\mathrm{F}} \leq C_{1,4} h \left\|\nabla_{\bW}\mL_{t} \right\|_{\mathrm{F}},
\end{equation*}
where $C_{1,3}, C_{1,4}$ are constants depending only on $M$, $L$, $\gamma_{\max}$, and $\gamma_{\min}$, and scale linearly on $M$ as $C_{1,2}$.
Therefore, 
\begin{equation}
        \left\|\frac{1}{6}\left(F(\bZ_t) + 2F(\bZ_{t}^{(1)}) + 2F(\bZ_{t}^{(2)}) + F(\bZ_{t}^{(3)})\right)  - F(\bZ_t)\right\|_{\mathrm{F}}  \leq C_{1,4} h \left\|\nabla_{\bW}\mL_{t} \right\|_{\mathrm{F}}.
\end{equation}
As a result,
\begin{equation*}
    \left\| (\Delta\bA_t, \Delta\bB_t) - h F(\bZ_t) \right\|_{\mathrm{F}} \leq C_{1,4}h^2 \left\|\nabla_{\bW}\mL_{t} \right\|_{\mathrm{F}},
\end{equation*}
as $(\Delta\bA_t, \Delta\bB_t) = \frac{h}{6}\left(F(\bZ_t) + 2F(\bZ_{t}^{(1)}) + 2F(\bZ_{t}^{(2)}) + F(\bZ_{t}^{(3)})\right)$.

Finally, we observe that
\begin{equation*}
    \begin{split}
    \bW_{t+1} - \bW_{t} & = \bB_{t+1}\bA_{t+1} - \bB_t \bA_t \\
    & = \bB_t \Delta\bA_t + \Delta\bB_t \bA_t + \Delta\bB_t\Delta\bA_t\\
    & = -h \nabla_{\bW} \mL_{t} + h \bP_{\bB_t}^{\text{null}}\nabla_{\bW} \mL_{t}\bP_{\bA_t}^{\text{null}} + \bE,
    \end{split}
\end{equation*}
where the residual term $\bE$ satisfies $\|\bE\|_{\mathrm{F}} \leq C_{1,5}h^2\left\|\nabla_{\bW}\mL_{t} \right\|_{\mathrm{F}}$, for a constant $C_{1,5}$ depending only on $M$, $L$, $\gamma_{\max}$, and $\gamma_{\min}$, and scales linearly with $M$.
Therefore, by the smoothness of $\mL$, we have
\begin{equation*}
    \begin{split}
        \mL(\bW_{t+1}) & \leq \mL(\bW_{t}) + \left\langle \nabla_{\bW} \mL(\bW_t), \bW_{t+1} - \bW_{t} \right\rangle + \frac{M}{2} \left\|\bW_{t+1} - \bW_{t}  \right\|_{\mathrm{F}}^{2}\\
        & \leq \mL(\bW_{t}) - h \left\|\nabla_{\bW}\mL_{t} \right\|_{\mathrm{F}}^2  + h \left\langle \bP_{\bB_t}^{\text{null}}\nabla_{\bW} \mL_{t}\bP_{\bA_t}^{\text{null}}, \nabla_{\bW} \mL_{t}\right\rangle + C_{1,5} h^2 \left\|\nabla_{\bW}\mL_{t} \right\|_{\mathrm{F}}^2 \\
        & \hspace{1em} + \frac{M}{2}\left(h\left\| \nabla_{\bW}\mL_t\right\|_{\mathrm{F}} + \left\| \bE\right\|_{\mathrm{F}}\right)^2 \\
        & \leq \mL(\bW_{t}) - h (1-\varepsilon) \left\|\nabla_{\bW} \mL_{t}\right\|_{\mathrm{F}}^{2} + C_{1,6}h^2 \left\|\nabla_{\bW}\mL_{t} \right\|_{\mathrm{F}}^{2},
    \end{split}
\end{equation*}
where the constant $C_{1,6}$ depends linearly on the smoothness parameter $M$, and polynomially on $L$, $\gamma_{\max}$, and $\gamma_{\min}$. Choosing the step size such that $h \leq \frac{1-\varepsilon}{2C_{1,6}} = \mathcal{O}(\frac{1}{M})$, we obtain
\begin{equation*}
    \mL(\bW_{t+1}) \leq \mL(\bW_{t}) - \frac{(1-\varepsilon)h}{2} \left\|\nabla_{\bW}\mL_{t} \right\|_{\mathrm{F}}^{2}.
\end{equation*}
By the PL inequality for strongly convex function that
\begin{equation*}
    \left\|\nabla_{\bW}\mL_{t} \right\|_{\mathrm{F}}^{2} \geq 2 \mu \left(\mL(\bW_{t}) - \mL(\bW^{\star})\right),
\end{equation*}
we have 
\begin{equation*}
    \mL(\bW_{t+1}) - \mL(\bW^{\star}) \leq \left(1 - (1-\epsilon)\mu h\right) \left(\mL(\bW_{t}) - \mL(\bW^{\star})\right),
\end{equation*}
which completes the proof.

\subsection{Proof for \Cref{theorem3}}

Before presenting the detailed proof, we first establish the following useful lemma.
\begin{lemma}
\label{lemma3}
The following bounds hold:
    \begin{equation*}
        \left\|  \bA^{\star}\bP_{\bA_t}^{\text{null}} \right\|_{\mathrm{F}} \leq \frac{1}{\sigma_{\min}(\bB^{\star})} \left\| \bB_t\bA_t - \bB^{\star}\bA^{\star} \right\|_{\mathrm{F}},
    \end{equation*}
    and
    \begin{equation*}
        \left\| \bP_{\bB_t}^{\text{null}} \bB^{\star} \right\|_{\mathrm{F}} \leq \frac{1}{\sigma_{\min}(\bA^{\star})} \left\| \bB_t\bA_t - \bB^{\star}\bA^{\star} \right\|_{\mathrm{F}}.
    \end{equation*}
\end{lemma}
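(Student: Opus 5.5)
The key observation is that $\bP_{\bA_t}^{\text{null}}$ annihilates the row space of $\bA_t$, and $\bP_{\bB_t}^{\text{null}}$ annihilates the column space of $\bB_t$. Multiplying the residual $\bB_t\bA_t - \bB^{\star}\bA^{\star}$ by the appropriate projection therefore removes the iterate's contribution and isolates the ground-truth factor. A lower singular-value bound on the remaining ground-truth factor then gives each inequality.

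For the first bound, I would use $\bA_t\bP_{\bA_t}^{\text{null}} = \bm{0}$, which follows from \eqref{eq21} since $\bA_t\bA_t^{\top}$ is invertible under \Cref{assumption2}. This gives
\begin{equation*}
\left(\bB_t\bA_t - \bB^{\star}\bA^{\star}\right)\bP_{\bA_t}^{\text{null}} = -\bB^{\star}\bA^{\star}\bP_{\bA_t}^{\text{null}}.
\end{equation*}
Since $\bB^{\star}\in\mathbb{R}^{m\times r}$ has full column rank, i.e.\ $\sigma_{\min}(\bB^{\star})>0$, we have $\|\bB^{\star}\bm{M}\|_{\mathrm{F}}\geq \sigma_{\min}(\bB^{\star})\|\bm{M}\|_{\mathrm{F}}$ for any $\bm{M}$ with $r$ rows. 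To see this, apply the bound $\|\bB^{\star}\bm{v}\|\geq\sigma_{\min}(\bB^{\star})\|\bm{v}\|$ column by column. Hence
\begin{equation*}
\sigma_{\min}(\bB^{\star})\left\|\bA^{\star}\bP_{\bA_t}^{\text{null}}\right\|_{\mathrm{F}} \leq \left\|\left(\bB_t\bA_t - \bB^{\star}\bA^{\star}\right)\bP_{\bA_t}^{\text{null}}\right\|_{\mathrm{F}} \leq \left\|\bB_t\bA_t - \bB^{\star}\bA^{\star}\right\|_{\mathrm{F}}.
\end{equation*}
The last step uses that an orthogonal projection has spectral norm at most one, together with $\|\bm{M}\bP\|_{\mathrm{F}}\leq\|\bm{M}\|_{\mathrm{F}}\|\bP\|$.

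The second bound is symmetric. I would use $\bP_{\bB_t}^{\text{null}}\bB_t=\bm{0}$ from \eqref{eq22} to write
\begin{equation*}
\bP_{\bB_t}^{\text{null}}\left(\bB_t\bA_t - \bB^{\star}\bA^{\star}\right) = -\bP_{\bB_t}^{\text{null}}\bB^{\star}\bA^{\star}.
\end{equation*}
I then need the analogous lower bound with right multiplication: $\|\bm{M}\bA^{\star}\|_{\mathrm{F}}\geq\sigma_{\min}(\bA^{\star})\|\bm{M}\|_{\mathrm{F}}$, valid because $\bA^{\star}\in\mathbb{R}^{r\times n}$ has full row rank. This follows by transposing and applying the column-wise argument to $\bA^{\star\top}$. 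Combined with the same projection contraction, it yields the claim.

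The only delicate point is making the nondegeneracy $\sigma_{\min}(\bA^{\star}),\sigma_{\min}(\bB^{\star})>0$ explicit. Here $\sigma_{\min}$ should be read as the $r$-th singular value. Positivity is implicit in the statement, since the bounds divide by these quantities, and it is consistent with \eqref{eq27}. Note also that the result holds for any fixed factorization $(\bA^{\star},\bB^{\star})$ of rank $r$, so no balancing of the ground truth is needed.
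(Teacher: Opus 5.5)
Your proof is correct and is essentially the paper's argument. The paper adds $\bB_t\bQ$, which the projection kills, and then right-multiplies by the pseudo-inverse $\bA^{\star\top}(\bA^{\star}\bA^{\star\top})^{-1}$ to express $\bP_{\bB_t}^{\text{null}}\bB^{\star}$ through the residual; that is exactly the mechanism behind your bound $\|\bm{M}\bA^{\star}\|_{\mathrm{F}}\geq\sigma_{\min}(\bA^{\star})\|\bm{M}\|_{\mathrm{F}}$, and your explicit requirement that $\sigma_{\min}(\bA^{\star}),\sigma_{\min}(\bB^{\star})>0$ makes precise an assumption the paper uses only implicitly.
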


\begin{proof}
    Observe that 
    \begin{equation*}
        \bP_{\bB_t}^{\text{null}} \bB^{\star} = \bP_{\bB_t}^{\text{null}}\left(\bB^{\star} + \bB_t \bQ\right),
    \end{equation*}
    for any matrix $\bQ$. Therefore, 
    \begin{equation*}
       \left\| \bP_{\bB_t}^{\text{null}} \bB^{\star} \right\|_{\mathrm{F}} \leq \min_{\bQ} \left\|\bP_{\bB_t}^{\text{null}}\left(\bB^{\star} + \bB_t \bQ\right) \right\|_{\mathrm{F}}. 
    \end{equation*}
    Let $\bE:=\bB^{\star}\bA^{\star}-\bB_t\bA_t$. Multiplying both two sides on the right by $\bA^{\star \top}$ leads to
    \begin{equation*}
        \bB^{\star}\bA^{\star}\bA^{\star \top} = \bB_t\bA_t\bA^{\star \top} + \bE\bA^{\star \top},
    \end{equation*}
    and hence
    \begin{equation*}
        \bB^{\star} = \bB_t\bA_t\bA^{\star \top}\left(\bA^{\star}\bA^{\star \top}\right)^{-1} + \bE\bA^{\star \top}\left(\bA^{\star}\bA^{\star \top}\right)^{-1}.
    \end{equation*}
    Let $\bQ = -\bA_t\bA^{\star \top}\left(\bA^{\star}\bA^{\star \top}\right)^{-1}$, we obtain 
    \begin{equation*}
        \bB^{\star} + \bB_t \bQ = \bE\bA^{\star \top}\left(\bA^{\star}\bA^{\star \top}\right)^{-1}.
    \end{equation*}
    Therefore,
    \begin{equation*}
       \begin{split}
           \left\| \bP_{\bB_t}^{\text{null}} \bB^{\star} \right\|_{\mathrm{F}} & \leq \min_{\bQ} \left\|\bP_{\bB_t}^{\text{null}}\left(\bB^{\star} + \bB_t \bQ\right) \right\|_{\mathrm{F}} \leq  \left\|\bE\right\|_{\mathrm{F}} \cdot \left\|\bA^{\star \top}\left(\bA^{\star}\bA^{\star \top}\right)^{-1}\right\|\\
           & \leq \frac{1}{\sigma_{\min}(\bA^{\star})} \left\| \bB_t\bA_t - \bB^{\star}\bA^{\star} \right\|_{\mathrm{F}}.
       \end{split}
    \end{equation*}
    A similar derivation holds that
    \begin{equation*}
       \left\|  \bA^{\star} \bP_{\bA_t}^{\text{null}}\right\|_{\mathrm{F}} \leq  \frac{1}{\sigma_{\min}(\bB^{\star})} \left\| \bB_t\bA_t - \bB^{\star}\bA^{\star} \right\|_{\mathrm{F}}.
    \end{equation*}

\end{proof}

The rank-$2r$ RIP of the sensing matrix $\bS$, together with \eqref{eq26}, implies that \Cref{assumption1} holds on the set $\mW := \{\bW_{\text{pt}}+\bB\bA: \bA \in \mathbb{R}^{r \times n}, \bB \in \mathbb{R}^{m \times r}\}$. Indeed, for any $\bW = \bW_{\text{pt}}+\bB\bA \in \mW$ and $\Tilde{\bW} = \bW_{\text{pt}}+\Tilde{\bB}\Tilde{\bA} \in \mW$, and $\alpha \in [0,1]$, we have $\alpha \bW + (1-\alpha)\Tilde{\bW} = \bW_{\text{pt}}+ \Delta\bW$, with rank of $\Delta\bW$ less than $2r$.
Therefore, the rank-$2r$ RIP controls the geometry of the linear map $\bW \mapsto \bW\,\bS$ along the entire line segment between $\bW$ and $\tilde{\bW}$, which establishes \Cref{assumption1} on $\mathcal{W}$.

We first note that 
    \begin{equation*}
    \bP_{\bB_t}^{\text{null}}\nabla_{\bW} \mL_{t}\bP_{\bA_t}^{\text{null}} = \bP_{\bB_t}^{\text{null}}\left(\bB_t\bA_t - \bB^{\star}\bA^{\star}\right) \bS \bS^{\top}\bP_{\bA_t}^{\text{null}} = - \bP_{\bB_t}^{\text{null}} \bB^{\star}\bA^{\star} \bS \bS^{\top}\bP_{\bA_t}^{\text{null}}.
    \end{equation*}
    Next, we decompose $\bS\bS^\top = (\bS\bS^\top-\bI)+\bI$ and apply \cite{candes2011tight} and \Cref{lemma3}:
    \begin{equation*}
        \begin{split}
            & \left\| \bP_{\bB_t}^{\text{null}}\nabla_{\bW} \mL_{t}\bP_{\bA_t}^{\text{null}}  \right\|_{\mathrm{F}}\\
            & \leq  \left\| \bP_{\bB_t}^{\text{null}} \bB^{\star}\bA^{\star} \left(\bS \bS^{\top} - \bI\right)\bP_{\bA_t}^{\text{null}} \right\|_{\mathrm{F}} + \left\| \bP_{\bB_t}^{\text{null}} \bB^{\star}\bA^{\star} \bP_{\bA_t}^{\text{null}} \right\|_{\mathrm{F}} \\
            & \leq \left\| \bP_{\bB_t}^{\text{null}} \bB^{\star}\bA^{\star} \left(\bS \bS^{\top} - \bI\right) \right\|_{\mathrm{F}} + \frac{1}{\sigma_{\min}(\bA^{\star}) \sigma_{\min}(\bB^{\star})} \left\| \bB_t\bA_t - \bB^{\star}\bA^{\star}\right\|_{\mathrm{F}}^{2}\\
            & \leq \delta \left\| \bP_{\bB_t}^{\text{null}} \bB^{\star}\bA^{\star} \right\|_{\mathrm{F}} + \frac{1}{\sigma_{\min}(\bA^{\star}) \sigma_{\min}(\bB^{\star})}  \left\| \bB_t\bA_t - \bB^{\star}\bA^{\star}\right\|_{\mathrm{F}}^{2}\\
            & \leq \delta \frac{\sigma_{\max}(\bA^{\star})}{\sigma_{\min}(\bA^{\star})}  \left\| \bB_t\bA_t - \bB^{\star}\bA^{\star}\right\|_{\mathrm{F}} + \frac{1}{\sigma_{\min}(\bA^{\star}) \sigma_{\min}(\bB^{\star})}  \left\| \bB_t\bA_t - \bB^{\star}\bA^{\star}\right\|_{\mathrm{F}}^{2}.
        \end{split}
    \end{equation*}

We then prove by induction. Suppose that
\begin{equation*}
    \frac{\delta \frac{\sigma_{\max}(\bA^{\star})}{\sigma_{\min}(\bA^{\star})}  +  \frac{1}{\sqrt{1-\delta} \cdot \sigma_{\min}(\bA^{\star}) \sigma_{\min}(\bB^{\star})} \left\| \left(\bB_t\bA_t - \bB^{\star}\bA^{\star}\right)\bS\right\|_{\mathrm{F}}}{1-\delta} \leq \epsilon.
\end{equation*}
Then,
    \begin{equation*}
        \begin{split}
            & \left\| \bP_{\bB_t}^{\text{null}}\nabla_{\bW} \mL_{t}\bP_{\bA_t}^{\text{null}}  \right\|_{\mathrm{F}}\\
            & \leq \left(\delta \frac{\sigma_{\max}(\bA^{\star})}{\sigma_{\min}(\bA^{\star})}  + \frac{1}{\sigma_{\min}(\bA^{\star}) \sigma_{\min}(\bB^{\star})}  \left\| \bB_t\bA_t - \bB^{\star}\bA^{\star}\right\|_{\mathrm{F}} \right) \left\| \bB_t\bA_t - \bB^{\star}\bA^{\star}\right\|_{\mathrm{F}}\\
            & \leq \left(\delta \frac{\sigma_{\max}(\bA^{\star})}{\sigma_{\min}(\bA^{\star})}  +  \frac{1}{\sqrt{1-\delta} \cdot \sigma_{\min}(\bA^{\star}) \sigma_{\min}(\bB^{\star})} \left\| \left(\bB_t\bA_t - \bB^{\star}\bA^{\star}\right)\bS\right\|_{\mathrm{F}}\right) \left\| \bB_t\bA_t - \bB^{\star}\bA^{\star}\right\|_{\mathrm{F}}\\
            & \leq (1-\delta) \varepsilon \left\| \bB_t\bA_t - \bB^{\star}\bA^{\star}\right\|_{\mathrm{F}}\\
            & \leq \varepsilon \left\| \nabla_{\bW} \mL_{t}\right\|_{F},
        \end{split}
    \end{equation*}
    i.e., the condition in \eqref{eq19} holds.

    Moreover, \Cref{theorem2} ensures that $\{\mathcal{L}(\bW_t)\}_{t\ge 0}$ is non-increasing. Since
$\mathcal{L}(\bW_t)=\tfrac12\|(\bB_t\bA_t-\bB^\star\bA^\star)\bS\|_{\mathrm{F}}^2$ in the matrix sensing setting,
it follows that $\|(\bB_t\bA_t-\bB^\star\bA^\star)\bS\|_{\mathrm{F}}$ is also non-increasing, and thus \eqref{eq27} propagates from $t=0$ to all $t\in\mathbb{Z}_+$.
Therefore, Theorem~\ref{theorem2} applies and yields linear convergence for ODELoRA with Euler, RK2, and RK4.


\section{Proof for Stable Feature Learning (\Cref{theorem4})}
\label{appendix2}

Recall that under the considered setting, the step size satisfies $h=\Theta(1)$, $\left\|\bs\right\|=\Theta(1)$, and $\left\|\bW_{\text{pt}}\bs - \by \right\| = \Theta(1)$.
We prove by induction.
Assume that at iteration $t$, the following scaling relations hold:
\begin{equation}
\label{eq29}
    \begin{split}
        & \left\|\bB_t(:,j)\right\| = 0,\quad\left\|\bA_t(j,:)\right\| =0,\quad \left\|\bA_t \bs\right\| = \Theta(1),\quad \left\|\bW_{t}\bs - \by \right\| = \Theta(1),\\
        & \left\|(\bA_t\bA_t^{\top})^{-1}\right\| = \Theta(1),\quad \left\|\left(\bB_t\bB_t^{\top}\right)^{-1}\right\|=\Theta(1).
    \end{split}
\end{equation}

We first analyze the term $\varphi_{t}^{(1)}$. By definition,
\begin{equation*}
    \begin{split}
        \varphi_{t}^{(1)} & = \frac{h}{6} F_{\bB}\left(\bA_t,\bB_t\right)\bA_t \bs \\
     & = \frac{h}{6} \left(- \bP_{\bB_t}^{\text{null}} \nabla_{\bB}\mL_t\left(\bA_t \bA_t^{\top}\right)^{-1} - \bB_t\bX_t \right) \bA_t\bs.
    \end{split}
\end{equation*}
We note that 
\begin{equation*}
\nabla_{\bB}\mL_t\left(\bA_t \bA_t^{\top}\right)^{-1} \bA_t\bs  = \left(\bW_{t}\bs - \by\right) \bs^{\top} \bA_t^{\top} \left(\bA_t \bA_t^{\top}\right)^{-1} \bA_t\bs.
\end{equation*}
Therefore, we have
\begin{equation*}
    \left\| \bP_{\bB_t}^{\text{null}}\nabla_{\bB}\mL_t\left(\bA_t \bA_t^{\top}\right)^{-1} \bA_t\bs\right\| = \Theta(1).
\end{equation*}
For the Sylvester equation
$\bH_t\bX_t + \bX_t \bH_t = \left(\bB_t^{\top}\bB_t\right)^{-1}\nabla_{\bA}\mL_t \bA_t^{\top} + \bA_t\nabla_{\bA}\mL_t^{\top}\left(\bB_t^{\top}\bB_t\right)^{-1}$,
we observe that 
\begin{equation*}
    \left(\bB_t^{\top}\bB_t\right)^{-1}\nabla_{\bA}\mL_t \bA_t^{\top} = \left(\bB_t^{\top}\bB_t\right)^{-1} \bB_t^{\top} \left(\bW_{t}\bs - \by\right) \bs^{\top} \bA_t^{\top},
\end{equation*}
which implies that $\left\| \left(\bB_t^{\top}\bB_t\right)^{-1}\nabla_{\bA}\mL_t \bA_t^{\top}\right\| = \Theta(1)$, and thus $\left\| \bX_t\right\| = \Theta(1)$ and $\left\| \bB_t\bX_t\bA_t\bs\right\| = \Theta(1)$. These jointly show that $\left\| \varphi_{t}^{(1)}\right\| = \Theta(1)$, since $h = \Theta(1)$.

For $\varphi_{t}^{(2)}$, we have
\begin{equation*}
    \begin{split}
        \varphi_{t}^{(2)} & = -\frac{h}{6} \bB_t\left(\bB_t^{\top}\bB_t\right)^{-1} \nabla_{\bA}\mL_t\bs + \frac{h}{6} \bB_t\bX_t \bA_t\bs \\
        & = -\frac{h}{6} \bB_t\left(\bB_t^{\top}\bB_t\right)^{-1} \bB_t^{\top}\left(\bW_{t}\bs - \by\right) \bs^{\top}  \bs + \frac{h}{6} \bB_t\bX_t \bA_t\bs.
    \end{split}
\end{equation*}
Using the induction hypotheses, we obtain that 
\begin{equation*}
    \left\| \bB_t\left(\bB_t^{\top}\bB_t\right)^{-1} \bB_t^{\top}\left(\bW_{t}\bs - \by\right) \bs^{\top}  \bs\right\| = \Theta(1).
\end{equation*} 
Therefore, $\left\|\varphi_{t}^{(2)}\right\|=\Theta(1)$.

Next, we examine the first intermediate RK4 stage. By the update scheme,
\begin{equation*}
    \begin{split}
        \bA_{t}^{(1)} & = \bA_{t} + \frac{h}{2}\left(-\left(\bB_t^{\top}\bB_t\right)^{-1}  \nabla_{\bA}\mL_t + \bX_t \bA_t,\right)\\
        & =  \bA_{t} + \frac{h}{2}\left(-\left(\bB_t^{\top}\bB_t\right)^{-1} \bB_t^{\top} \left(\bW_{t}\bs - \by\right) \bs^{\top}  + \bX_t \bA_t,\right),\\
        \bB_{t}^{(1)} & = \bB_{t} + \frac{h}{2}\left(- \bP_{\bB_t}^{\text{null}} \frac{\nabla_{\bB}\mL_t }{\sqrt{\bV_t}}\left(\bA_t \bA_t^{\top}\right)^{-1} - \bB_t\bX_t\right)\\
        & = \bB_{t} + \frac{h}{2}\left(- \bP_{\bB_t}^{\text{null}} \left(\bW_{t}\bs - \by\right) \bs^{\top}\bA_t^{\top}\left(\bA_t \bA_t^{\top}\right)^{-1} - \bB_t\bX_t\right)
    \end{split}
\end{equation*}
Since $\left\| \bX_t\right\| = \Theta(1)$ and $j$-th row of $\bA_t$ and column of $\bB_t$ satisfy $\left\|\bB_t(:,j)\right\| = 0, \left\|\bA_t(j,:)\right\| =0$, we have that $j$-th row of $\bX_t\bA_t$ and column of $\bB_t\bX_t$ are of $\Theta(1)$ in Euclidean norm. 
We also note that $\left\| \left(\bB_t^{\top}\bB_t\right)^{-1} \bB_t^{\top} \left(\bW_{t}\bs - \by\right) \right\| = \Theta(1)$ and $\left\|\bs^{\top}\bA_t^{\top}\left(\bA_t \bA_t^{\top}\right)^{-1} \right\| = \Theta(1)$.
Therefore, $j$-th row and column of $\bA_{t}^{(1)} - \bA_{t}$ and $\bB_{t}^{(1)} - \bB_{t}$ are of $\Theta(1)$ in Euclidean norm. 
Hence, we have that $\left\|\bA_{t}^{(1)}(j,:)\right\| = \Theta(1)$ and $\left\|\bB_{t}^{(1)}(:,j)\right\| = \Theta(1)$.
This further implies that $\left\|(\bA_t^{(1)}\bA_t^{(1)\top})^{-1}\right\| = \Theta(1),  \left\|\left(\bB_t^{(1)}\bB_t^{(1)\top}\right)^{-1}\right\|=\Theta(1)$.
We also find that 
\begin{equation*}
    \left\|\left(\bB_t^{\top}\bB_t\right)^{-1} \bB_t^{\top} \left(\bW_{t}\bs - \by\right) \bs^{\top} \bs\right\| = \Theta(1),\quad \left\| \bX_t\bA_t\bs\right\| = \Theta(1).
\end{equation*}
Therefore, $\left\|\bA_t^{(1)} \bs\right\| = \Theta(1)$. 
Moreover,
\begin{equation*}
    \bW_t^{(1)}\bs - \by = \bW_0  \bs - \by + \bB_t^{(1)} \bA_t^{(1)}\bs,
\end{equation*}
where $\left\| \bB_t^{(1)} \bA_t^{(1)}\bs \right\| = \Theta(1)$. Hence, we have $\left\|\bW_t^{(1)}\bs - \by  \right\| = \Theta(1)$. 
We conclude that all scaling properties in \eqref{eq29} assumed at $\bA_t$ and $\bB_t$ also hold for $\bA_t^{(1)}$ and $\bB_t^{(1)}$.

For $\varphi_{t}^{(3)}$, we observe that 
\begin{equation*}
    \begin{split}
        \varphi_{t}^{(3)} & = \frac{h}{3} F_{\bB}\left(\bA_t^{(1)},\bB_t^{(1)}\right)\bA_t \bs\\
        & = \frac{h}{3} \left(- \bP_{\bB_t^{(1)}}^{\text{null}} \nabla_{\bB}\mL_t^{(1)} \left(\bA_t^{(1)} \bA_t^{(1)\top}\right)^{-1} - \bB_t^{(1)}\bX_t^{(1)} \right) \bA_t^{(1)}\bs.
    \end{split}
\end{equation*}
We conduct similar analysis as $\varphi_{t}^{(1)}$ for $\varphi_{t}^{(3)}$ and conclude that $\left\|\varphi_{t}^{(3)}\right\| = \Theta(1)$. 
For $\varphi_t^{(4)}$, the expression is completely analogous to that of $\varphi_t^{(2)}$ with $(\bA_t,\bB_t,\bX_t)$ replaced by $(\bA_t^{(1)},\bB_t^{(1)},\bX_t^{(1)})$. Therefore, by the same argument, we obtain $\left\|\varphi_{t}^{(4)}\right\| = \Theta(1)$.

We can similarly show that the properties \eqref{eq29} for $(\bA_t,\bB_t)$ also holds for $(\bA_t^{(2)},\bB_t^{(2)})$ and $(\bA_t^{(3)},\bB_t^{(3)})$.
Since the remaining update terms $\{\varphi_t^{(i)}\}_{i=5}^8$ have the same algebraic structure and involve quantities with identical scaling behavior, the same reasoning applies. Consequently, we conclude that $\left\|\varphi_t^{(i)}\right\|=\Theta(1)$, for $i \in [8]$. Here, we omit all the redundant and repeated derivations that follow $\varphi_t^{(1)}$ and $\varphi_t^{(2)}$.

It remains to verify that conditions in \eqref{eq29} are satisfied at $t=1$. 
At initialization, the entries of $\bA_0$ are sampled with $\left\|\bA_0(j,:)\right\|=\Theta(1)$ and $\left\|\bA_0(j,:)\bs\right\| = \Theta(1)$, we have $\left\|\bA_0\bs\right\| = \Theta(1)$ and $\left\|\bA_0\bA_0^{\top}\right\|=\Theta(1)$,
under the condition $\|\bs\|=\Theta(1)$.
The matrix $\bB_0$ is initialized as a zero matrix, thus we add an additional regularizer for the singular matrix $\bB_0^{\top}\bB_0$, when calculating the inverse, i.e., $\left(\bB_0^{\top}\bB_0 + \epsilon \bI\right)^{-1}$ with $\epsilon >0$ irrelevant to dimension $n$. The first intermediate stage $\bA_0^{(1)}$ admits 
\begin{equation*}
    \begin{split}
        \bA_{0}^{(1)} & = \bA_{0} + \frac{h}{2} \left(-\left(\bB_0^{\top}\bB_0 + \epsilon\bI\right)^{-1} \nabla_{\bA}\mL_0+ \bX_0 \bA_0\right)\\
        & = \bA_{0} + \frac{h}{2} \left(-\left(\bB_0^{\top}\bB_0 + \epsilon\bI\right)^{-1} \bB_0 \nabla_{\bW}\mL_0+ \bX_0 \bA_0\right),
    \end{split}
\end{equation*}
Since $\bB_0 = \bm{0}$, we have $\nabla_{\bA}\mL_0 = 0$, and the Sylvester equation yields $\bX_0 = \bm{0}$. Therefore, $\bA_{0}^{(1)} = \bA_{0}$. For $\bB_0^{(1)}$, we have
\begin{equation*}
    \begin{split}
        \bB_{0}^{(1)} & = \bB_{0} + \frac{h}{2} \left(- \bP_{\bB_0}^{\text{null}} \nabla_{\bW}\mL_0\bA_0^{\top}\left(\bA_0 \bA_0^{\top}\right)^{-1} - \bB_0\bX_0\right)\\
        & = \bB_{0} + \frac{h}{2} \left(- \bP_{\bB_0}^{\text{null}} \left(\bW_0\bs - \by\right)\bs^{\top}\bA_0^{\top}\left(\bA_0 \bA_0^{\top}\right)^{-1} - \bB_0\bX_0\right).
    \end{split}
\end{equation*}
Note that $\left\| \bW_0\bs - \by\right\| = \Theta(1)$, $\left\| \bA_0 \bs\right\| = \Theta(1)$, and $\left\|\left(\bA_0 \bA_0^{\top}\right)^{-1} \right\| = \Theta(1)$, we have 
\begin{equation*}
    \left\|\bP_{\bB_0}^{\text{null}} \left(\bW_0\bs - \by\right)\bs^{\top}\bA_0^{\top}\left(\bA_0 \bA_0^{\top}\right)^{-1}\right\|=\Theta(1).
\end{equation*}
Here, we obtain that $\left\|\bB_{0}^{(1)}(:,j) \right\|=\Theta(1)$ and thus $\left\|\bB_{0}^{(1)\top}\bB_{0}^{(1)} \right\|=\Theta(1)$, since $\bP_{\bB_0}^{\text{null}}=\bI$. 
We has already shown that  $(\bA_0^{(1)}, \bB_0^{(1)})$ satisfies the desired scaling property in \eqref{eq29}. 
Applying the remaining RK4 updates and repeating the same scaling analysis as in the general induction step, we conclude that conditions in \eqref{eq29} are satisfied at $t=1$. 
Now, both $(\bA,\bB)$ are in the same scale in the sense of $\bA\bA^{\top}$ and $\bB^{\top}\bB$. If project them to the manifold $\mathcal{M}$, all these properties still preserve. 
Therefore, the base case holds. Combined with the induction argument established earlier, this proves that ODELoRA-RK4 achieves stable feature learning for all $t \in \mathbb{N}_{+}$.

\end{document}